\def\eqref#1{equation~\ref{#1}}
\def\1{\bm{1}}
\DeclareMathAlphabet{\mathsfit}{\encodingdefault}{\sfdefault}{m}{sl}
\SetMathAlphabet{\mathsfit}{bold}{\encodingdefault}{\sfdefault}{bx}{n}
\newcommand{\E}{\mathbb{E}}
\newcommand{\R}{\mathbb{R}}
\newtheorem{thm}{Theorem}
\newtheorem{lem}{Lemma}
\newtheorem{ass}{Assumption}
\def \D {\mathcal{D}}
\def \A {\mathcal{A}}
\def \P {\mathcal{P}}
\def \B {\mathcal{B}}
\def \I {\mathbb{I}}
\def \X {\mathcal{X}}
\def \F {\mathcal{F}}
\def \Z {\mathcal{Z}}
\def \Y {\mathcal{Y}}
\def\x{{\bf{x}}}
\def\y{{\bf{y}}}
\def\z{{\bf{z}}}
\def\w{{\bf{w}}}
\def\v{{\bf{v}}}
\def\d{{\bf{d}}}
\def \u {{\bf{u}}}
\def \E {\mathbb{E}}
\def \R {\mathbb{R}}
\def \V {\mathbb{V}}
\def \p {{\bf{p}}}
\title{Fairness via Adversarial Attribute Neighbourhood Robust Learning}
\author{\name{Qi  Qi$^{\dagger}$}\email{qi-qi@uiowa.edu}\\
\name{Shervin Ardeshir$^{\ddagger}$}\email{shervina@netflix.com}\\
\name{Yi Xu$^\nmid$}\email{yxu@dlut.edu.cn}\\
\name{Tianbao Yang$^\nshortmid$}\email{tianbao-yang@tamu.edu}\\
   \addr$^\dagger$ Department of Computer Science, The University of Iowa, Iowa City, IA 52242, USA \\
    \addr$^\ddagger$ Netflix,100 Winchester Circle, Los Gatos, CA 95032, USA\\
     \addr$^\nmid$ School of Artificial Intelligence, Dalian University of Technology, Dalian, Liaoning 116024, China\\
 \addr$^\nshortmid$ Department of Computer Science \& Engineering,
Texas A\&M University, College Station, TX 77843, USA \\
}
\begin{document}
\maketitle

\begin{abstract}
Improving fairness between privileged and less-privileged sensitive attribute groups (e.g, {race, gender}) has attracted lots of attention. 
To enhance the model performs uniformly well in different sensitive attributes, we propose a principled \underline{R}obust \underline{A}dversarial \underline{A}ttribute \underline{N}eighbourhood (RAAN) loss to debias the classification head and promote a fairer representation distribution across different sensitive attribute groups. The key idea of RAAN is to mitigate the differences of biased representations between different sensitive attribute groups by assigning each sample an adversarial robust weight, which is defined on the representations of adversarial attribute neighbors, i.e, the samples from different protected groups. To provide efficient optimization algorithms, we cast the RAAN into a sum of coupled compositional functions and propose a stochastic adaptive (Adam-style) and non-adaptive (SGD-style) algorithm framework SCRAAN with provable theoretical guarantee. 
Extensive empirical studies on fairness-related benchmark datasets verify the effectiveness of the proposed method.

\end{abstract}

\section{Introduction}

With the excellent performance, machine learning methods have penetrated into many fields and brought impact into our daily lifes, such as the recommendation~\citep{lin2022quantifying, zhang2021measuring}, sentiment analysis~\citep{kiritchenko2018examining, adragna2020fairness} and facial detection systems~\citep{buolamwini2018gender}. Due to the existing bias and confounding factors in the training data~\citep{fabbrizzi2022survey, torralba2011unbiased}, model predictions are often correlated with sensitive attributes, e.g, {race, gender}, which leads to undesirable outcomes. Hence, fairness concern has become increasingly prominent. For example, the job recommendation system recommends lower wage jobs more likely to women than men~\citep{zhang2021measuring}.~\cite{buolamwini2018gender} proposed an intersectional approach that quantitatively show that three commercial gender classifiers, proposed by Microsoft, IBM and Face++, have higher error rate for the {darker-skinned} populations.

To alleviate the effect of spurious correlations\footnote{misleading heuristics that work for most training examples but do not always hold.} between the sensitive attribute groups and prediction, many bias mitigation methods have been proposed to learn a debiased representation distribution at encoder level by taking the advantage of the adversarial learning~\citep{wang2019repairing, wadsworth2018achieving, edwards2015censoring, elazar2018adversarial}, causal inference~\citep{singh2020don, kim2019learning} and invariant risk minimization~\citep{ adragna2020fairness, arjovsky2019invariant}. Recently, in order to further improve the performance and reduce computational costs for large-scale data training,  learning a classification head using the representation of pretrained models have been widely used for different tasks. Taking image classification for example, the downstream tasks are trained by finetuning the classification head of ImageNet pretrained ResNet~\citep{he2016deep} model~\citep{qi2020attentional,kang2019decoupling}. However, the pretrained model may introduce the undesiarable bias for the downstreaming tasks. 
Debiasing the encoder of pretrained models to have fairer representations by retraining is time-consuming and compuatational expensive. Hence, debiasing the classification head on biased representations is also of great importance. 

In this paper, we raise two research questions: {\it Can we improve the fairness of the classification head on a biased representation space? Can we further reduce the bias in the representation space?} We give affirmative answers by proposing a \underline{R}obust \underline{A}dversarial \underline{A}ttribute \underline{N}eighborhood (RAAN) loss. Our work is inspired by the RNF method~\citep{du2021fairness}, which averages the representation of sample pairs from different protected groups to alleviate the undesirable correlation between sensitive information and specific class labels. But unlike RNF, RAAN obtains fairness-promoting adversarial robust weights by exploring the \underline{A}dversarial \underline{A}ttribute \underline{N}eighborhood (AAN) representation structure for each sample to mitigate the differences of biased sensitive attribute representations. To be more specific, the adversarial robust weight for each sample is the aggregation of the pairwise robust weights defined on the representation similarity between the sample and its AAN. Hence, the greater the representation similarity, the more uniform the distribution of protected groups in the representation space. Therefore, by promoting higher pairwise weights for larger similarity pairs, RAAN is able to mitigate the discrimination of the biased senstive attribute representations and promote a fairer classification head. When the representation is fixed, RAAN is also applicable to debiasing the classification head only. 

We use a toy example of binary classification to express the advantages of RAAN over standard cross-entropy (CE) training on the biased sensitive attribute group distributions in Figure~\ref{fig:illustration}. Figure~\ref{fig:illustration} (a) represents a uniform/fair distribution across different sensitive attributes while a biased distribution that the {\it red} samples are more aggregated in the top left area than the {\it blue} samples are depicted in Figure~\ref{fig:illustration} (b), (c).
Then with the vanilla CE training,  Figure~\ref{fig:illustration} (a) ends up with a fair classifier determined by the ground truth task labels ({\it shapes}) while a biased classification head determined by sensitive attributes ({\it colors}) is generated in  Figure~\ref{fig:illustration} (b). Instead, our RAAN method generates a fair classifier in Figure~\ref{fig:illustration} (c), the same as a classifier learned from the Figure~\ref{fig:illustration} (a) generated from a fair distribution.
To this end, the main contributions of our work are  summarized below:
\begin{itemize}
  
    \item We propose a robust loss RAAN to debias the classification head by assigning adversarial robust weights defined on the top of biased representation space.
    When the representation is parameterized by trainable encoders such as convolutional layers in ResNets, RAAN is able to further debiase the representation distribution.
    \item We propose an efficient \underline{S}tochastic \underline{C}ompositional algorithm framework for RAAN (SCRAAN), which includes the SGD-style and Adam-style updates with theoretical guarantee. 
    \item Empirical studies on fairness-related datasets verify the supreme performance of the proposed SCRAAN on two fairness, Equalized Odd difference ($\Delta$EO), Demographic Parity difference ($\Delta$DP) and worst group accuracy.
\end{itemize}

\begin{figure}
\centering
   \includegraphics[width = 0.8\linewidth]{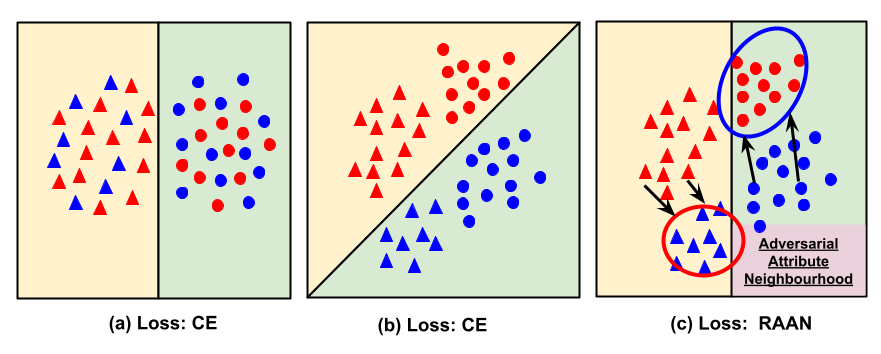}
    \caption{The influence of different protected group distributions on the classification head. The {\it colors} (\{{\it red, blue}\}) represent the sensitive attributes and {\it shapes} (\{{\it triangle, circle}\}) represent the ground truth class labels. Figures (a), (b) are optimized using vanilla CE loss, while the figure (c) is optimized using the proposed RAAN loss defined on the adversarial attributes neighborhood. The yellow and green background denote the predicted classification space. }
    \label{fig:illustration}
\end{figure}


\section{Related Work}
\noindent
\textbf{Bias Mitigation} 
To address the social bias towards certain demographic groups in deep neural network (DNN) models~\citep{lin2022quantifying, zhang2021measuring, kiritchenko2018examining, adragna2020fairness,buolamwini2018gender}, many efficient methods have been proposed to reduce the model discrimination~\citep{wang2019repairing, wadsworth2018achieving, edwards2015censoring, kim2019learning, elazar2018adversarial,singh2020don, zunino2021explainable, rieger2020interpretations, liu2019incorporating, kusner2017counterfactual, kilbertus2017avoiding, cheng2021fairfil, kang2019decoupling}.
Most methods in the above literature mainly focus on improving the fairness of encoder representation. The authors of~\citep{wang2019repairing, wadsworth2018achieving, edwards2015censoring, elazar2018adversarial} took the advantage of the adversarial training to reduce the  group discrimination. \cite{rieger2020interpretations, zunino2021explainable} made use of the model explainability to remove subset features that incurs bias, while \cite{singh2020don, kim2019learning} concentrated on the causal fairness features to get rid of undesirable bias correlation in the training.~\citet{bechavod2017penalizing} penalized unfairness by using surrogate functions of fairness metrics as regularizers.
However, directly working on a biased representation to improves classification-head remains rare. Recently, the RNF method~\citep{du2021fairness} averages the representation of sample pairs from different protected groups in the biased representation space to remove the bias in the classification head. In this paper, we propose a principled RAAN objective capable of debiasing encoder representations and classification heads at the same time.

\noindent
\textbf{Robust Loss}
Several robust loss has been proposed to improve the model robustness for different tasks. The general cross entropy (GCE) loss was proposed to solve the noisy label problem which emphasizes more on the clean samples~\citep{zhang2018generalized}. For the data imbalanced problem, distributionally robust learning (DRO)~\citep{qi2020attentional,li2020tilted, sagawa2019distributionally, qi2020simple} and class balance loss~\citep{cui2019class, cao2019learning} use instance-level and class-level robust weights to pay more attention on underrepresented groups, respectively. Recently, ~\citet{sagawa2019distributionally} shows that group DRO is able to prevent the models learning the specific spurious correlations. The above robust objective are defined on the loss space with the assistance of label information. Exploiting useful information from feature representation space to further benefit the task-specific training remains under-explored.


\noindent
\textbf{Invariant Risk Minimization (IRM)}  IRM~\citep{arjovsky2019invariant} is a novel paradigm to enhance model generalization in domain adaptation by learning the invariant sample feature representations across different "domains" or "environments". By optimizing a practical version of IRM in the toxicity classification usecase study,~\cite{adragna2020fairness} shows the strength of IRM over ERM in improving the fairness of classifiers on the biased subsets of Civil Comments dataset. To elicit an invariant feature representation, IRM is casted into a constrained (bi-level) optimization problem where the classifier $\w_c$ is constrained on a optimal uncertainty set. Instead, the RAAN objective constrains the adversarial robust weights, which are defined in pairwise representation similarity space penalized by KL divergence. When the embedding representation $\z$ is parameterized by trainable encoders $\w_f$, RAAN generates a more uniform representation space across different sensitive groups. 

\noindent
\textbf{Stochastic Optimization}
Recently, several stochastic optimization technique has been leveraged to design efficient stochastic algorithms with provable theoretical convergence for the robust surrogate objectives, such as F-measure~\citep{zhang2018faster}, average precision (AP)~\citep{qi2021stochastic}, and area under curves (AUC)~\citep{liu2019stochastic, liu2018fast, yuan2021large}. In this paper, we cast the fairness promoting RAAN loss as a two-level stochastic coupled compositional function with a general formulation of $\E_{\xi}[f(\E_{\zeta} g(\w;\zeta,\xi))]$, where $\xi,\zeta$ are independent and $\xi$ has a finite support. By exploring the advanced stochastic compositional optimization technique~\citep{wang2017stochastic, NEURIPS2021_533fa796}, a stochastic algorithm SCRANN with both SGD-style and Adam-style updates is proposed to solve the RAAN with provable convergence.

\section{Robust Adversarial Attribute Neighbourhood  (RAAN) Loss}
\label{sec:RAAN}
\subsection{Notations}
We first introduce some notations. 
The collected data is denoted by $\D =\{\d\}_{i=1}^n= \{(\x_i,y_i,a_i)\}_{i=1}^n$, where $\x_i\in \mathcal X$ is the data, $y_i\in\mathcal Y$ is the label, $a_i \in \A$ is the corresponding attribute (e.g., race, gender), and $n$ is the number of samples. We divide the data into different subsets based on labels and attributes. For any label $c\in \mathcal Y$ and attribute $a\in\mathcal A$, we denote $\D^c_a = \{(\x_i,y_i,a_i)|a_i = a \wedge y_i = c \  \}_{i=1}^n$ and 
$\D^c =  \{(\x_i,y_i,a_i)|y_i = c \ \}_{i=1}^n$. Then we have ${\D^c = \cup \{\D_a^c \}_{a=1}^{|A|}}$. Given a  deep neural network, the model weights $\w$ can be decomposed into two parts, the \underline{F}eature presentation parameters $\w_f$ and the \underline{C}lassification head parameters $\w_c$, i.e, $\w = [\w_f, \w_c]$. For example, $\w_f$ and $\w_c$ are mapped into the convolutions layers and fully connected layers in ResNets, respectively. $F_{\w_f}(\cdot)$ represents the feature encoder mapping from $\X \rightarrow \Z$, and $H_{\w_c}(\cdot)$ represents the classification head mapping from $\Z \rightarrow \Y$. Then $\z_i(\w_f) = F_{\w_f}(\x_i) \in \Z$ denotes the embedding representation of the sample $\d_i$.  $H_{\w_c}(\z_i(\w_f))$ represents the output of the classification head.

The key idea of RAAN is to assign a fairness-promoting adversarial robust weight for each sample by exploring the AAN representation structure to reduce the disparity across different sensitive attributes. The AAN of the sample $\d_i = (\x_i, y_i = c, a_i=a)$ is defined as the samples from the same class but with different attributes, i.e, \underline{$\P_i = \P^{c}_{a} = \D^{c}\backslash \D^{c}_{a}$}. For example, considering a binary protected sensitive attribute $\{female, male\}$, the AAN of a sample belonging to the $male$ protected group with class label of $c\in \Y$ is the collection of the $female$ attribute samples with the same class $c$. Then the adversarial robust weight for every sample $\d_i\sim\D$ is represented as $p_i^{\text{AAN}}$, which is an aggregation of the pairwise weights between $\d_i$ and its AAN neighbours in $\P_i$.
Next, we denote the pairwise robust weights between the sample $\d_i$ and $\d_j \in \P_i$ in the representation space as $p_{ij}^{\text{AAN}}$. 
When the context is clear, we abuse the notations by using $\p^{\text{AAN}}_i =[p^{\text{AAN}}_{i1},\cdots, p^{\text{AAN}}_{ij}, \cdots] \in\R^{|\P_i|}$ to represent the pairwise robust weights vector defined in $\P_i$, i.e, the AAN of $\d_i$.

\subsection{RAAN Objective}
\label{sec:raan_objective}
To explore the AAN representation structure and obtain the pairwise robust weights, we define the following robust constrained objective for $ \forall\d_i\sim \D$,
\begin{align}
\label{eqn:l_i_lbl}
  \ell_i^{\text{AAN}} = \sum\limits_{j\in \P_i} p^{\text{AAN}}_{ij}\ell(\w;\x_j, y_j, a_j)
\end{align}
\vspace{-0.2in}
\begin{align}
\label{eqn:nb}
\text{s.t.} \ \ \ \max\limits_{\p^{\text{AAN}}_i\in \Delta^{|\P_i|}} \sum\limits_{j\in\P_i} p^{\text{AAN}}_{ij} \z_i(\w_f)^\top\z_j(\w_f) - \tau \text{KL}\left(\p^{\text{AAN}}_i, \frac{\bf 1}{|\P_i|}\right),\ \bf 1 \in \R^{|\P_i|} 
\end{align}
where $\Delta$ is a simplex that $\sum_{j=1}^{|\P_i|}p_{ij} = 1$. The robust loss (\ref{eqn:l_i_lbl}) is a weighted average combination of the AAN loss. 
The robust constraint~(\ref{eqn:nb}) is defined in the pairwise representation similarity between the sample $i$ and its AAN neighbours penalized by the KL divergence regularizer, which has been extensively studied in \underline{d}istributionally \underline{r}obust learning \underline{o}bjective (DRO) to improve the robustness of the model in the loss space~\citep{qi2020attentional}. Here, we adopt the DRO with KL divergence constraint to the representation space to generate a uniform distribution across different sensitive attributes.

Controlled by the hyperparameter $\tau$, the close form solution of $\p_i^{\text{AAN}}$ in~(\ref{eqn:nb}) guarantees that the larger the pairwise similarity $\z_i^\top(\w_f)\z_j(\w_f)$ is, the higher the $p^{\text{AAN}}_{ij}$ will be. 
When $\tau = 0$,   the close form solution of (\ref{eqn:nb}) is 1 for the pair with the largest similarity and 0 on others.  When $\tau>0$, due to the strong convexity in terms of $\p^{\text{AAN}}_i$, the close form solution of (\ref{eqn:nb}) for each pair weight between $\d_i$ and $ \d_j\in \P_i$ is:
\begin{align}
\label{eqn:pij}
 p^{\text{AAN}}_{ij} =   \frac{\exp(\frac{\z_i^\top(\w_f)\z_j(\w_f)}{\tau})}{\sum\limits_{k\in \P_i}\exp(\frac{\z_i^\top(\w_f)\z_k(\w_f)}{\tau})}.
\end{align}
Hence the larger the $\tau$ is, the more uniform of $\p_i^{\text{AAN}}$ will be. And it is apparent to see that the robust objective generates equal weights for every pair such that $p_{ij}^{\text{AAN}} = \frac{1}{|\P_i|}$ for every $\d_j\in\P_i$  when $\tau$ approaches to the infinity in~(\ref{eqn:pij}). 
When we have a fair representation, the embeddings of different protected groups are uniform distributed in the representation space. The vanilla average loss training is good enough to have a fair classification head, which equals to RAAN with $\tau$ goes to infinity. When we have biased representations, we use a smaller $\tau$ to emphasize on the similar representations that shared invariant feature from two different protected groups to reduce the bias introduced from difference of the two protected group distributions. 

To this end, after having the close form solution for every pairwise robust weights $p_{ij}^{\text{AAN}}$ (\ref{eqn:pij}) and plugging back into $\ell_i^{\text{AAN}}$ (\ref{eqn:l_i_lbl}) for any arbitrary sample $\d_i\sim \D$, the overall RAAN objective is defined as:
\begin{align}
\label{eqn:RAAN}
\text{RAAN}(\w) := \frac{1}{C}\sum_{c=1}^C \frac{1}{A}\sum_{a=1}^{A}\frac{1}{| \D^c_a|}  \sum_{i=1}^{|\D^c_a|} \ell_i^{\text{AAN}} =\frac{1}{AC}\sum\limits_{j=1}^n p_j^{\text{AAN}}\ell(\w;\x_j, y_j, a_j),
\end{align}
where $C=|\mathcal Y|$, $A=|\mathcal A|$, $\ell_i^{\text{AAN}}=\sum\limits_{j\in \P_i} p^{\text{AAN}}_{ij}\ell(\w;\x_j, y_j, a_j)$ is defined in (\ref{eqn:l_i_lbl}), $p^{\text{AAN}}_{ij}$ is defined in (\ref{eqn:pij}), and $p_j^{\text{ANN}}= \frac{1}{|\P_j|}\sum\limits_{i\in\P_j}  \frac{\exp(\frac{\z_i^\top(\w_f)\z_j(\w_f)}{\tau})}{\sum\limits_{k\in \P_i}\exp(\frac{\z_i^\top(\w_f)\z_k(\w_f)}{\tau})}$ is obtained by we aggregating all the pairwise robust weights in $\P_j$. 
Hence, the adversarial robust weights $p_j^{\text{AAN}}$ for each sample $\d_j = (\x_j, c, a)\sim \D$, encodes the intrinsic representation neighbourhood structure between the sample and its AAN neighbors $\d_i\in \P_j$ (the numerator) and normalized by the similarity pairs from the same protected groups $\d_k\in\P_i$, i.e, $\D_a^c$ (the denominator). Due to the limitation of space, we put the second equality derivation of equation~(\ref{eqn:RAAN}) in Appendix.




\subsection{Representation Learning Robust Adversarial Attribute Neighbourhood (RL-RAAN) Loss}

AANs are defined over the encoder representation outputs $\z(\w_f)$. 
By default, the RAAN loss aims to promote a fairer classification head on a fixed bias representation distribution, i.e, $\w_f$ (recall that $\w = [\w_f, \w_c]$) is not trainable. 
By parameterizing the AANs with trainable encoder parameters, i.e, $\w_f$ is trainable, we extend the RAAN to the \underline{R}epresentation \underline{L}earnining RAAN (RL-RAAN), which is able to further debias the representation encoder. Hence, RAAN optimizes the $\w_c$ while RL-RAAN jointly optimizes $[\w_f,\w_c]$. To design efficient stochastic algorithms, RL-RAAN requires more sophisticated stochastic estimators than RAAN, which we will discuss later in Section~\ref{sec:RAAN}.
Depending on whether $\w_f$ is trainable, the red dashed arrow in Figure~\ref{fig:raan_overview} describes the optional gradients backward toward the feature representations during the training.


Here, we demonstrate the effectiveness of RL-RAAN in generating a more uniform representation distribution across different sensitive groups in Figure~\ref{fig:embedding_improve}. To achieve this, we visualize the representation distributions learned from vanilla CE (left plot) and RL-RAAN (right plot) methods using the Kernel-PCA dimensionality reduction method with radial basis function (rbf) kernels. It is clear to see that {\it white} attribute samples are more clustered in the upper left corner of CE representation projection while both the {\it white} and {\it non-white} attributes samples are  uniformly distributed in the representation projection of RL-RAAN.

\begin{figure}
\centering
\begin{minipage}[c]{0.65\textwidth}
\centering
 \includegraphics[width = 0.47\textwidth]{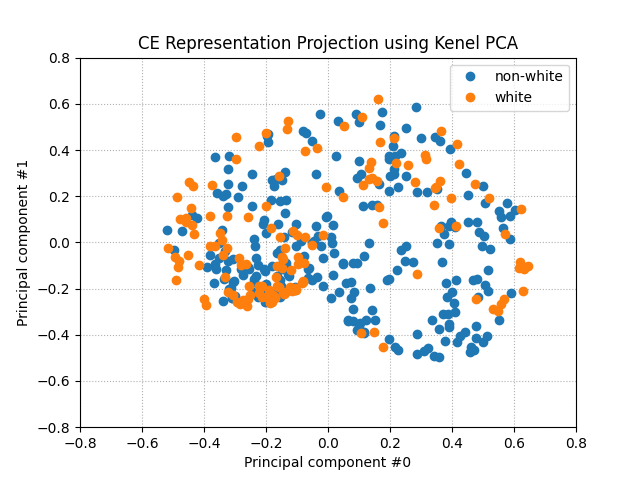}
        \includegraphics[width = 0.47\textwidth]{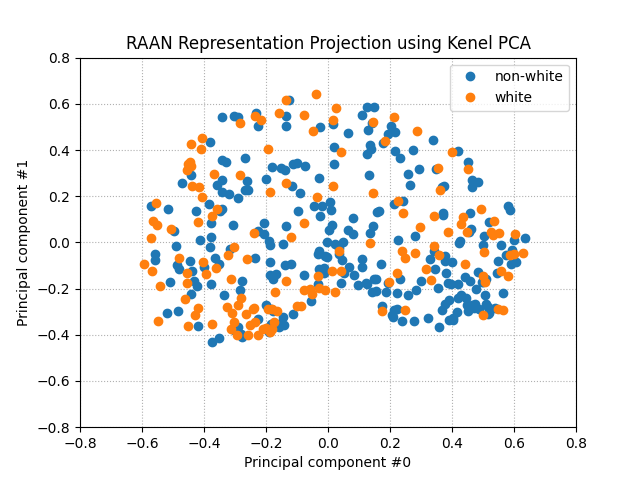}
          \captionof{figure}{Improvement of Representation Fairness}
              \label{fig:embedding_improve}
\end{minipage}
\hfill
\begin{minipage}[c]{0.33\textwidth}
\centering
    \includegraphics[width = 0.9\linewidth]{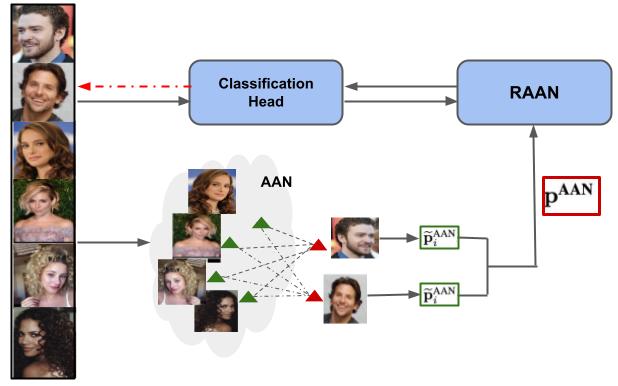}
    \caption{Training overview of (RL)-RAAN}
    \label{fig:raan_overview}
\end{minipage}
\end{figure}

\section{\underline{S}tochastic \underline{C}ompositional Optimization for RAAN (SCRAAN)}
\label{sec:SCRAAN}

In this section, we provide a general \underline{S}tochastic \underline{C}ompositional optimization algorithm framework for RAAN (SCRAAN).
The SCRAAN applies to both RAAN and RL-RAAN objective.
We first show that (RL)-RAAN is a {\bf two-level stochastic coupled compositional function} and then design stochastic algorithms under the framework of stochastic gradient descent updates, SGD and Adam~\citep{kingma2014adam},  with theorectical guarantee in Algorithm~\ref{alg:SCRAAN}.

Let $\I(c)$ denotes the indicator function that equals to 1 when $c$ is true and equals to 0 otherwise.  $g(\w; \x_i,\x_j) = [g_1(\w;\x_i,\x_j), g_2(\w; \x_i,\x_j)]^\top = \frac{n}{AC}[\exp(\frac{\z_i(\w_f)^\top\z_j(\w_f)}{\tau})\ell_j(\w;\x_j,c_j,a_j)\I(\x_j\in\P_i),$  $\exp(\frac{\z_i(\w_f)^\top\z_j(\w_f)}{\tau})\I(\x_j\in\P_i)]^\top:\R^d\rightarrow\R^2$, $g_{\x_i}(\w) =\E[g(\w;\x_i)] = \E_{\x_j\in\P_i}[g(\w;\x_i,\x_j)]$, $f(g) = \frac{g_1}{g_2}, g = [g_1, g_2]$ and $f(s) = \frac{s_1}{s_2}:\R^2 \rightarrow \R$. Then the (RL)-RAAN objectiv~(\ref{eqn:RAAN}) can be written as
\begin{align}
\label{eqn:R}
    \text{R}(\w) = \frac{1}{n}\sum\limits_{\x_i\in \D} f(g_{\x_i}(\w)) = \E_{\x_i\in \D}[f(g_{\x_i}(\w))]
\end{align}
where $g$ denotes the inner objective and $f$ denotes the outer objective. The equivalence between~(\ref{eqn:RAAN}) and~(\ref{eqn:R}) is shown in Appendix~\ref{sec:deriv_RAAN}. In the following, we use $\widetilde{\w}$ to unify the trainable parameter notation of RAAN and RL-RAAN.
Recall that $\w = [\w_f,\w_c]$, let $\widetilde{\w} = \w_c$ when $R(\w)$ represents the RAAN, and $\widetilde{\w} = \w$ when $R(\w)$ represents RL-RAAN. Then according to the chain rule, the gradient of $R(\w)$ is

\begin{minipage}[t]{0.5\textwidth}
\begin{algorithm}[H]
    \centering
    \caption{SCRAAN}\label{alg:SCRAAN}
    \begin{algorithmic}[1]
        \STATE \textbf{Input:}  Initialize $\w^1 = [\w^1_f, \w^1_c]$.
        \WHILE{first stage}
        \STATE  Train the whole model $\w$ with standard CE loss 
        \ENDWHILE
 \WHILE{second stage}
             \FOR {$t = 1,\ldots, T - 1$}
       \STATE Draw a batch samples $\{(\x_i, y_i, a_i)\}_{i=1}^B$
       \STATE $\u = \text{UG}(\B,\u,\w_t,\gamma, u_0)$
       \STATE Compute (biased) Stochastic Gradient Estimator $G(\w_t)$ by Equation~(\ref{eqn:update_g})
       \STATE Update $\w_{t+1}$ with a SGD-style method or by a Adam-style method
       \begin{align*}
           \w_{t+1} = \text{UW}(\w_t, G(\w_t))
       \end{align*}
              \ENDFOR
        \ENDWHILE
        \STATE \textbf{Return:}  $\w_R$,  $R$ is a index sampled from $ 1 \cdots T$.
    \end{algorithmic}
\end{algorithm}
\end{minipage}
\hfill
\begin{minipage}[t]{0.5\textwidth}
\begin{algorithm}[H]
    \caption{UG($\mathcal B, \u,  \w_t, \gamma, u_0$)}\label{alg:2}
    \begin{algorithmic}[1]
       \FOR{each $\x_i\in\mathcal B$}
       \STATE Construct $\widehat{\P}_i = \P_i\cap\B$ and compute $[\widehat{g}_{\x_i}(\w_t)]_1, [\widehat{g}_{\x_i}(\w_t)]_2$ by Equation~(\ref{eqn:sg})
       \STATE Compute $\u^1_{\x_i} = (1-\gamma)\u^1_{\x_i} + \gamma [\widehat{g}_{\x_i}(\w_t)]_1 $\\
        $\u^2_{\x_i} = \max((1-\gamma)\u^2_{\x_i} + \gamma [\widehat{g}_{\x_i}(\w_t)]_2, u_0)$
        \ENDFOR
        \STATE {\bf Return} $\u$
    \end{algorithmic}
\end{algorithm}
\vspace{-0.08in}
\begin{algorithm}[H]
    \caption{UW($\w_t, G(\w_t)$)}\label{alg:3}
    \begin{algorithmic}[1]
       \STATE \text{Option 1}: SGD-style update (paras: $\alpha$)
         $\w_{t+1}= \w_t - \alpha G(\w_t)$
         
       \STATE \text{Option 2}: Adam-style update (paras: $\alpha, \epsilon, \eta_1, \eta_2$)
$h_{t+1} = \eta_1 h_{t} + (1-\eta_1)G(\w_t)$ \\
$v_{t+1} = \eta_2 \hat{v}_t + (1-\eta_2)(G(\w_t))^2$ \\
$\w_{t+1} = \w_t - \alpha \frac{h_{t+1}}{\sqrt{\epsilon + \hat{v}_{t+1}}}$

where $\hat v_t = v_t$ (Adam) or $\hat v_{t}=\max(\hat v_{t-1}, v_t)$ (AMSGrad)
\STATE {\bf Return:} $\w_{t+1}$
    \end{algorithmic}
\end{algorithm}
\end{minipage}

\begin{align*}
\resizebox{0.9\linewidth}{!}{$
         \nabla_{\widetilde{\w}} R(\w)= \frac{1}{n}\sum\limits_{\x_i\in \D}  \nabla_{\widetilde{\w}} g_{\x_i}(\w)^\top\nabla f(g_{\x_i}(\w)) =\frac{1}{n} \sum\limits_{\x_i\in \D} ( [\nabla_{\widetilde{\w}} g_{\x_i}(\w)]_1^\top, [\nabla_{\widetilde{\w}} g_{\x_i}(\w)]_2^\top )\left(\begin{array}{c} \frac{1}{[g_{\x_i}(\w)]_2} \\
   -\frac{[g_{\x_i}(\w)]_1}{[g_{\x_i}(\w)]_2^2}
    \end{array}\right)
    $}
\end{align*}
To provide efficient stochastic optimizations, we approximate the gradients of $\nabla R(\w)$ with the stochastic estimators in Algorithm~\ref{alg:SCRAAN}.  
Let $\B$ denotes a $B$ sample set randomly generated from $\D$. For each sample $\x_i\in \B$, we approximate the $\nabla_{\widetilde{\w}} g_{\x_i}(\w)$ using the stochastic gradient on the current batch, $\nabla_{\widetilde{\w}}\widehat{g}_{\x_i}(\w)$, i.e, $([\nabla_{\widetilde{\w}} \widehat{g}_{\x_i}(\w)]_1^\top, [\nabla_{\widetilde{\w}} \widehat{g}_{\x_i}(\w)]_2^\top )$.
Denotes the stochastic AAN samples in current batch $\B$ for the sample $i$ is denoted as $\widehat{\P}_i = \P_i\cap\B$ and $\exp^\tau_{ij}(\w^t_f) =  \exp(\frac{\z_i(\w^t_f)^\top\z_j(\w^t_f)}{\tau})$, then stochastic estimators for (RL)-RAAN are represented as:
\begin{equation}
\label{eqn:raan_nabla_g}
\resizebox{0.93\textwidth}{!}{$
{[}\nabla_{\widetilde{\w}} \widehat{g}_{\x_i}(\w_t)]_1^\top = \left \{
  \begin{aligned}   &\frac{n}{AC}\frac{1}{|\hat{\P}_i|}\sum\limits_{\x_j\in \hat{\P}_i} \exp^\tau_{ij}(\w^t_f) \nabla_{\w_c} \ell_j(\w_t)^\top  &&  \  \text{RAAN} \\
&\frac{n}{AC} \frac{1}{|\hat{\P}_i|}\sum\limits_{\x_j\in \hat{\P}_i}\exp^\tau_{ij}(\w^t_f)(\nabla_{\w} \ell_j(\w_t)^\top+ (\z_i(\w^t_f) + \z_j(\w^t_f))^\top \ell_j(\w_t)) && \text{RL-RAAN} 
\end{aligned}  
 \right.
 $}
\end{equation}
${[}\nabla_{\widetilde{\w}} \widehat{g}_{\x_i}(\w)]_2^\top $ is a $\textbf{0}$ vector  for RAAN and the dimension of $\textbf{0}$ equals to the dimension of $\w_c$. For RL-RAAN,  equals to
${[}\nabla_{\widetilde{\w}} \widehat{g}_{\x_i}(\w)]_2^\top  = \frac{n}{AC|\hat{\P}_i|}\sum_{\x_j\in \hat{\P}_i} \exp^\tau_{ij}(\w_f)(\z_i(\w_f^t) + \z_j(\w_f^t))$.

 To estimate $g_{\x_i}(\w)$, however, the stochastic objective $\widehat{g}_{\x_i}(\w)$ is not enough to control the approximation error such  that the convergence of Algorithm~\ref{alg:SCRAAN} can be guaranteed. We borrow a technique from the stochastic compositional optimization literature~\citep{wang2017stochastic}
by using a moving average estimator to estimate $g_{\x_i}(\w)$ for all samples. We maintain a matrix $\u =[\u_1, \u_2]$ and each of a column is indexed by a sample $\x_i\sim\D$ corresponding to the the moving average stochastic estimator of $g_{\x_i}(\w)$. 
The Step 3 of Algorithm~\ref{alg:2} describes the updates of $\u$, in which $u_0$ is a small constant to address the numeric issue that does not influence the convergence analysis and the stochastic estimator $\widehat{g}_{\x_i}(\w)$ for sample $i$ is
\begin{equation}
\label{eqn:sg}
   \resizebox{0.93\textwidth}{!}{
 $\begin{aligned}
 \text{[}\widehat{g}_{\x_i}(\w_t)]_1 =
      \frac{n}{AC} \frac{1}{|\widehat{\P}_i|} \sum\limits_{j\in\hat{\P}_i}\exp_{ij}^\tau(\w_f^t)\ell_j(\w_t),\hfill
     [\widehat{g}_{\x_i}(\w_t)]_2  = 
      \frac{n}{AC}\frac{1}{|\widehat{\P}_i|} \sum\limits_{j\in\widehat{\P}_i}\exp_{ij}^\tau(\w_f^t)
\end{aligned}$
}
\end{equation}

To sum up, the overall stochastic estimator $G(\w)$ for $\nabla R(\w)$ in a batch
where the stochastic inner objective gradient estimator for (RL)-RAAN:
\begin{align}
\label{eqn:update_g}
    G(\w)  = \frac{1}{B}\sum\limits_{i=1}^B \nabla_{\widetilde{\w}} \hat{g}_{\x_i}(\w)^\top\left(\begin{array}{c} \frac{1}{\u^2_{\x_i}} \\
   -\frac{\u^1_{\x_i}}{[\u^2_{\x_i}]^2} )
    \end{array}\right) 
\end{align}
Finally, we apply both the SGD-style and Adam-style updates for $\w$ in Algorithm~\ref{alg:3} .
Next we provide the theoretical analysis for SCRAAN.
\begin{thm}\label{thm:main-Adam}
Suppose Assumption ~\ref{ass:1} holds, $\forall\  t\in 1,\cdots, T$, and $T>n$,  let the parameters be 
1) $\alpha = \frac{1}{n^{2/5}T^{3/5}}$,$\gamma = \frac{n^{2/5}}{T^{2/5}} $ for the SGD updates; 2) $\eta_1\leq \sqrt{\eta_2}\leq 1$, $\alpha = \frac{1}{n^{2/5}T^{3/5}}$,$\gamma = \frac{n^{2/5}}{T^{2/5}} $ for the AMSGrad updates. Then after running $T$ iterations, SCRAAN with SGD-style updates or Adam-style update satisfies
$$ \E\left[ \frac{1}{T}\sum\limits_{t=1}^T\|\nabla R(\w_t)\|^2\right] \leq  O\left(\frac{n^{2/5}}{T^{2/5}}
\right),$$
where  $O$ suppresses constant numbers.
\end{thm}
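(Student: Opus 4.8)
The plan is to exploit the reformulation $R(\w)=\E_{\x_i\in\D}[f(g_{\x_i}(\w))]$ in~(\ref{eqn:R}) and to adapt the moving-average analysis for stochastic coupled compositional optimization~\citep{wang2017stochastic,NEURIPS2021_533fa796} to the \emph{block-coordinate} setting in which, at each iteration, only the $B$ columns of $\u$ indexed by the current mini-batch are refreshed. Throughout we invoke Assumption~\ref{ass:1}. Since $\tau>0$ and the losses $\ell_j$ and embeddings $\z_j(\w_f)$ together with their gradients are bounded, the weights $\exp^\tau_{ij}(\w_f)$ lie in a compact interval bounded away from $0$; hence each coordinate map $g_{\x_i}$ is bounded, Lipschitz and $L_g$-smooth, its second coordinate $[g_{\x_i}(\w)]_2$ is bounded below by a constant $c_0>0$, and --- together with the clipping at $u_0$ in Algorithm~\ref{alg:2} --- the outer map $f(s)=s_1/s_2$ has bounded first and second derivatives on the region actually visited by $(\u^1_{\x_i},\u^2_{\x_i})$ and $g_{\x_i}(\w_t)$. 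Consequently $R$ is $L$-smooth, $\widehat{g}_{\x_i}$ in~(\ref{eqn:sg}) is an unbiased, bounded-variance ($\sigma^2$) estimator of $g_{\x_i}$, and $\nabla_{\widetilde{\w}}\widehat{g}_{\x_i}$ in~(\ref{eqn:raan_nabla_g}) is an unbiased, uniformly bounded estimator of $\nabla_{\widetilde{\w}} g_{\x_i}$ --- uniformly for both RAAN ($\widetilde{\w}=\w_c$) and RL-RAAN ($\widetilde{\w}=\w$).

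The first key step is an error recursion for the moving average. Put $\varepsilon_t=\frac1n\sum_{\x_i\in\D}\E\|\u^t_{\x_i}-g_{\x_i}(\w_t)\|^2$. For a refreshed coordinate, $\u^{t+1}_{\x_i}-g_{\x_i}(\w_t)=(1-\gamma)(\u^t_{\x_i}-g_{\x_i}(\w_t))+\gamma(\widehat{g}_{\x_i}(\w_t)-g_{\x_i}(\w_t))$, so in conditional expectation its squared error contracts by $(1-\gamma)^2$ up to an additive $\gamma^2\sigma^2$; every coordinate, refreshed or not, additionally drifts by at most $L_g^2\|\w_{t+1}-\w_t\|^2$. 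A given index is refreshed with probability $\Theta(B/n)$, so averaging over $\x_i$ and balancing the $(1+\eta)/(1+1/\eta)$ split with $\eta=\Theta(\gamma B/n)$ yields
\begin{align*}
\varepsilon_{t+1}\;\le\;\Big(1-\tfrac{c_1 B\gamma}{n}\Big)\varepsilon_t \;+\; c_2\,\gamma^2\sigma^2 \;+\; \frac{c_3\,n}{B\gamma}\,\E\|\w_{t+1}-\w_t\|^2,
\end{align*}
valid because $T>n$ and the chosen $\gamma$ make $c_1 B\gamma/n\le1$. The factor $n/B$ multiplying the drift term is precisely the price of refreshing only $B$ of the $n$ per-sample estimators per step, and it is the origin of the $n$-dependence in the final rate.

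Next, bound the estimator $G(\w_t)$ in~(\ref{eqn:update_g}): since $f$ has bounded derivatives on the visited region, $\|\E_t[G(\w_t)]-\nabla R(\w_t)\|^2\le c_4\,\varepsilon_t$ and $\E_t\|G(\w_t)\|^2\le c_5$. For the SGD-style update, the $L$-smoothness descent lemma followed by Young's inequality on $\langle\nabla R(\w_t),\E_t G(\w_t)\rangle$ gives
\begin{align*}
\E[R(\w_{t+1})]\;\le\;\E[R(\w_t)]-\tfrac{\alpha}{2}\E\|\nabla R(\w_t)\|^2+\tfrac{\alpha}{2}c_4\,\varepsilon_t+\tfrac{L c_5}{2}\alpha^2.
\end{align*}
Form the Lyapunov function $\Phi_t=R(\w_t)+\tfrac{\beta n}{B\gamma}\varepsilon_t$, substitute the $\varepsilon_t$-recursion with $\|\w_{t+1}-\w_t\|^2\le\alpha^2\|G(\w_t)\|^2$, and choose $\beta$ so the $\varepsilon_t$ terms telescope with a strictly negative net coefficient. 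Averaging over $t=1,\dots,T$ then yields
\begin{align*}
\frac1T\sum_{t=1}^T\E\|\nabla R(\w_t)\|^2 \;\le\; \frac{2(\Phi_1-\E\Phi_{T+1})}{\alpha T}\;+\;O(\alpha)\;+\;O\!\Big(\tfrac{n\,\alpha^2}{B\gamma}\Big)\;+\;O(\gamma),
\end{align*}
and plugging in $\alpha=n^{-2/5}T^{-3/5}$, $\gamma=n^{2/5}T^{-2/5}$ makes the two dominant terms $\tfrac1{\alpha T}$ and $\gamma$ equal to $O(n^{2/5}/T^{2/5})$ while the others are of lower order, proving the claim for the SGD variant.

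For the Adam/AMSGrad update the same Lyapunov scheme applies once the vanilla descent step is replaced by the analysis of adaptive methods. Because all stochastic gradients are uniformly bounded, the preconditioner $\sqrt{\epsilon+\hat v_t}$ is sandwiched between two positive constants; using the monotonicity $\hat v_t\ge\hat v_{t-1}$ of AMSGrad and $\eta_1\le\sqrt{\eta_2}\le1$, the extra errors from the momentum buffer $h_t$ and from using a stale preconditioner are controlled by $\sum_t\alpha^2\|G(\w_t)\|^2$ and absorbed into the $O(\alpha)$ term, leaving the same $O(n^{2/5}/T^{2/5})$ bound. The main obstacle is the interaction between the block-coordinate staleness of $\u$ and the non-convex descent: one must tune the Lyapunov weight $\tfrac{\beta n}{B\gamma}$ and the ratio of $\gamma$ to $\alpha$ so that the amplified drift term $\tfrac{n}{B\gamma}\|\w_{t+1}-\w_t\|^2$ is dominated by the contraction, which is exactly what forces the slower $n^{2/5}T^{-2/5}$ rate instead of the $T^{-1/2}$ attainable with a fresh full-batch estimator; the adaptive-method bookkeeping is a secondary, though technically delicate, ingredient.
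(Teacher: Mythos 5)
Your proof is correct and reaches the claimed rate, but it takes a genuinely different route from the paper. The paper controls the error between the moving-average tracker and the true inner function by evaluating it only at the sampled index $i_t$, and bounds $\E\bigl[\sum_{t=1}^T\|g_{i_t}(\w_t)-\u_{i_t}\|^2\bigr]$ by partitioning the time indices into $n$ blocks according to which sample was drawn (Lemma~\ref{lem:cumulative_var}). The per-block recursion (Lemma~\ref{lem:var=B=1}) then pays a drift cost proportional to the square of the staleness $t^i_k - t^i_{k-1}$, and the crucial ingredient is a second-moment bound $\E[(t^i_k - t^i_{k-1})^2]\leq 2n^2$ for the geometric return time (Lemma~\ref{lem:bound-stale}); this is the quantitative source of the $n^2$ factor that ultimately yields the $n^{2/5}$ dependence. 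You instead track the population-averaged error $\varepsilon_t=\tfrac1n\sum_i\E\|\u^t_{\x_i}-g_{\x_i}(\w_t)\|^2$ with a \emph{per-step} contraction factor $1-\Theta(B\gamma/n)$ coming directly from the refresh probability, absorb the drift via a Young-split with $\eta=\Theta(B\gamma/n)$, and close the argument with a single Lyapunov function $\Phi_t=R(\w_t)+\tfrac{\beta n}{B\gamma}\varepsilon_t$. The two are equivalent in expectation --- since $i_t$ is uniform, $\E\|g_{i_t}(\w_t)-\u_{i_t}\|^2=\varepsilon_t$ --- and after substituting the standard step-bound $\|\w_{t+1}-\w_t\|^2\le\alpha^2C$ one recovers exactly the same $\tfrac{n}{\gamma}+\gamma T+\tfrac{n^2\alpha^2 T}{\gamma^2}$ profile and hence the same $n^{2/5}T^{-2/5}$ rate. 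The blockwise/geometric argument avoids having to reason about a coupled Lyapunov recursion but requires moment control of a stopping time; your averaged recursion is more in the spirit of SOX/MOAP-type analyses (which the paper cites as alternative optimizers), gives the generalization to batch size $B>1$ essentially for free, and makes the role of the amplified drift term $\tfrac{n}{B\gamma}\|\w_{t+1}-\w_t\|^2$ as the bottleneck explicit.

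Two places you should tighten before this would stand alone. First, to make the contraction unbiased you must argue that the cross term $\E\langle\u^t_{\x_i}-g_{\x_i}(\w_t),\widehat g_{\x_i}(\w_t)-g_{\x_i}(\w_t)\rangle$ vanishes; this requires that conditional on $\x_i\in\B$ the remaining batch $\widehat{\P}_i$ is a fresh uniform draw from $\P_i$, and you should also record that $\prod_{\Omega}$ (the clipping at $u_0$) is a non-expansive projection with $g_{\x_i}(\w_t)\in\Omega$ --- the paper's Lemma~\ref{lem:1} is exactly what makes this legal --- so that the projected update still contracts. Second, for the Adam/AMSGrad branch your $\Phi_t$ alone does not telescope: because $h_{t+1}$ carries a stale gradient, you need the momentum correction term $-c_{t+1}\langle\nabla R(\w_t),D_{t+1}h_{t+1}\rangle$ in the Lyapunov function (as in the paper's $\V_t$) in addition to $\tfrac{\beta n}{B\gamma}\varepsilon_t$, and the bound on $\|\w_{t+1}-\w_t\|$ under AMSGrad comes from the Cauchy--Schwarz coupling of $h_{t+1}$ and $\hat v_{t+1}$ together with $\eta_1\le\sqrt{\eta_2}$ (the paper's Lemma~\ref{lem:update-Adam-B=D}), not merely from $\sqrt{\epsilon+\hat v_t}$ being sandwiched between constants. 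You flag both of these as the delicate ingredients, which is the right instinct, but as written the adaptive case is an outline rather than a proof.
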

\noindent
\textbf{Remark:} Even though RAAN and RL-RAAN enjoys the same iteration complexity in Theorem~\ref{thm:main-Adam}, the stochastic estimator  ${[}\nabla_{\widetilde{\w}} \widehat{g}_{\x_i}(\w)]_2^\top$ is $\textbf{0}$ leads to a simpler optimization for RAAN such that we only need to maintain and update $\u^2_{\x_i}$ to calculate $G(\w_t) = 1/|\B|\sum_{i=1}^{|\B|}  [\nabla_{\w_c} g_{\x_i}(\w)]_1/\u_{\x_i}^2$.
There are other more sophisticated optimizers, such as SOX~\citep{pmlr-v162-wang22ak}, MOAP~\citep{wang2022momentum} and BSGD~\citep{hu2020biased}, can also be applied to solve~(\ref{eqn:RAAN}), which we leave as a future exploration direction. The derivation of Theorem~\ref{thm:main-Adam} is provided in Appendix~\ref{sec:thm_drivation}

\section{Empirical Studies}
\label{sec:empirical_studies_sec}
In this section, we conduct empirical studies on fourdatasets: Adult~\citep{kohavi1996scaling}, Medical Expenditure (MEPS)\citep{cohen2003design},   CelebA~\citep{liu2015deep}, and Civil Comments\footnote{\url{https://www.tensorflow.org/datasets/catalog/civil_ comments}} dataset in the NLP. We compare the proposed methods with: 1) bias mitigation methods: RNF~\citep{du2021fairness}, Adversarial learning~\citep{zhang2018mitigating}, regularization method~\citep{bechavod2017penalizing}. 2) robust optimization methods: Empirical Risk Minimization (ERM), Group Distributionally Robust Learning (GroupDRO)~\citep{sagawa2019distributionally} and Invariant Risk Minimization (IRM)\citep{adragna2020fairness, arjovsky2019invariant}.

\noindent
\textbf{Datasets: } For the two benchmark tabular datasets, the Adult dataset used to predict whether a person's annual income higher than 50K while the goal of MEPS is to predict whether a patient could have a high utilization. For the CelebA image dataset, we want to predict whether a person has wavy hair or not. Civil Comments dataset is an NLP dataset aims to predict the binary toxicity label for online comments. Accordingly, the protected sensitive attribute is \textbf{gender} $\in\{female, male\}$ \footnote{For the gender attribute, there are more than binary attributes. For example, it contains but not limited to female, male and transgender are included to name a few. Here, due to the limited size of the datasets, we only consider female and male attributes in this paper.}on the Adult and CelebA datasets, and the protected sensitive attribute is \textbf{race} $\in \{white, nonwhite\}$ on MEPS. We consider four different types of demographic sensitive attributes for each comments belonging to \{{\bf Black, Muslim, LGBTQ, NeuroDiverse}\} on the Civil Comments dataset.
The training data size varies from 11362 in MEPS, 33120 in Adult to 194599 in CelebA. Civil Comments Dataset contains 2 million online news articles comments that are annotated by toxicity. Following the setting in~\citep{adragna2020fairness},  subsets of 450,000 comments for each attribute are constructed.

\noindent
\textbf{Metrics:}
In the experiments, we compare two fairness metric equalized odd difference ($\Delta$EO), demographic parity difference ($\Delta$DP) between different methods given the same accuracy and worst group accuracy in terms of \{{\it Label $\times$ Attribute}\}. $\Delta$DP measures the difference in probability of favorable outcomes between unprivileged and privileged groups $\Delta\text{DP} = |\text{PR}_0 - \text{PR}_1|$, where $\text{PR}_0 = p(\hat{y} = 1|a= 0)$, and $\text{PR}_1 = p(\hat{y} = 1|a= 1)$. $\Delta$EO requires favorable outcomes to be independent of the protected class attribute $a$, conditioned on the ground truth label $\y$. $\Delta\text{EO} = |\text{TPR}_0 - \text{TPR}_1| + |\text{FPR}_0 - \text{FPR}_1|$, where $\text{TPR}_0 = p(\hat{y}=1|a = 0, y =1)$, $\text{TPR}_1 = p(\hat{y}=1|a = 1, y =1)$, $\text{FPR}_0 = p(\hat{y}=1|a = 0, y =0)$, and $\text{FPR}_1 = p(\hat{y}=1|a = 1, y =0)$, $|\cdot|$ denotes absolutew value.

\subsection{Comparison with Bias Mitigation Methods}

\textbf{Baselines} \textbf{Vanilla} refers to the standard CE training with cross entropy loss, \textbf{RNF} represents the representation neutralization method in~\citep{du2021fairness},  \textbf{Adversarial} denotes the adversarial training method \citep{zhang2018mitigating} that mitigates biases by simultaneously learning a predictor and an adversary, \textbf{(RL-)EOR}~\citep{bechavod2017penalizing} is a regularization method that uses a surrogate  function of $\Delta$EO as the regularizer.
The comparison of baselines are described in Table~\ref{tab:difference}.
We report two version of experiments for the regularization method and our proposed method for debiasing the representation encoder and debiasing the classification head, i.e, RL-EOR and EOR,  RL-RAAN and RAAN, respectively.

\begin{wraptable}{r}{6cm}
\centering
 \caption{Comparison of baseline methods}
\label{tab:difference}
\resizebox{\linewidth}{!}{  
\begin{tabular}{c|c|c} 
\toprule
            &  \makecell[c]{Debiasing \\ Representation Encoder} &  \makecell[c]{Debiasing \\Classification Head} \\ \hline
Vanilla     &                $\times$                  &            $\times$                   \\
Adversarial &       $ \checkmark $                         &    $\times$                          \\
(RL-)EOR         &         $\checkmark$                          &    $  \checkmark$                          \\
RNF  & $\times$ & $\checkmark$\\
(RL-)RAAN        &       $\checkmark$                            &   $\checkmark$  \\               \bottomrule            
\end{tabular}
}
\end{wraptable} 

\noindent
\textbf{Models and Parameter settings}
Following the experimental setting in~\citep{du2021fairness}, we train a three layer MLP for Adult and MEPS datasets and ResNet-18 for CelebA. The details of the MLP networks are provided in the Appendix~\ref{sec:mlp_structure}. We adopt the two stage bias mitigation training scheme such that we apply the vanilla CE method in the first stage and then debias the representation encoder $\w_f$ or classification head $\w_c$ in the second stage. The representation encoder is fixed when we debias the classification head $\w_c$.
We train 10 epochs per stage for MEPS and Adult datasets, and 5 epochs per stage for the CelebA. We report the final $\Delta$EO, $\Delta$DP and worst group accuracy in terms of \{$ Label \times Attribute$\} on the test data at the end of the training. The batch size of MEPS and Adult is 64 by default, and the batch size of celebA is 190. We use the Adam-style/SGD-style SCRAAN optimize RAAN, and Adam/SGD optimizer for other baselines.
For all the methods, the learning rate $\alpha$ is tuned in $\{1e$-$2$, $1e$-$3$, $1e$-$4\}$ and $\tau\in\{0.1:0.2:2\}$.
For the RAAN, we tune $\gamma \in \{0.1, 0.5,0.9\}$. The regularizer hyperparameter of RNF $\alpha'$ is tuned in $\{0,\ 1e$-$4,\ 2e$-$4,\ 3e$-$4,\ 4e$-$4\}$. And the regularizer parameter in Adversarial and EOR is tuned $\in\{0.01:0.02:0.1\}$. The learning rate for the adversarial head $\alpha_\text{Adv}$ in Adversarial is tuned in $\{1e$-$2$, $1e$-$3$, $1e$-$4\}$.



\begin{figure}[t]
    \centering
    \includegraphics[width = 0.32\linewidth]{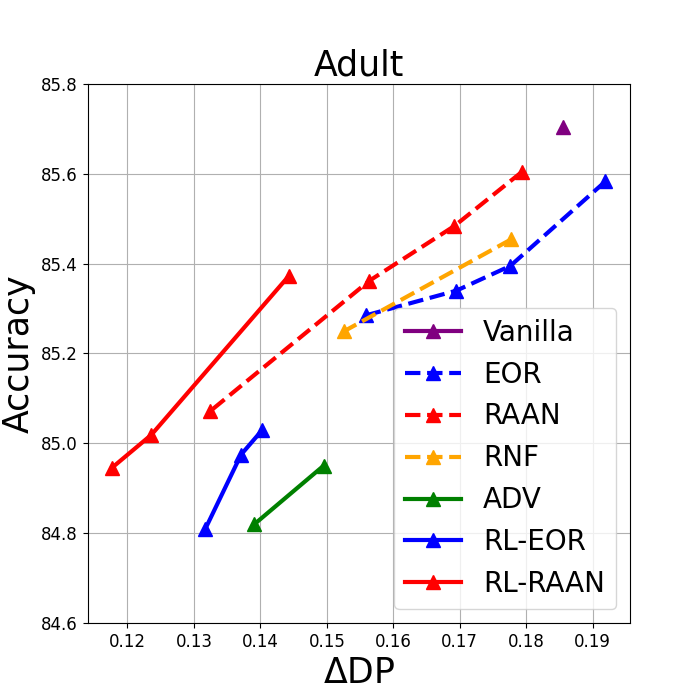}
       \includegraphics[width = 0.32\linewidth]{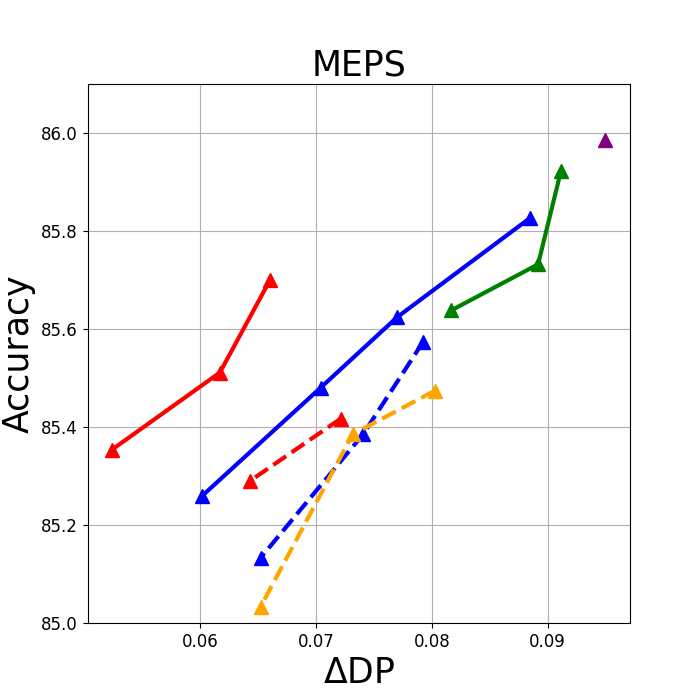}
           \includegraphics[width = 0.32 \linewidth]{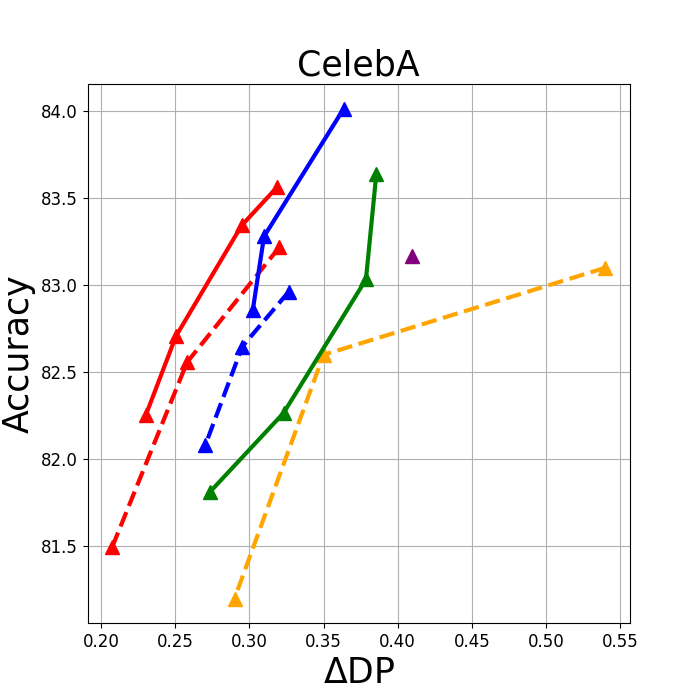}
           \\
    \includegraphics[width = 0.32\linewidth]{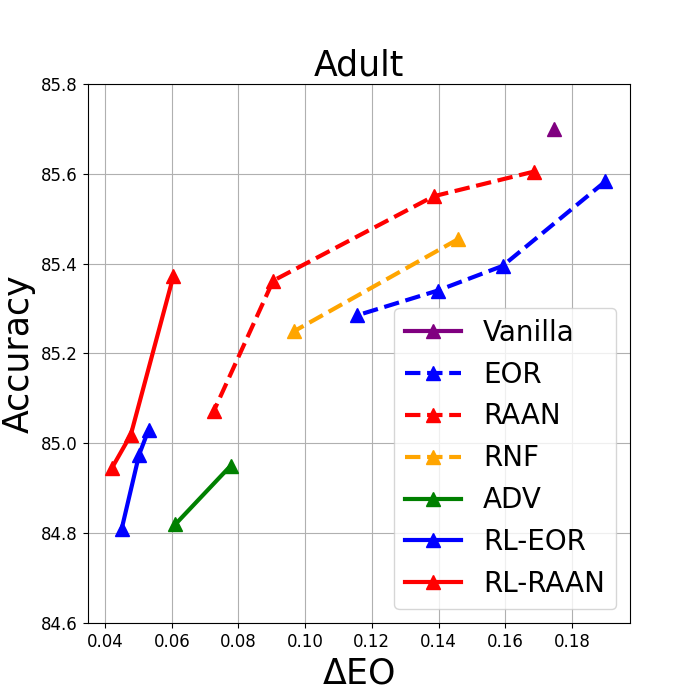}
    \includegraphics[width = 0.32\linewidth]{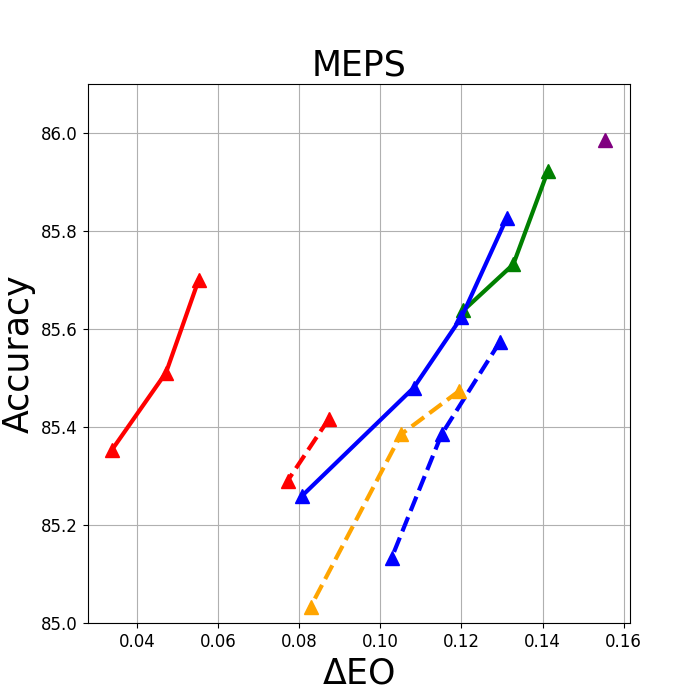} 
    \includegraphics[width = 0.32 \linewidth]{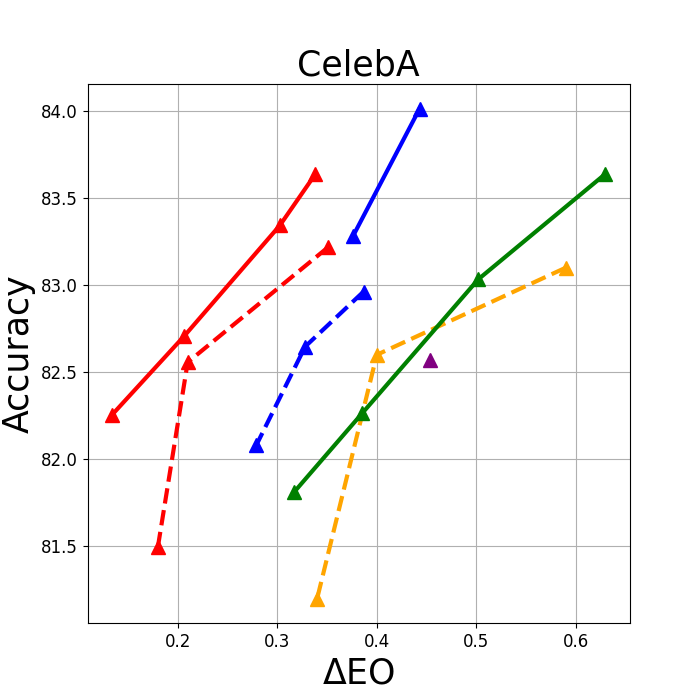}
    \caption{ $\Delta$DP and $\Delta$EO experimental results of different methods optimized by Adam-style SCRAAN on Adult, MEPS and CELEBA, respectively. The results are reported over 5 runs.}
 \label{fig:sota_res_adam}
\end{figure}

\begin{table}[htbp]
\caption{Worst group accuracy over 5 independent runs.}
\label{tab:worst_group_acc}
\centering
\resizebox{0.9\linewidth}{!}{
\begin{tabular}{c|c|c|c|c|c|c}\toprule
            & \multicolumn{2}{c|}{MEPS} & \multicolumn{2}{c|}{Adult} & \multicolumn{2}{c}{CelebA} \\ 
Optimizer   & SGD-style  & Adam-style  & SGD-style   & Adam-style  & SGD-style   & Adam-style   \\ \hline

Vanilla     &      27.61  $\pm$ 0.12    &   32.80   $\pm$ 0.15        &    55.49 $\pm$ 0.11         &   53.96 $\pm$ 0.14           &     41.32 $\pm$ 0.92       &       40.65   $\pm$ 0.85    \\ 
\hline

EOR         &      27.82 $\pm$ 0.15       &  30.23 $\pm$  $0.15$          &   51.01 $\pm$ 0.51         &        52.74    $\pm$ 0.21  &     46.02 $\pm$ 0.97        &          45.31 $\pm$ 1.58    \\
RNF         &      -      &     31.91  $\pm$ 0.23     &      -     &       58.85  $\pm$ 0.41    &    -          &       50.23  $\pm$ 0.69        \\
RAAN        &  {\bf 28.01}  $\pm$ 0.17       & {\bf 33.10} $\pm$ 0.10             &     {\bf 58.23} $\pm$ 0.13    &        
{\bf 59.76} $\pm$ 0.31   &     {\bf 53.47}  $\pm$ 1.24    &      {\bf 55.81} $\pm$ 1.12          \\ 
\hline
Adversarial &      28.31 $\pm$ 0.14      &     32.76 $\pm$ 0.13        &    54.72 $\pm$ 0.76      &     55.49 $\pm 0.31$      &   41.51 $\pm$ 0.98          &        40.15  $\pm$ 0.69    \\
RL-EOR    &    29.52 $\pm$ 0.21         &   32.00 $\pm$ 0.23         &        59.15 $\pm$ 0.11   &           57.41 $\pm$ 0.41  &           44.61 $\pm$ 0.83   &         46.89 $\pm$ 1.51    \\ 
RL-RAAN     & {\bf 30.00} $\pm$ 0.45           &  {\bf 35.21} $\pm$ 0.33           &   {\bf 65.94} $\pm$ 0.86       &    {\bf 68.10} $\pm$ 0.39       &      {\bf 58.61} $\pm$ 1.01      & {\bf 66.44}   $\pm$ 0.72     \\ \bottomrule
\end{tabular}}
\end{table}

\noindent
\textbf{Experimental results} 
In this section, we report the comparison experimental results between (RL)-RAAN optimized by Adam-style SCRAAN and other baselines optimized by Adam~\citep{kingma2014adam}.  The experimental results of the SGD-style optimizers are presented in Appendix~\ref{sec:SGD-style}.
We present the Accuracy vs $\Delta$DP, Accuracy vs $\Delta$EO results for different methods in Figure~\ref{fig:sota_res_adam}. And the worst group accuracy are reported in Table~\ref{tab:worst_group_acc}.
For the same accuracy, the smaller the value of $\Delta$DP and $\Delta$EO is, the better the method will be. It is worth to notice that we can not have a valid result for when replicating RNF method using SGD optimizer. By balancing between accuracy and $\Delta$DP, $\Delta$EO, RL-RAAN has the best results on all datasets.
For the RAAN method, it has smallest $\Delta$DP and $\Delta$EO among the methods that debias on the classification head (the dashed lines). 
Besides the Vanilla  method, EOR has the worst $\Delta$DP and $\Delta$EO in Adult, while Adversarial method has the worst $\Delta$DP and $\Delta$EO in MEPS and CelebA. When it comes to the worst group accuracy, RAAN and RL-RAAN achieve the best performance in debiasing the classification and representation encoder, respectively. In addition we provide ablation studies in the Apendx~\ref{sec:aba_study_scraan}.

 \subsection{Comparison with Stochastic Optimization Methods}
In this section, we compare RAAN with stochastic optimization methods including ERM, Group DRO, and IRM on the subsets of Civil Comments dataset. IRM and Group DRO have been proved to prevent models from learning prespecified spurious correlations~\citep{adragna2020fairness, sagawa2019distributionally}.
 We consider three different enviroments for training and testing~\citep{adragna2020fairness}.
 We set the sample size to be the same for the three environments.
 For each environment, we have a balanced number of comments for each class and each attribute, i.e, half are non-toxic ($y = 0$) and half are toxic comments ($y=0$). Similarly, for each sensitive demographic attribute in $\{\text{Black, Muslim, LGBTQ, NeuroDiverse}\}$, half of the comments are about the sensitive demographic attribute $(a=1)$ and half are not $(a=0)$. 
 We define the label switching probability $p_e =p(a=z|y=1-z), \forall z\in\{0, 1\}$ to introduce the spurious correlations between the sensitive attributes and class labels and quantify the difference between different environments.
The training datasets include two enviorments with $p_e = 0.1$ and $0.2$, while $p_e = 0.9$ in testing data environment. 

\noindent
\textbf{Baselines} For the ERM, we optimizes vanilla CE using Adam optimizer. IRM optimize a practical variant objective for the linear invariant predictor, i.e, Equation (IRMv1), proposed in~\citep{arjovsky2019invariant} using Adam optimizer. Group DRO~\citep{sagawa2019distributionally} aims to minimize the worst group accuracy. For proposed methods, we optimize the RAAN using the Adam-style SCRAAN. SCRAAN and Group DRO explicitly make use of the sensitive attributes information to construct AAN and calculate group loss, respectively.

\noindent
\textbf{Model and Parameter Settings} We train a logistic regression with l2 regularization as the toxicity classification model~\citep{adragna2020fairness} by converting each comment into a sentence embedding representing its semantic content using a pre-trained Sentence-BERT model~\citep{reimers2019sentence}.
All the learning rates are finetuned
using grid search between $\{0.0001, 0.01\}$. The hyper parameter of RAAN follows previous section. For the Group DRO the temperature parameter $\eta$ is tuned in $\{1:0.2:2\}$. The hyperparemeter for IRM are tuned following~\citep{adragna2020fairness}.

\noindent
\textbf{Experimental Results}
The experimental results are reported in Table~\ref{tab:irm}. We can see that SCRAAN and Group DRO have a significant improvement over ERM and IRM on all three evaluation metric, which implies the effectiveness of sensitive attributes information to reduce model bias. When compared with Group DRO the SCRAAN and Group DRO, SCRAAN has comparable results in terms of group accuracy while performs better on  $\Delta$EO. This makes sense as the objective of Group DRO aims to minimize the worst group loss, while RAAN focuses on improving the fairness of different groups.

 \begin{table}[htbp]
\centering
\caption{Experimental results on the testing environments over 5 independent runs.}
\label{tab:irm}
 \resizebox{\linewidth}{!}{  
\begin{tabular}{c|cccc|cccc|cccc} \toprule
           & \multicolumn{4}{c|}{Accuracy} & \multicolumn{4}{c|}{Worst Group Accuracy} & \multicolumn{4}{c}{$\Delta$EO} \\
Sens Att   & ERM     & IRM   

& Group DRO & SCRAAN   & ERM        & IRM  & Group DRO      & SCRAAN        & ERM    & IRM  & Group DRO & SCRAAN \\ \hline
 Black    & 47.04 $\pm$ 0.9  & 55.31 $\pm$ 1.2 & 67.32 $\pm$ 1.0 & {\bf 71.29} $\pm$ 1.0    &        35.01 $\pm$ 0.7 &   45.01$\pm$ 0.9      & {\bf 64.91}$\pm$ 1.0 & 64.23 $\pm$ 0.9       & 52.23 $\pm$ 3.4   & 30.90 $\pm$ 4.1 & 12.82 $\pm$ 2.7 & {\bf 4.77} $\pm$ 2.1  \\ 
  Muslim    &    49.23 $\pm$ 0.9    &     59.08 $\pm$ 1.4   & 66.37$\pm$ 1.2 &  \textbf{71.82} $\pm$ 1.0       &   36.92    $\pm$ 0.8    &    55.92    $\pm$ 1.7             &  62.45 $\pm$ 1.1  & {\bf 62.51} $\pm$ 0.9           &    47.44 $\pm$ 2.1    &    25.93 $\pm$ 4.1   & 11.79 $\pm$ 2.2 &    {\bf 7.47 }  $\pm$ 1.9\\
NeuroDiv   & 65.18 $\pm$ 1.0  & 63.60 $\pm$ 1.3   & 68.03 $\pm$ 1.1 & {\bf 68.17 }$\pm$ 1.2   &      56.26  $\pm$ 1.0    &    45.75   $\pm$ 0.8    & 63.76 $\pm$0.9 & {\bf 64.06} $\pm$ 1.1        & 26.53 $\pm$ 1.7  & 26.26 $\pm$ 2.1 & 10.75 $\pm$ 1.6 & {\bf 4.94} $\pm$ 0.9  \\
LGBTQ & 56.67  $\pm$ 1.1 & 61.99 $\pm$ 1.5   & 66.76 $\pm$ 1.3 & {\bf 69.54 } $\pm$ 1.1   &       42.58  $\pm$ 0.7   &        52.74  $\pm$ 1.2  & 63.10 $\pm$ 0.9  &   \textbf{67.31} $\pm$ 1.0       & 37.53 $\pm$ 2.1  & 25.00  $\pm$ 1.9 &18.56 $\pm$ 2.1 & {\bf 7.65} $\pm$ 1.3   \\
\bottomrule
\end{tabular}
}
\end{table}

\section{Conclusion}
In this paper, we propose a robust loss RAAN that is able to reduce the bias of the classification head and improve the fairness of representation encoder. Then an optimization framework SCRAAN has been developed for handling RAAN with provable theoretical convergence guarantee. Comprehensive studies on several fairness-related benchmark datasets verify the effectiveness of the proposed methods.

\bibliography{NARL.bib}

\begin{thebibliography}{48}
\providecommand{\natexlab}[1]{#1}
\providecommand{\url}[1]{\texttt{#1}}
\expandafter\ifx\csname urlstyle\endcsname\relax
  \providecommand{\doi}[1]{doi: #1}\else
  \providecommand{\doi}{doi: \begingroup \urlstyle{rm}\Url}\fi

\bibitem[Adragna et~al.(2020)Adragna, Creager, Madras, and
  Zemel]{adragna2020fairness}
Robert Adragna, Elliot Creager, David Madras, and Richard Zemel.
\newblock Fairness and robustness in invariant learning: A case study in
  toxicity classification.
\newblock \emph{arXiv preprint arXiv:2011.06485}, 2020.

\bibitem[Arjovsky et~al.(2019)Arjovsky, Bottou, Gulrajani, and
  Lopez-Paz]{arjovsky2019invariant}
Martin Arjovsky, L{\'e}on Bottou, Ishaan Gulrajani, and David Lopez-Paz.
\newblock Invariant risk minimization.
\newblock \emph{arXiv preprint arXiv:1907.02893}, 2019.

\bibitem[Bechavod and Ligett(2017)]{bechavod2017penalizing}
Yahav Bechavod and Katrina Ligett.
\newblock Penalizing unfairness in binary classification.
\newblock \emph{arXiv preprint arXiv:1707.00044}, 2017.

\bibitem[Buolamwini and Gebru(2018)]{buolamwini2018gender}
Joy Buolamwini and Timnit Gebru.
\newblock Gender shades: Intersectional accuracy disparities in commercial
  gender classification.
\newblock In \emph{Conference on fairness, accountability and transparency},
  pages 77--91. PMLR, 2018.

\bibitem[Cao et~al.(2019)Cao, Wei, Gaidon, Arechiga, and Ma]{cao2019learning}
Kaidi Cao, Colin Wei, Adrien Gaidon, Nikos Arechiga, and Tengyu Ma.
\newblock Learning imbalanced datasets with label-distribution-aware margin
  loss.
\newblock \emph{Advances in neural information processing systems}, 32, 2019.

\bibitem[Chen et~al.(2021)Chen, Sun, and Yin]{chen2021solving}
Tianyi Chen, Yuejiao Sun, and Wotao Yin.
\newblock Solving stochastic compositional optimization is nearly as easy as
  solving stochastic optimization.
\newblock \emph{IEEE Transactions on Signal Processing}, 69:\penalty0
  4937--4948, 2021.

\bibitem[Cheng et~al.(2021)Cheng, Hao, Yuan, Si, and Carin]{cheng2021fairfil}
Pengyu Cheng, Weituo Hao, Siyang Yuan, Shijing Si, and Lawrence Carin.
\newblock Fairfil: Contrastive neural debiasing method for pretrained text
  encoders.
\newblock \emph{arXiv preprint arXiv:2103.06413}, 2021.

\bibitem[Cohen(2003)]{cohen2003design}
Steven~B Cohen.
\newblock Design strategies and innovations in the medical expenditure panel
  survey.
\newblock \emph{Medical care}, pages III5--III12, 2003.

\bibitem[Cui et~al.(2019)Cui, Jia, Lin, Song, and Belongie]{cui2019class}
Yin Cui, Menglin Jia, Tsung-Yi Lin, Yang Song, and Serge Belongie.
\newblock Class-balanced loss based on effective number of samples.
\newblock In \emph{Proceedings of the IEEE/CVF conference on computer vision
  and pattern recognition}, pages 9268--9277, 2019.

\bibitem[Du et~al.(2021)Du, Mukherjee, Wang, Tang, Awadallah, and
  Hu]{du2021fairness}
Mengnan Du, Subhabrata Mukherjee, Guanchu Wang, Ruixiang Tang, Ahmed Awadallah,
  and Xia Hu.
\newblock Fairness via representation neutralization.
\newblock \emph{Advances in Neural Information Processing Systems}, 34, 2021.

\bibitem[Edwards and Storkey(2015)]{edwards2015censoring}
Harrison Edwards and Amos Storkey.
\newblock Censoring representations with an adversary.
\newblock \emph{arXiv preprint arXiv:1511.05897}, 2015.

\bibitem[Elazar and Goldberg(2018)]{elazar2018adversarial}
Yanai Elazar and Yoav Goldberg.
\newblock Adversarial removal of demographic attributes from text data.
\newblock \emph{arXiv preprint arXiv:1808.06640}, 2018.

\bibitem[Fabbrizzi et~al.(2022)Fabbrizzi, Papadopoulos, Ntoutsi, and
  Kompatsiaris]{fabbrizzi2022survey}
Simone Fabbrizzi, Symeon Papadopoulos, Eirini Ntoutsi, and Ioannis
  Kompatsiaris.
\newblock A survey on bias in visual datasets.
\newblock \emph{Computer Vision and Image Understanding}, page 103552, 2022.

\bibitem[He et~al.(2016)He, Zhang, Ren, and Sun]{he2016deep}
Kaiming He, Xiangyu Zhang, Shaoqing Ren, and Jian Sun.
\newblock Deep residual learning for image recognition.
\newblock In \emph{Proceedings of the IEEE conference on computer vision and
  pattern recognition}, pages 770--778, 2016.

\bibitem[Hu et~al.(2020)Hu, Zhang, Chen, and He]{hu2020biased}
Yifan Hu, Siqi Zhang, Xin Chen, and Niao He.
\newblock Biased stochastic first-order methods for conditional stochastic
  optimization and applications in meta learning.
\newblock \emph{Advances in Neural Information Processing Systems},
  33:\penalty0 2759--2770, 2020.

\bibitem[Kang et~al.(2019)Kang, Xie, Rohrbach, Yan, Gordo, Feng, and
  Kalantidis]{kang2019decoupling}
Bingyi Kang, Saining Xie, Marcus Rohrbach, Zhicheng Yan, Albert Gordo, Jiashi
  Feng, and Yannis Kalantidis.
\newblock Decoupling representation and classifier for long-tailed recognition.
\newblock \emph{arXiv preprint arXiv:1910.09217}, 2019.

\bibitem[Kilbertus et~al.(2017)Kilbertus, Rojas~Carulla, Parascandolo, Hardt,
  Janzing, and Sch{\"o}lkopf]{kilbertus2017avoiding}
Niki Kilbertus, Mateo Rojas~Carulla, Giambattista Parascandolo, Moritz Hardt,
  Dominik Janzing, and Bernhard Sch{\"o}lkopf.
\newblock Avoiding discrimination through causal reasoning.
\newblock \emph{Advances in neural information processing systems}, 30, 2017.

\bibitem[Kim et~al.(2019)Kim, Kim, Kim, Kim, and Kim]{kim2019learning}
Byungju Kim, Hyunwoo Kim, Kyungsu Kim, Sungjin Kim, and Junmo Kim.
\newblock Learning not to learn: Training deep neural networks with biased
  data.
\newblock In \emph{Proceedings of the IEEE/CVF Conference on Computer Vision
  and Pattern Recognition}, pages 9012--9020, 2019.

\bibitem[Kingma and Ba(2014)]{kingma2014adam}
Diederik~P Kingma and Jimmy Ba.
\newblock Adam: A method for stochastic optimization.
\newblock \emph{arXiv preprint arXiv:1412.6980}, 2014.

\bibitem[Kiritchenko and Mohammad(2018)]{kiritchenko2018examining}
Svetlana Kiritchenko and Saif~M Mohammad.
\newblock Examining gender and race bias in two hundred sentiment analysis
  systems.
\newblock \emph{arXiv preprint arXiv:1805.04508}, 2018.

\bibitem[Kohavi et~al.(1996)]{kohavi1996scaling}
Ron Kohavi et~al.
\newblock Scaling up the accuracy of naive-bayes classifiers: A decision-tree
  hybrid.
\newblock In \emph{Kdd}, volume~96, pages 202--207, 1996.

\bibitem[Kusner et~al.(2017)Kusner, Loftus, Russell, and
  Silva]{kusner2017counterfactual}
Matt~J Kusner, Joshua Loftus, Chris Russell, and Ricardo Silva.
\newblock Counterfactual fairness.
\newblock \emph{Advances in neural information processing systems}, 30, 2017.

\bibitem[Li et~al.(2020)Li, Beirami, Sanjabi, and Smith]{li2020tilted}
Tian Li, Ahmad Beirami, Maziar Sanjabi, and Virginia Smith.
\newblock Tilted empirical risk minimization.
\newblock \emph{arXiv preprint arXiv:2007.01162}, 2020.

\bibitem[Lin et~al.(2022)Lin, Wang, Zhu, and Caverlee]{lin2022quantifying}
Shuo Lin, Jianling Wang, Ziwei Zhu, and James Caverlee.
\newblock Quantifying and mitigating popularity bias in conversational
  recommender systems.
\newblock \emph{arXiv preprint arXiv:2208.03298}, 2022.

\bibitem[Liu and Avci(2019)]{liu2019incorporating}
Frederick Liu and Besim Avci.
\newblock Incorporating priors with feature attribution on text classification.
\newblock \emph{arXiv preprint arXiv:1906.08286}, 2019.

\bibitem[Liu et~al.(2018)Liu, Zhang, Chen, Wang, and Yang]{liu2018fast}
Mingrui Liu, Xiaoxuan Zhang, Zaiyi Chen, Xiaoyu Wang, and Tianbao Yang.
\newblock Fast stochastic auc maximization with $ o (1/n) $-convergence rate.
\newblock In \emph{International Conference on Machine Learning}, pages
  3189--3197. PMLR, 2018.

\bibitem[Liu et~al.(2019)Liu, Yuan, Ying, and Yang]{liu2019stochastic}
Mingrui Liu, Zhuoning Yuan, Yiming Ying, and Tianbao Yang.
\newblock Stochastic auc maximization with deep neural networks.
\newblock \emph{arXiv preprint arXiv:1908.10831}, 2019.

\bibitem[Liu et~al.(2015)Liu, Luo, Wang, and Tang]{liu2015deep}
Ziwei Liu, Ping Luo, Xiaogang Wang, and Xiaoou Tang.
\newblock Deep learning face attributes in the wild.
\newblock In \emph{Proceedings of the IEEE international conference on computer
  vision}, pages 3730--3738, 2015.

\bibitem[Qi et~al.(2020{\natexlab{a}})Qi, Xu, Jin, Yin, and
  Yang]{qi2020attentional}
Qi~Qi, Yi~Xu, Rong Jin, Wotao Yin, and Tianbao Yang.
\newblock Attentional biased stochastic gradient for imbalanced classification.
\newblock \emph{arXiv preprint arXiv:2012.06951}, 2020{\natexlab{a}}.

\bibitem[Qi et~al.(2020{\natexlab{b}})Qi, Yan, Wu, Wang, and
  Yang]{qi2020simple}
Qi~Qi, Yan Yan, Zixuan Wu, Xiaoyu Wang, and Tianbao Yang.
\newblock A simple and effective framework for pairwise deep metric learning.
\newblock In \emph{European Conference on Computer Vision}, pages 375--391.
  Springer, 2020{\natexlab{b}}.

\bibitem[Qi et~al.(2021{\natexlab{a}})Qi, Guo, Xu, Jin, and
  Yang]{NEURIPS2021_533fa796}
Qi~Qi, Zhishuai Guo, Yi~Xu, Rong Jin, and Tianbao Yang.
\newblock An online method for a class of distributionally robust optimization
  with non-convex objectives.
\newblock In M.~Ranzato, A.~Beygelzimer, Y.~Dauphin, P.S. Liang, and J.~Wortman
  Vaughan, editors, \emph{Advances in Neural Information Processing Systems},
  volume~34, pages 10067--10080. Curran Associates, Inc., 2021{\natexlab{a}}.
\newblock URL
  \url{https://proceedings.neurips.cc/paper/2021/file/533fa796b43291fc61a9e812a50c3fb6-Paper.pdf}.

\bibitem[Qi et~al.(2021{\natexlab{b}})Qi, Luo, Xu, Ji, and
  Yang]{qi2021stochastic}
Qi~Qi, Youzhi Luo, Zhao Xu, Shuiwang Ji, and Tianbao Yang.
\newblock Stochastic optimization of areas under precision-recall curves with
  provable convergence.
\newblock \emph{Advances in Neural Information Processing Systems}, 34,
  2021{\natexlab{b}}.

\bibitem[Reimers and Gurevych(2019)]{reimers2019sentence}
Nils Reimers and Iryna Gurevych.
\newblock Sentence-bert: Sentence embeddings using siamese bert-networks.
\newblock \emph{arXiv preprint arXiv:1908.10084}, 2019.

\bibitem[Rieger et~al.(2020)Rieger, Singh, Murdoch, and
  Yu]{rieger2020interpretations}
Laura Rieger, Chandan Singh, William Murdoch, and Bin Yu.
\newblock Interpretations are useful: penalizing explanations to align neural
  networks with prior knowledge.
\newblock In \emph{International conference on machine learning}, pages
  8116--8126. PMLR, 2020.

\bibitem[Sagawa et~al.(2019)Sagawa, Koh, Hashimoto, and
  Liang]{sagawa2019distributionally}
Shiori Sagawa, Pang~Wei Koh, Tatsunori~B Hashimoto, and Percy Liang.
\newblock Distributionally robust neural networks for group shifts: On the
  importance of regularization for worst-case generalization.
\newblock \emph{arXiv preprint arXiv:1911.08731}, 2019.

\bibitem[Singh et~al.(2020)Singh, Mahajan, Grauman, Lee, Feiszli, and
  Ghadiyaram]{singh2020don}
Krishna~Kumar Singh, Dhruv Mahajan, Kristen Grauman, Yong~Jae Lee, Matt
  Feiszli, and Deepti Ghadiyaram.
\newblock Don't judge an object by its context: learning to overcome contextual
  bias.
\newblock In \emph{Proceedings of the IEEE/CVF Conference on Computer Vision
  and Pattern Recognition}, pages 11070--11078, 2020.

\bibitem[Torralba and Efros(2011)]{torralba2011unbiased}
Antonio Torralba and Alexei~A Efros.
\newblock Unbiased look at dataset bias.
\newblock In \emph{CVPR 2011}, pages 1521--1528. IEEE, 2011.

\bibitem[Wadsworth et~al.(2018)Wadsworth, Vera, and
  Piech]{wadsworth2018achieving}
Christina Wadsworth, Francesca Vera, and Chris Piech.
\newblock Achieving fairness through adversarial learning: an application to
  recidivism prediction.
\newblock \emph{arXiv preprint arXiv:1807.00199}, 2018.

\bibitem[Wang and Yang(2022)]{pmlr-v162-wang22ak}
Bokun Wang and Tianbao Yang.
\newblock Finite-sum coupled compositional stochastic optimization: Theory and
  applications.
\newblock In Kamalika Chaudhuri, Stefanie Jegelka, Le~Song, Csaba Szepesvari,
  Gang Niu, and Sivan Sabato, editors, \emph{Proceedings of the 39th
  International Conference on Machine Learning}, volume 162 of
  \emph{Proceedings of Machine Learning Research}, pages 23292--23317. PMLR,
  17--23 Jul 2022.
\newblock URL \url{https://proceedings.mlr.press/v162/wang22ak.html}.

\bibitem[Wang et~al.(2022)Wang, Yang, Zhang, and Yang]{wang2022momentum}
Guanghui Wang, Ming Yang, Lijun Zhang, and Tianbao Yang.
\newblock Momentum accelerates the convergence of stochastic auprc
  maximization.
\newblock In \emph{International Conference on Artificial Intelligence and
  Statistics}, pages 3753--3771. PMLR, 2022.

\bibitem[Wang et~al.(2019)Wang, Ustun, and Calmon]{wang2019repairing}
Hao Wang, Berk Ustun, and Flavio Calmon.
\newblock Repairing without retraining: Avoiding disparate impact with
  counterfactual distributions.
\newblock In \emph{International Conference on Machine Learning}, pages
  6618--6627. PMLR, 2019.

\bibitem[Wang et~al.(2017)Wang, Fang, and Liu]{wang2017stochastic}
Mengdi Wang, Ethan~X Fang, and Han Liu.
\newblock Stochastic compositional gradient descent: algorithms for minimizing
  compositions of expected-value functions.
\newblock \emph{Mathematical Programming}, 161\penalty0 (1):\penalty0 419--449,
  2017.

\bibitem[Yuan et~al.(2021)Yuan, Yan, Sonka, and Yang]{yuan2021large}
Zhuoning Yuan, Yan Yan, Milan Sonka, and Tianbao Yang.
\newblock Large-scale robust deep auc maximization: A new surrogate loss and
  empirical studies on medical image classification.
\newblock In \emph{Proceedings of the IEEE/CVF International Conference on
  Computer Vision}, pages 3040--3049, 2021.

\bibitem[Zhang et~al.(2018{\natexlab{a}})Zhang, Lemoine, and
  Mitchell]{zhang2018mitigating}
Brian~Hu Zhang, Blake Lemoine, and Margaret Mitchell.
\newblock Mitigating unwanted biases with adversarial learning.
\newblock In \emph{Proceedings of the 2018 AAAI/ACM Conference on AI, Ethics,
  and Society}, pages 335--340, 2018{\natexlab{a}}.

\bibitem[Zhang(2021)]{zhang2021measuring}
Shuo Zhang.
\newblock Measuring algorithmic bias in job recommender systems: An audit study
  approach.
\newblock 2021.

\bibitem[Zhang et~al.(2018{\natexlab{b}})Zhang, Liu, Zhou, and
  Yang]{zhang2018faster}
Xiaoxuan Zhang, Mingrui Liu, Xun Zhou, and Tianbao Yang.
\newblock Faster online learning of optimal threshold for consistent f-measure
  optimization.
\newblock \emph{Advances in Neural Information Processing Systems}, 31,
  2018{\natexlab{b}}.

\bibitem[Zhang and Sabuncu(2018)]{zhang2018generalized}
Zhilu Zhang and Mert Sabuncu.
\newblock Generalized cross entropy loss for training deep neural networks with
  noisy labels.
\newblock \emph{Advances in neural information processing systems}, 31, 2018.

\bibitem[Zunino et~al.(2021)Zunino, Bargal, Volpi, Sameki, Zhang, Sclaroff,
  Murino, and Saenko]{zunino2021explainable}
Andrea Zunino, Sarah~Adel Bargal, Riccardo Volpi, Mehrnoosh Sameki, Jianming
  Zhang, Stan Sclaroff, Vittorio Murino, and Kate Saenko.
\newblock Explainable deep classification models for domain generalization.
\newblock In \emph{Proceedings of the IEEE/CVF Conference on Computer Vision
  and Pattern Recognition}, pages 3233--3242, 2021.

\end{thebibliography}

\newpage

\section*{Appendix}

\subsection{MLP Network Structures}
\label{sec:mlp_structure}
To gain the feature representations, we use a three layer MLP for both Adult and MEPS datasets. The input and hidden layers are following up with a ReLU activation layer and a 0.2 drop out layer, respectively.
The input size is 120 for Adult dataset and 138 for the MEPS dataset. The hidden size is 50 for both datasets. 
After that, we use a two layer classification head with a ReLU and 0.2 drop out layer for the second stage training prediction.

\subsection{Ablation Stuides of SCRAAN}
\label{sec:aba_study_scraan}
$\gamma$ and $\tau$  are two key paramters for SCRAAN. $\tau$ is the key hyperparameter  to control the pairwise robust weights aggregation for RAAN. $\gamma$ is designed for the stability and theoretical guarantees of Algorithm~\ref{alg:SCRAAN}. We provide ablation studies for the two parameters independently. 

To analyze the robustness of Algorithm~\ref{alg:SCRAAN} in terms of $\gamma$,
we report $\Delta$EO, $\Delta$DP given the accuracy 85.3 for the Adam-style SCRAAN and 84.95 for the SGD-style SCRAAN on Adult dataset in Figure~\ref{fig:gamma_robust} by varing $\gamma\in\{0.1:0.1:0.9\}$ and fixing $\tau = 0.9$. It is obvious to see that both SGD-style SCRAAN and Adam-style SCRAAN are robust enough to have valid fairness evaluations.

Similarly, for the parameter $\tau$, we report $\Delta$EO, $\Delta$DP of Adam-style SCRAAN to achieve accuracy 85.3 on Adult dataset by varing $\tau = \{0.1:0.2:1.9\}$ with $\gamma = 0.5$. We can see that by hypertuning $\tau$ in a reason range, we are able to find a $\tau$ achieves lowest $\Delta$EO and  $\Delta$DP at the same time.

 \begin{figure}[h]
\centering
\begin{minipage}[c]{0.64\textwidth}
\centering
 \includegraphics[width = 0.48\linewidth]{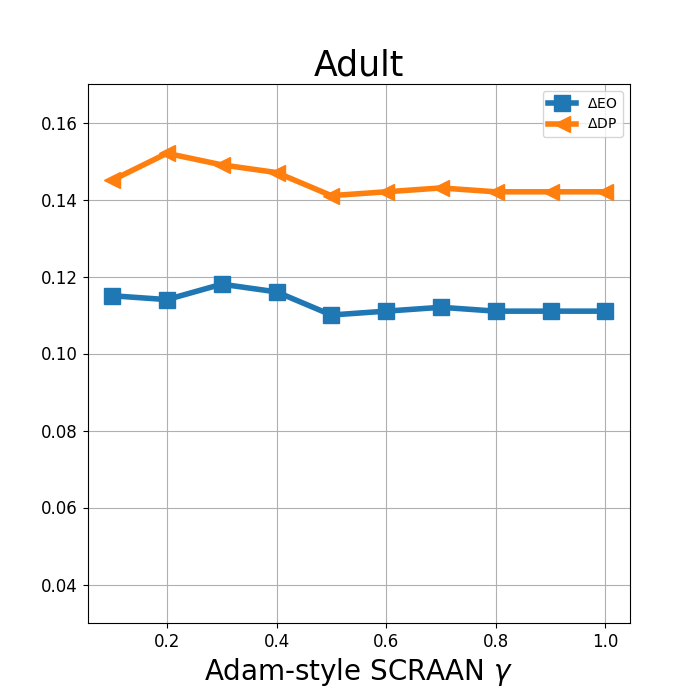}
     \includegraphics[width = 0.48\linewidth]{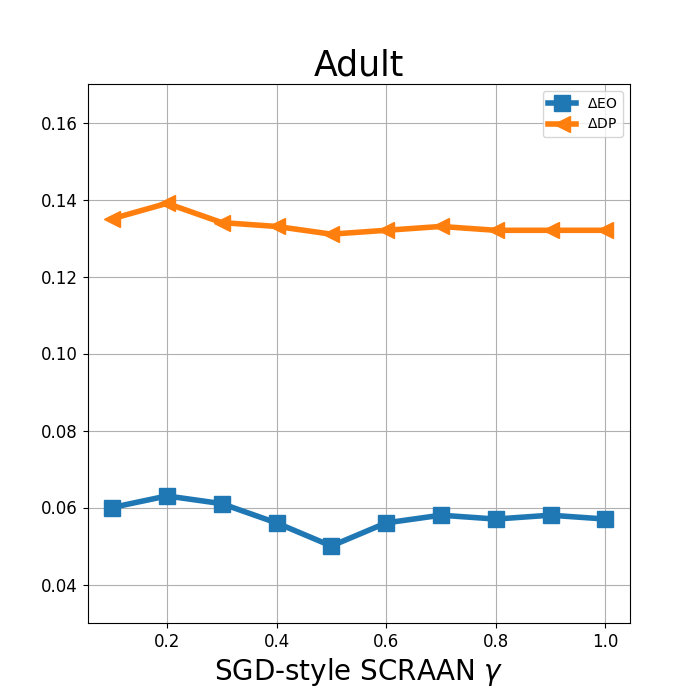}
     \caption{Robustness of $\gamma$.}
    \label{fig:gamma_robust}
\end{minipage}
\begin{minipage}[c]{0.34\textwidth}
\centering
    \includegraphics[width = 0.95\linewidth]{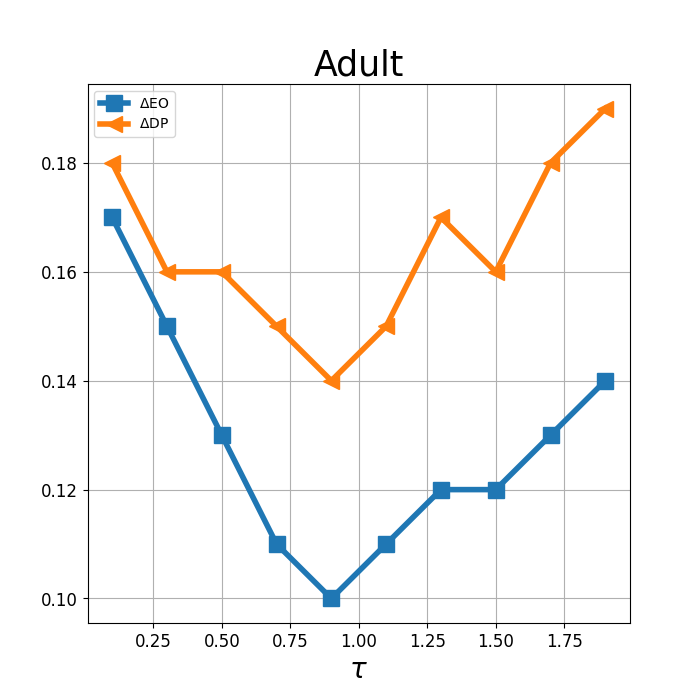}
    \caption{Influence of $\tau$}
 \label{fig:MEPS-ABA}
\end{minipage}
\hfill
\end{figure}

\subsection{More Experimental Results of SGD-style SCRAAN}
\label{sec:SGD-style}

Here we provide the SGD-style SCRAAN experimental results on the Adult dataset. We can see that our methods are better than the baselines which is consistent with Adam-style SCRAAN.  

 \begin{figure}[h]
     \centering
      \includegraphics[width = 0.3\linewidth]{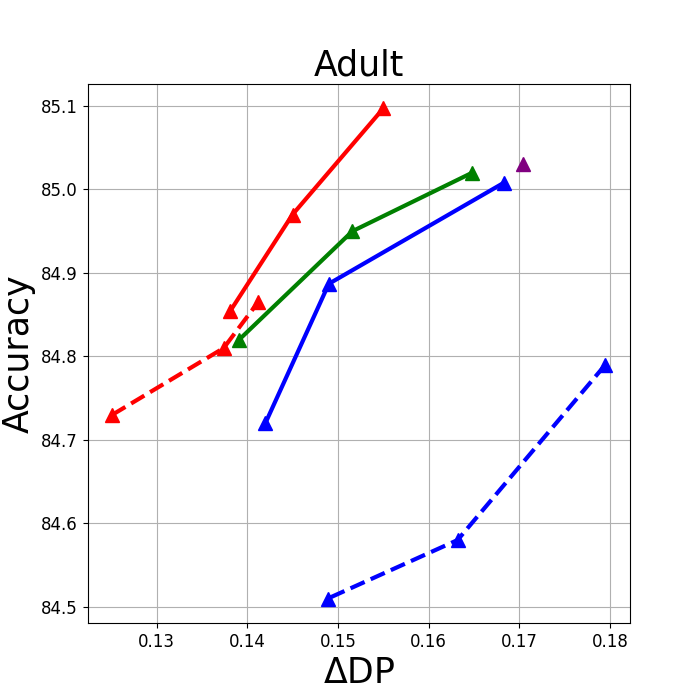}
     \includegraphics[width = 0.3\linewidth]{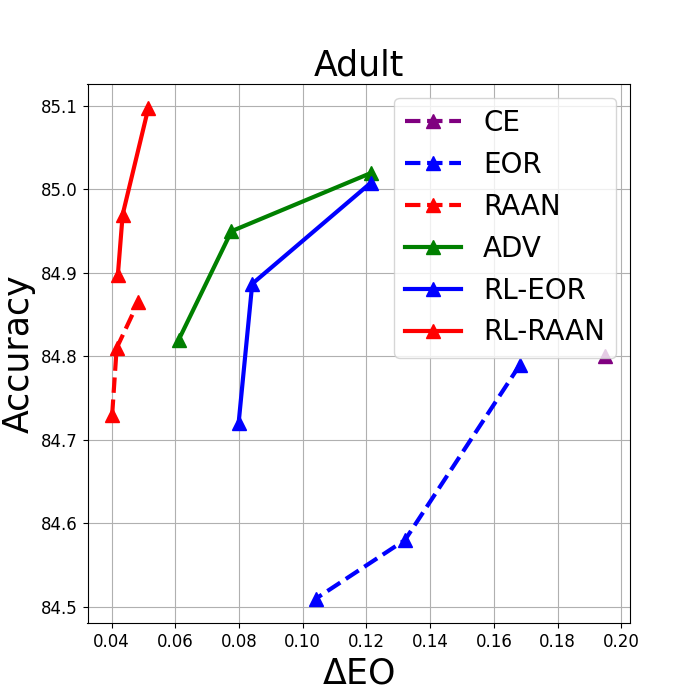}
     \caption{The $\Delta$DP and $\Delta$EO on the Adult dataset optimized by SGD-Style SCRAAN.}
  \label{fig:SGD-results}
 \end{figure}

\subsection{The Derivation of RAAN Objective, Equation~(\ref{eqn:RAAN}), in Section~\ref{sec:RAAN}}
\label{sec:deriv_RAAN}
Given the pairwise weights between each sample $i\sim\D$ and its ANN, i.e, equation~(\ref{eqn:l_i_lbl}),~(\ref{eqn:nb}). We have the following loss by averaging over all samples within the same protected groups, attributes and classes, wee have the following average neighbourhood robust loss.

Rewriting equation~(\ref{eqn:RAAN})
    \begin{align*}
      &\frac{1}{C}\sum\limits_{c=1}^C \frac{1}{|A|}\sum\limits_{a=1}^{|A|}\frac{1}{| \D^c_a|}  \sum\limits_{i=1}^{|\D^c_a|} \underbrace{\sum\limits_{j\in \P_i} p^{\text{AAN}}_{ij}\ell(\w;\x_j,c, a_j)}_{\ell_i^{\text{AAN}}}
    \end{align*}
where $p^{\text{AAN}}_{ij} =   \frac{\exp(\frac{\z_i^\top(\w_f)\z_j(\w_f)}{\tau})}{\sum\limits_{k\in \P_i}\exp(\frac{\z_i^\top(\w_f)\z_k(\w_f)}{\tau})}$. To start with, $p^{\text{AAN}}_{ij}$ is derived from the constraint robust pairwise objective in Equation~(\ref{eqn:nb}),
 \begin{align*}
     \max\limits_{\p^{\text{AAN}}_i\in \Delta^{|\P_i|}} \sum\limits_{j\in\P_i} p^{\text{AAN}}_{ij} \z_i(\w_f)^\top\z_j(\w_f) - \tau \text{KL}(\p^{\text{AAN}}_i\|\frac{1}{|\P_i|}),\bf 1 \in \R^{|\P_i|} 
 \end{align*}
 where $\Delta^{|\P_i|} = \{\p^{\text{AAN}}_i \in\R^{|\P_i|}, \sum_jp^{\text{AAN}}_{ij} = 1, 0\leq  p^{\text{AAN}}_{ij} \leq 1\}$. Note the expression of  $\text{KL}(\p^{\text{AAN}}_i\|\frac{1}{|\P_i|}) = \sum_j p_{ij} \log (|\P_i|p_{ij}) = \sum_jp_{ij} \log (p_{ij})+\log(|\P_i|)$. There are three constrains to handle, i.e., $\sum_jp^{\text{AAN}}_{ij} = 1,  p^{\text{AAN}}_{ij} \geq 0, \ \text{and}\ \ p^{\text{AAN}}_{ij}\leq 1. $
  Note that the constraint $p^{\text{AAN}}_{ij}\geq 0$ is enforced by the term $p^{\text{AAN}}_{ij}\log(p^{\text{AAN}}_{ij})$, otherwise the above objective will become infinity. As a result, the constraint $p^{\text{AAN}}_{ij}<1$ is automatically satisfied due to $\sum_jp^{\text{AAN}}_{ij}=1$ and $p^{\text{AAN}}_{ij}\geq0$. Hence, we only need to explicitly tackle the constraint $\sum_jp^{\text{AAN}}_{ij}=1$. To this end, we define the following Lagrangian function,
  \begin{align*}
 \tau L(\p^{\text{AAN}}_i, \mu) = -\sum_{j=1}^{|\P_i|} p_{ij}\z_i(\w_f)^\top\z_j(\w_f)  +  \tau(\log |\P_i| + \sum_{j=1}^{|\P_i|} p^{\text{AAN}}_{ij} \log (p^{\text{AAN}}_{ij})) + \mu(\sum_j p^{\text{AAN}}_{ij}- 1) 
\end{align*}
where $\mu$ is the Lagrangian multiplier for the constraint  $\sum_jp^{\text{AAN}}_{ij}=1$. The optimal solutions satisfy the KKT conditions:  
\begin{align*}
    &- \z_i(\w_f)^\top\z_j(\w_f)  +  \tau (\log (p^{\text{AAN}}_{ij}(\mathbf w)) + 1) + \mu  = 0, \\
    &\sum_jp^{\text{AAN}}_{ij}=1
\end{align*}
From the first equation, we can derive $p^{\text{AAN}}_{ij} \propto \exp(\frac{\z_i(\w_f)^\top\z_j(\w_f)}{\tau})$. Then according to the second equation, we can conclude that
\begin{align*}
 p^{\text{AAN}}_{ij} = \frac{\exp(\frac{\z_i(\w_f)^\top\z_j(\w_f)}{\tau})}{\sum_{j\in\P_i} \exp(\frac{\z_i(\w_f)^\top\z_j(\w_f)}{\tau})} =  \frac{\exp(\frac{\z_i^\top(\w_f)\z_j(\w_f)}{\tau})}{\sum\limits_{k\in \P_i}\exp(\frac{\z_i^\top(\w_f)\z_k(\w_f)}{\tau})}
\end{align*}

Next, we derive the second equivalence in the robust objective, Equation~(\ref{eqn:RAAN}).

\begin{align*}
 \text{RAAN(\w)} :=\ \   &
 \frac{1}{C}\sum\limits_{c=1}^C \frac{1}{A}\sum\limits_{a=1}^{A} \frac{1}{| \D_a^c|}\sum\limits_{i=1}^{|\D^c_a|} \bigg(\sum\limits_{j\in\P_i} \underbrace{\frac{\exp(\frac{\z_i^\top(\w_f)\z_j(\w_f)}{\tau})}{\sum\limits_{k\in\P_i}\exp(\frac{\z_i^\top(\w_f)\z_k(\w_f)}{\tau})}}_{p^{\text{AAN}}_{ij}}\ell(\w;\x_j, c, a)\bigg ) \\
:=\ \ &\frac{1}{CA}\sum\limits_{c=1}^C \sum\limits_{j\in\D^c}\underbrace{\bigg(\sum\limits_{i\in\P_j} \frac{1}{| \D_{a_i}^{y_i}|} \frac{\exp(\frac{\z_i^\top(\w_f)\z_j(\w_f)}{\tau})}{\sum\limits_{k\in \P_i}\exp(\frac{\z_i^\top(\w_f)\z_k(\w_f)}{\tau})}\bigg) }_{p^{\text{AAN}}_{j}}\ell(\w;\x_j, y_j, a_j) \\
:=\ \  &\frac{1}{CA}\sum\limits_{j\in \D}\underbrace{\bigg(\sum\limits_{i\in\P_j} \frac{1}{|\P_j|} \frac{\exp(\frac{\z_i^\top(\w_f)\z_j(\w_f)}{\tau})}{\sum\limits_{k\in \P_i}\exp(\frac{\z_i^\top(\w_f)\z_k(\w_f)}{\tau})}\bigg) }_{p^{\text{AAN}}_{j}}\ell(\w;\x_j, y_j, a_j) \\
\Longleftrightarrow := & \ \  \frac{1}{CA}\sum\limits_{j=1}^{|\D|} p_j^{\text{AAN}}\ell(\w;\x_j, y_j,a_j) 
\end{align*}
We finish the derivation. Therefore, RAAN combines the information from the embedding space $p^{\text{AAN}}_{j}$ to promote a more uniform embedding of the classification head.

\subsection{Theoretical Analysis}
\label{sec:thm_drivation}

To derive the theoretical analysis, we write the pairwise $\text{RAAN}(\w)$, i.e, the first equivalence in Equation~(\ref{eqn:RAAN})
\begin{align*}
\text{RAAN(\w)} :=\ \   &
 \frac{1}{C}\sum\limits_{c=1}^C \frac{1}{A}\sum\limits_{a=1}^{A} \frac{1}{| \D_a^c|}\sum\limits_{i=1}^{|\D^c_a|} \bigg(\sum\limits_{j\in\P_i} \underbrace{\frac{\exp(\frac{\z_i^\top(\w_f)\z_j(\w_f)}{\tau})}{\sum\limits_{k\in\P_i}\exp(\frac{\z_i^\top(\w_f)\z_k(\w_f)}{\tau})}}_{p^{\text{AAN}}_{ij}}\ell(\w;\x_j, c, a)\bigg )
 \end{align*}
We write it as a general compositional form $R(\w)$,
\begin{align*}
    \text{R}(\w) = \frac{1}{n}\sum\limits_{\x_i\in \D} f(g_{\x_i}(\w)) = \E_{\x_i\in \D}[f(g_{\x_i}(\w))]
\end{align*}
where $f(g) = \frac{g_1}{g_2}$, and $g_{\x_i}(\w) =\frac{n}{AC}\E_{\x_j\in\D} [\exp(\frac{\z_i(\w_f)^\top\z_j(\w_f)}{\tau})\ell_j(\w)\I(\x_j\in\P_i), \exp(\frac{\z_i(\w_f)^\top\z_j(\w_f)}{\tau})\I(\x_j\in\P_i)]^\top=\frac{n}{AC}\E_{\x_j\in\P_i} [\exp(\frac{\z_i(\w_f)^\top\z_j(\w_f)}{\tau})\ell_j(\w), \exp(\frac{\z_i(\w_f)^\top\z_j(\w_f)}{\tau})]^\top$, $\forall\  \tau \neq 0$, and $\ell_j(\w) = \ell_j(\w;\x_j, y_j, a_j)$.
Our theoretical analysis follows the same framework as SOAP in~\citep{qi2021stochastic}. Next, we first introduce the assumptions and provide a lemma to guarantee that $R(\cdot)$ is smooth.

\begin{ass}
\label{ass:1}
Assume that
    (a) there exists $\Delta_1$ such that $R(\w_1) - \min_\w R(\w)\leq \Delta_1$;
    (b) there exist $M>0$ such that $\ell(\w; \x_j, y_j, a_j)\leq M$ and $\ell(\w; \x_j, y_j, a_j)$ is $C_l$-Lipscthiz continuous and $L_l$-smooth with respect to $\w$ for any $\x_j\in\mathcal D$;
    (c) there exists $V>0$ such that  $\E_{\x_j\sim \D}[\| g(\w; \x_i, \x_j) - g_{\x_i}(\w)\|^2] \leq V$, and  $\E_{\x_j\sim\D}[\| \nabla g(\w; \x_i, \x_j) - \nabla g_{\x_i}(\w)\|^2] \leq V$ for any $\x_i$. 
\end{ass}

\begin{lem}
\label{lem:1}
Suppose Assumption~\ref{ass:1} holds, $\tau\geq \tau_0$,
$\text{maxE}=$ $\max\{ \exp(1/\tau_0),\exp(-1/\tau_0)\}, \text{minE} = \min\{$ $\exp(1/\tau_0),\exp(-1/\tau_0)\}$, there exists $u_0\geq \frac{n\cdot\text{minE}}{|\P_i|AC}$, $u_1= \frac{nM\cdot \text{maxE}}{AC}$ and $u_2 = \frac{n\cdot\text{maxE}}{AC}$ such that $g_{\x_i}(\w)\in\Omega = \{\u\in\R^2, 0\leq [\u]_1\leq u_1, u_0\leq[\u]_2\leq u_2\}$, $\forall \x_i\in\D$. In addition, there exists $L > 0$ such that $R(\cdot)$ is $L$-smooth.
\end{lem}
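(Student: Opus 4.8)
\textbf{Proof proposal for Lemma~\ref{lem:1}.}

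The plan is to first establish the bounded domain $\Omega$ containing all $g_{\x_i}(\w)$ by directly bounding the two coordinates of $g_{\x_i}(\w)$, then to use this together with Assumption~\ref{ass:1} to prove $L$-smoothness of $R(\cdot) = \E_{\x_i}[f(g_{\x_i}(\w))]$ via the chain rule. First I would recall that $g_{\x_i}(\w) = \frac{n}{AC}\E_{\x_j\in\P_i}[\exp^\tau_{ij}(\w_f)\ell_j(\w),\ \exp^\tau_{ij}(\w_f)]^\top$. Since $\ell$ is bounded by $M$ (Assumption~\ref{ass:1}(b)) and $\ell\geq 0$, and since the normalized inner product $\z_i^\top\z_j$ appearing in the exponent satisfies $\z_i^\top\z_j\in[-1,1]$ (the representations being normalized, so that with $\tau\geq\tau_0$ we have $\exp^\tau_{ij}(\w_f)\in[\text{minE},\text{maxE}]$), the first coordinate $[g_{\x_i}(\w)]_1$ lies in $[0,\ \frac{nM\cdot\text{maxE}}{AC}] = [0,u_1]$ and the second coordinate $[g_{\x_i}(\w)]_2$ lies in $[\frac{n\cdot\text{minE}}{|\P_i|AC},\ \frac{n\cdot\text{maxE}}{AC}]$; choosing $u_0$ no larger than the stated lower bound and $u_2 = \frac{n\cdot\text{maxE}}{AC}$ gives $g_{\x_i}(\w)\in\Omega$. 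The key point is that $[g_{\x_i}(\w)]_2$ is bounded \emph{away from zero}, which is what makes $f(s) = s_1/s_2$ well-behaved on $\Omega$.

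Next, on the compact set $\Omega$ the outer function $f(s)=s_1/s_2$ is smooth with bounded first and second derivatives: $\nabla f = (1/s_2,\ -s_1/s_2^2)$ and the Hessian entries are $O(1/u_0^2)$ and $O(u_1/u_0^3)$, so $f$ is Lipschitz and smooth on $\Omega$ with constants depending only on $u_0,u_1,u_2$. Similarly, from Assumption~\ref{ass:1}(b) the map $\w\mapsto g_{\x_i}(\w)$ has bounded gradient and bounded ``second derivative'' in the relevant sense: each component is a ($\P_i$-average of) products of $\exp^\tau_{ij}(\w_f)$ with either $\ell_j(\w)$ or $1$; since $\ell_j$ is $C_l$-Lipschitz and $L_l$-smooth, $\ell_j$ and hence $\z_i,\z_j$ (for RL-RAAN, the encoder outputs are assumed smooth as part of the network) enter smoothly, so $g_{\x_i}$ is Lipschitz with some constant $C_g$ and smooth with some constant $L_g$, uniformly in $\x_i$. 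I would then invoke the standard composition fact: if $f$ is $C_f$-Lipschitz and $L_f$-smooth on a set containing the range of $g_{\x_i}$, and $g_{\x_i}$ is $C_g$-Lipschitz and $L_g$-smooth, then $f\circ g_{\x_i}$ is smooth with constant $L_f C_g^2 + C_f L_g$. Averaging over $\x_i\in\D$ preserves smoothness with the same constant, so $R(\cdot)$ is $L$-smooth with $L := L_f C_g^2 + C_f L_g$.

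The main obstacle — and the place where I would be most careful — is the smoothness of $g_{\x_i}(\w)$ in the RL-RAAN case, where $\w_f$ is trainable and the exponent $\exp(\z_i(\w_f)^\top\z_j(\w_f)/\tau)$ depends on $\w$ through a product of two encoder outputs, so that even showing boundedness of $\nabla_\w g_{\x_i}$ requires controlling $\|\z_i(\w_f)\|$, $\|\nabla_{\w_f}\z_i(\w_f)\|$ and their second-order analogues. This needs either an implicit assumption that the feature encoder $F_{\w_f}$ and its derivatives are bounded and Lipschitz (e.g.\ that representations are normalized and the network is smooth), which is what makes $\z_i^\top\z_j\in[-1,1]$ and the exponent smooth, or it must be folded into Assumption~\ref{ass:1}. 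I would state this explicitly and then the rest is the routine composition/product-rule bookkeeping sketched above; for RAAN (where $\w_f$ is fixed) the exponent is a constant in $\w$ and this difficulty disappears entirely, leaving only the elementary bounds on $f$ over $\Omega$.
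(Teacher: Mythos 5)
Your proposal follows essentially the same route as the paper: bound the two coordinates of $g_{\x_i}(\w)$ using $\z_i^\top\z_j\in[-1,1]$ (from the normalized representations) and $0\leq\ell\leq M$ to get $g_{\x_i}(\w)\in\Omega$, then establish Lipschitz/smoothness constants $C_f,L_f$ for $f$ on $\Omega$ and $C_g,L_g$ for $g_{\x_i}$, and conclude $R$ is $L$-smooth with $L=C_fL_g+L_fC_g^2$ via the standard composition bound. Your caveat about the RL-RAAN case is well taken — the paper's bound on $\|\nabla^2_\w g_i(\w)\|$ implicitly relies on the encoder $F_{\w_f}$ and its normalization being smooth with bounded derivatives, which Assumption~\ref{ass:1} does not state explicitly — but this does not change the structure of the argument, which matches the paper's.
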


\begin{proof}
We first prove the first part $g_i(\w)\in\Omega$. Due to the definition of $g_{\x_i}(\w) =\frac{n}{AC}\E_{\x_j\in\D} [\exp(\frac{\z_i(\w_f)^\top\z_j(\w_f)}{\tau})$ $\ell_j(\w)\I(\x_j\in\P_i), \exp(\frac{\z_i(\w_f)^\top\z_j(\w_f)}{\tau})\I(\x_j\in\P_i)]^\top$.
As $\z_i(\w_f) = \frac{F(\w_f,\x_i)}{\|F(\w_f,\x_i)\|}$, $-1\leq \z_i(\w_f)^\top\z_j(\w_f)\leq 1$,  $\min\{(\exp(\frac{-1}{\tau}), \exp(\frac{1}{\tau})\} \leq \exp(\frac{\z_i(\w_f)^\top\z_j(\w_f)}{\tau}) \leq \max\{(\exp(\frac{-1}{\tau}), \exp(\frac{1}{\tau})\}$
Therefore, $0\leq [g_{\x_i}(\w)]_1\leq\frac{n\max\{\exp(1/\tau_0),\exp(-1/\tau_0)\}M}{AC}$ and $\frac{n\min\{\exp(1/\tau_0),\exp(-1/\tau_0)\}}{|\P_i|AC} \leq [g_{\x_i}(\w)]_2 \leq \frac{n\max\{\exp(1/\tau_0),\exp(-1/\tau_0)\}}{AC}\ \ \ \ \forall i, j$.  To this end, we need to use the following Lemma~\ref{lem:R-smooth} and the proof will be presented.
\end{proof}

\begin{lem}\label{lem:R-smooth}
Let $L_f =  \frac{4(u_0 + u_1)}{u_0^3} , C_f = \frac{u_0+u_1}{u_0^2} , L_g =\frac{10n\text{maxE}}{AC}(C_l+L_l), C_g =\frac{n\text{maxE}}{AC}(C_\ell+2M) $, then $f(\u)$ is a $L_f$-smooth, $C_f$-Lipschit continuous function for any $\u\in\Omega$, and $\forall i\in [1,\cdots, n]$, $g_{\x_i}$ is a $L_g$-smooth, $C_g$-Lipschitz continuous function.
\end{lem}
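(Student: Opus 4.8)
\textbf{Proof proposal for Lemma~\ref{lem:R-smooth}.}

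The plan is to establish the four regularity claims (Lipschitz continuity and smoothness of $f$ on $\Omega$, and Lipschitz continuity and smoothness of each $g_{\x_i}$) by direct computation, exploiting the explicit formula $f(\u) = [\u]_1/[\u]_2$ together with the box constraints $0\le[\u]_1\le u_1$, $u_0\le[\u]_2\le u_2$ guaranteed by Lemma~\ref{lem:1}, and the boundedness and smoothness of $\ell_j$ from Assumption~\ref{ass:1}(b) together with $\|\z_i(\w_f)\|=1$. Once all four constants are in hand, the final assertion ``$R(\cdot)$ is $L$-smooth'' for a suitable $L$ (the claim deferred from the proof of Lemma~\ref{lem:1}) follows from the standard chain-rule estimate for a composition $f\circ g$: if $f$ is $C_f$-Lipschitz and $L_f$-smooth on a convex set containing the range of $g$, and $g$ is $C_g$-Lipschitz and $L_g$-smooth, then $f\circ g$ is $(C_f L_g + L_f C_g^2)$-smooth, and averaging over $\x_i\in\D$ preserves this; so one sets $L = C_f L_g + L_f C_g^2$.

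For $f$: I would compute $\nabla f(\u) = \big(1/[\u]_2,\; -[\u]_1/[\u]_2^2\big)^\top$, so that on $\Omega$ we have $\|\nabla f(\u)\| \le 1/u_0 + u_1/u_0^2 = (u_0+u_1)/u_0^2 =: C_f$. For the smoothness constant I would bound the entries of the Hessian $\nabla^2 f(\u) = \begin{pmatrix} 0 & -1/[\u]_2^2 \\ -1/[\u]_2^2 & 2[\u]_1/[\u]_2^3\end{pmatrix}$, getting operator-norm bound of order $1/u_0^2 + u_1/u_0^3 \le 2(u_0+u_1)/u_0^3$, and fold the numerical constants to reach $L_f = 4(u_0+u_1)/u_0^3$; the factor $4$ is slack to absorb the crude entrywise-to-operator-norm passage. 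For $g_{\x_i}(\w) = \frac{n}{AC}\E_{\x_j\in\P_i}[\exp^\tau_{ij}(\w_f)\ell_j(\w),\, \exp^\tau_{ij}(\w_f)]^\top$: I would differentiate inside the expectation, use $|\exp^\tau_{ij}|\le \text{maxE}$, $|\ell_j|\le M$, $\|\nabla\ell_j\|\le C_l$, and the fact that $\nabla_{\w_f}\exp^\tau_{ij}(\w_f) = \frac{1}{\tau}\exp^\tau_{ij}(\w_f)(\z_i+\z_j)$ has norm $\le \frac{2}{\tau}\text{maxE}$ (bounded since $\tau\ge\tau_0$, and here one absorbs the $1/\tau_0$ factor into the numerical constant $10$), to get the Lipschitz bound $C_g = \frac{n\,\text{maxE}}{AC}(C_\ell + 2M)$; a second differentiation, again controlling each product term via the product rule and using $L_l$-smoothness of $\ell_j$ plus the boundedness of the first derivatives of $\exp^\tau_{ij}$, yields $L_g = \frac{10 n\,\text{maxE}}{AC}(C_l + L_l)$, where the constant $10$ is deliberately generous to cover all the cross terms from the product rule.

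The main obstacle is bookkeeping rather than conceptual: one must be careful that in the RL-RAAN case $\widetilde\w = \w$ so the gradient of $\exp^\tau_{ij}(\w_f)$ genuinely contributes, and that the $1/\tau$ factors appearing in those derivatives are uniformly bounded by the assumption $\tau\ge\tau_0$ (this is exactly why the statement is phrased with $\text{maxE}=\max\{\exp(1/\tau_0),\exp(-1/\tau_0)\}$ and why the numerical constants are chosen loosely). A secondary subtlety is that the composition estimate requires $\Omega$ to be convex and to contain the range of every $g_{\x_i}$ — both are provided by Lemma~\ref{lem:1} — so that $f$'s smoothness on $\Omega$ can legitimately be invoked along the segment joining $g_{\x_i}(\w)$ and $g_{\x_i}(\w')$. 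I would close by assembling $\|\nabla R(\w)-\nabla R(\w')\| \le \frac1n\sum_{\x_i}\|\nabla(f\circ g_{\x_i})(\w) - \nabla(f\circ g_{\x_i})(\w')\| \le (C_f L_g + L_f C_g^2)\|\w-\w'\|$, completing both this lemma and the postponed part of Lemma~\ref{lem:1}.
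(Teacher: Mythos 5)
Your proposal is correct and follows essentially the same route as the paper: explicit computation of $\nabla f$ and $\nabla^2 f$ with the box bounds on $\Omega$ to get $C_f$ and $L_f$, differentiation of $g_{\x_i}$ under the expectation using the product rule together with $|\exp^\tau_{ij}|\le\text{maxE}$, $|\ell_j|\le M$, $\|\nabla\ell_j\|\le C_l$, $\|\nabla^2\ell_j\|\le L_l$ and $\|\z\|=1$ to get $C_g$ and $L_g$, and then the standard composition estimate $L = C_f L_g + L_f C_g^2$. The only visible difference is that you track the $1/\tau$ factor from $\nabla\exp^\tau_{ij}$ explicitly and absorb it via $\tau\ge\tau_0$, whereas the paper silently omits it; your treatment is slightly more careful but does not change the structure of the argument.
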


\begin{equation}
    \begin{aligned}
     f(\u) & =\frac{[\u]_1}{[\u]_2}\quad, \nabla_{\u} f(\u)  = \bigg(\frac{1}{[\u]_2},-\frac{[\u]_1}{([\u]_2)^2}\bigg)^\top, \quad\nabla_{\u}^2 f(\u)= \left(\begin{array}{c} 0,-\frac{1}{([\u]_2)^2}\\
   -\frac{1}{([\u]_2)^2}, \frac{2[\u]_1}{([\u]_2)^3}
    \end{array}\right)
    \end{aligned}
\end{equation}
Due to the assumption that $\ell(\w;\x_i)$ is a $L_l$-smooth, $C_l$-Lipschitz continuous function, and $\|\z_i(\w_f)\|^2= 1, -1\leq \z_i^\top(\w_f)\z_(\w_f)\leq 1$, we have
\begin{equation}
    \begin{aligned}
&\| \nabla^2_\w g_i(\w) \|  = \|\frac{n}{AC|\P_i|}\sum\limits_{j=1}^{|\P_i|}[\exp(\frac{\z_i(\w_f)^\top\z_j(\w_f)}{\tau})\nabla^2_{\w}\ell_j(\w;\x_j,c_j,a_j)\\
& +2(\z_i(\w_f) +\z_j(\w_f))\exp(\frac{\z_i(\w_f)^\top\z_j(\w_f)}{\tau})\nabla_\w\ell_j(\w;\x_j,c_j,a_j) \\
&+((\z_i(\w_f) +\z_j(\w_f))^2  + 2)\exp(\frac{\z_i(\w_f)^\top\z_j(\w_f)}{\tau})\ell_j(\w;\x_j,c_j,a_j) \|] \\
&\overset{(a)}{\leq} \frac{n}{AC}\frac{1}{|\P_i|}\sum\limits_{j=1}^{|\P_i|} \|[\exp(\frac{\z_i(\w_f)^\top\z_j(\w_f)}{\tau})\nabla^2_{\w}\ell_j(\w;\x_j,c_j,a_j)\\
& +2(\z_i(\w_f) +\z_j(\w_f))\exp(\frac{\z_i(\w_f)^\top\z_j(\w_f)}{\tau})\nabla_\w\ell_j(\w;\x_j,c_j,a_j) \\
&+((\z_i(\w_f) +\z_j(\w_f))^2  + 2)\exp(\frac{\z_i(\w_f)^\top\z_j(\w_f)}{\tau})\ell_j(\w;\x_j,c_j,a_j) \| \\
&\leq \frac{n\text{maxE}}{AC} (L_l + 10C_l) \leq \frac{10n\text{maxE}}{AC}(C_l+L_l) = L_g
    \end{aligned}
\end{equation}
where $(a)$ applies the convexity of $\|\cdot\|$ and $\|a +b \|\leq \|a \| + \|b\|$. Similarly, the following equations hold in terms of the continuous of inner objective $g_{\x_i}$, 
\begin{equation}
    \begin{aligned}
   \| \nabla_\w g_{\x_i}(\w) \|& = \|\frac{n}{AC|\P_i|}\sum\limits_{j=1}^{|\P_i|}[\exp(\frac{\z_i(\w_f)^\top\z_j(\w_f)}{\tau})\nabla_{\w}\ell_j(\w;\x_j,c_j,a_j)\\
&+(\z_i(\w_f) +\z_j(\w_f))\exp(\frac{\z_i(\w_f)^\top\z_j(\w_f)}{\tau})\ell_j(\w;\x_j,c_j,a_j) \|] \\
&\leq \frac{n}{AC}(\text{maxE}C_\ell +2\text{maxE}M)) =\frac{n\text{maxE}}{AC}(C_\ell+2M)  = C_g  
    \end{aligned}
\end{equation}

\begin{equation}
    \begin{aligned}
    & \| \nabla f(\u) \| \leq  \sqrt{\frac{1}{[\u]_2^2} + \frac{[\u]_1^2}{[\u]_2^4}}
\leq \frac{u_0+u_1}{u_0^2} = C_f
\\
& \|\nabla^2 f(\u) \| \leq\sqrt{ \frac{2}{[\u]_2^4} + 4\frac{[\u]^2_1}{[\u]_2^6}} \leq \frac{4(u_0 + u_1)}{u_0^3} = L_f
    \end{aligned}
\end{equation}

 Since $P(\w) = \frac{1}{n}\sum_{\x_i\in\D}f(g_i(\w))$.  We first show $R_i(\w) = f(g_i(\w))$ is smooth. To see this, 
\begin{align*}
    &\|\nabla R_i(\w) - \nabla R_i(\w')\| = \|\nabla g_i(\w)^{\top}\nabla f(g_i(\w)) - \nabla g_i(\w')^{\top}\nabla f(g_i(\w'))\|\\
    & \leq \|\nabla g_i(\w)^{\top}\nabla f(g_i(\w)) -  \nabla g_i(\w')^{\top}\nabla f(g_i(\w))\|\\
    & + \|\nabla g_i(\w')^{\top}\nabla f(g_i(\w)) - \nabla g_i(\w')^{\top}\nabla f(g_i(\w'))\|\\
    & \leq C_fL_g \|\w - \w'\| + C_gL_fC_g\|\w - \w'\| = ( C_fL_g  + L_fC_g^2)\|\w - \w'\|. 
\end{align*}
Hence $R(\w)$ is also  $L=( C_fL_g  + L_fC_g^2)$-smooth.

\subsection{Proof of Theorem~\ref{thm:main-Adam} (SCRAAN with SGD-Style Update)}

\begin{lem}
\label{lem:lem-SGD}
With $\alpha\leq 1/2$, running $T$ iterations of SCRAAN (SGD-style) updates, we have
\begin{align*}
  \frac{\alpha}{2}\E[\sum_{t=1}^T\|\nabla R(\w_t)\|^2 ]&\leq  \E[\sum_t( R(\w_t) - R(\w_{t+1})) ]+ \frac{\alpha C_1}{2}\E[\sum_{t=1}^T\|g_{i_t}(\w_t)-\u_{i_t}\|^2] + \alpha^2T C_2,
\end{align*}
where $i_t$ denotes the index of the sampled positive data at iteration $t$, $C_1$ and $C_2$ are proper constants. 
\end{lem}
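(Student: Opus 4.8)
The plan is to run the standard biased-stochastic-gradient descent argument on $R$, using the $L$-smoothness of $R$ from Lemma~\ref{lem:R-smooth} and routing the bias of the estimator $G(\w_t)$ entirely through the moving-average error $\|g_{i_t}(\w_t)-\u_{i_t}\|$. First I would invoke $L$-smoothness at $\w_{t+1}=\w_t-\alpha G(\w_t)$ to get
\[
R(\w_{t+1})\le R(\w_t)-\alpha\langle\nabla R(\w_t),G(\w_t)\rangle+\frac{L\alpha^2}{2}\|G(\w_t)\|^2 .
\]
Because $\ell$ is bounded and Lipschitz (Assumption~\ref{ass:1}(b)) we have $\|\nabla_{\widetilde\w}\widehat g_{\x_i}(\w_t)\|\le C_g$, and because the coordinate $\u^2_{\x_i}$ is clipped from below by $u_0$ in Algorithm~\ref{alg:2} and otherwise stays inside $\Omega$, we have $\|\nabla f(\u_{\x_i})\|\le C_f$; hence $\|G(\w_t)\|\le C_fC_g=:\bar G$ and the last term is at most $\alpha^2 C_2$ with $C_2=L\bar G^2/2$.

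Next I would take the conditional expectation given the iterate history $\mathcal{F}_t$ and split $G(\w_t)=\widetilde G(\w_t)+\Delta_t$, where $\widetilde G(\w_t)$ is the idealized estimator obtained by replacing every $\u_{\x_i}$ with the true inner value $g_{\x_i}(\w_t)$, and $\Delta_t=G(\w_t)-\widetilde G(\w_t)$. Since the inner stochastic gradient $\nabla_{\widetilde\w}\widehat g_{\x_i}$ is linear in the sampled AAN terms and hence conditionally unbiased for $\nabla_{\widetilde\w}g_{\x_i}$, we have $\E[\widetilde G(\w_t)\mid\mathcal{F}_t]=\nabla R(\w_t)$, so
\[
-\alpha\,\E[\langle\nabla R(\w_t),G(\w_t)\rangle\mid\mathcal{F}_t]=-\alpha\|\nabla R(\w_t)\|^2-\alpha\,\E[\langle\nabla R(\w_t),\Delta_t\rangle\mid\mathcal{F}_t].
\]
For the cross term I would use the $L_f$-smoothness of $f$ on $\Omega$ from Lemma~\ref{lem:R-smooth}, namely $\|\nabla f(\u_{\x_i})-\nabla f(g_{\x_i}(\w_t))\|\le L_f\|\u_{\x_i}-g_{\x_i}(\w_t)\|$, together with $\|\nabla_{\widetilde\w}\widehat g_{\x_i}\|\le C_g$, to obtain $\|\Delta_t\|\le C_gL_f\|\u_{i_t}-g_{i_t}(\w_t)\|$, and then Young's inequality $|\langle\nabla R(\w_t),\Delta_t\rangle|\le\tfrac12\|\nabla R(\w_t)\|^2+\tfrac12\|\Delta_t\|^2$. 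Putting these together yields the per-step bound
\[
\E[R(\w_{t+1})\mid\mathcal{F}_t]\le R(\w_t)-\frac{\alpha}{2}\|\nabla R(\w_t)\|^2+\frac{\alpha C_1}{2}\|\u_{i_t}-g_{i_t}(\w_t)\|^2+\alpha^2 C_2
\]
with $C_1=C_g^2L_f^2$. Rearranging, taking total expectation, and summing over $t=1,\dots,T$ gives the claimed inequality; the hypothesis $\alpha\le 1/2$ is used only to ensure that after the bounded second-order term is absorbed the coefficient in front of $\sum_t\|\nabla R(\w_t)\|^2$ remains at least $\alpha/2$ (equivalently, to keep the $O(\alpha^2)$ contribution from competing with the descent term).

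The main obstacle is precisely this bias control: unlike plain SGD, $G(\w_t)$ is not conditionally unbiased for $\nabla R(\w_t)$, so the entire discrepancy must be absorbed into the term $\|g_{i_t}(\w_t)-\u_{i_t}\|^2$, which is affordable only because $f$ is globally Lipschitz-smooth on the effective domain $\Omega$ (Lemma~\ref{lem:1}) and the inner gradients $\nabla_{\widetilde\w}\widehat g_{\x_i}$ are uniformly bounded. A secondary technical point is checking that the idealized estimator $\widetilde G$ is genuinely conditionally unbiased given how $\widehat\P_i=\P_i\cap\B$ is assembled from the minibatch; this is where the independence built into the sampling scheme and the bounded-variance condition Assumption~\ref{ass:1}(c) enter, and where the precise values of the constants $C_1,C_2$ in the statement are fixed.
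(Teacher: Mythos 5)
Your proposal is correct and follows essentially the same route as the paper: $L$-smoothness of $R$, bounding $\|G(\w_t)\|\le C_gC_f$ to get the $\alpha^2C_2$ term, using that the idealized estimator with $\u_{i_t}$ replaced by $g_{i_t}(\w_t)$ is conditionally unbiased so that $\nabla R(\w_t)=\E_t[\nabla g_{i_t}(\w_t;\xi)^\top\nabla f(g_{i_t}(\w_t))]$, and then Young's inequality together with the Lipschitzness of $\nabla f$ on $\Omega$ to absorb the bias into $\frac{\alpha C_1}{2}\|g_{i_t}(\w_t)-\u_{i_t}\|^2$. The only cosmetic difference is your explicit splitting $G(\w_t)=\widetilde G(\w_t)+\Delta_t$, which the paper performs implicitly inside the inner product, and your remark about where $\alpha\le 1/2$ enters, which the paper likewise does not actually need at this step.
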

Our key contribution is the following lemma that bounds the second term in the above upper bound. 
\begin{lem}\label{lem:cumulative_var}Suppose Assumption \ref{ass:1} holds, with $\u$ initialized inner objective stochastic estimator for every $\x_i\in\D$ we have
\begin{equation}
    \begin{aligned}
     \E[\sum_{t=1}^T\|g_{i_t}(\w_t)-\u_{i_t}\|^2]
     &\leq \frac{n V }{\gamma} + \gamma V T + 2\frac{n^2\alpha^2TC_3}{\gamma^2},
    \end{aligned}
\end{equation}
where $C_3$ is a proper constant. 
\end{lem}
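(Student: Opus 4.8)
I would control the accumulated estimation error of the moving-average sequence by tracking the aggregate potential
$\Xi_t := \sum_{i=1}^n \E\|\u_{\x_i}^{(t)} - g_{\x_i}(\w_t)\|^2$,
where $\u_{\x_i}^{(t)}$ is the $i$-th column of the matrix $\u$ at the start of iteration $t$. Since the sampled index $i_t$ is drawn uniformly from $\D$ and $\u^{(t)},\w_t$ are measurable with respect to the history $\mathcal F_t$, we have $\E[\|g_{i_t}(\w_t)-\u_{i_t}\|^2\mid\mathcal F_t]=\tfrac1n\sum_i\|\u_{\x_i}^{(t)}-g_{\x_i}(\w_t)\|^2$, hence $\E[\sum_{t=1}^T\|g_{i_t}(\w_t)-\u_{i_t}\|^2]=\tfrac1n\E[\sum_{t=1}^T\Xi_t]$. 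It therefore suffices to prove $\E[\sum_t\Xi_t]\le \tfrac{n^2V}{\gamma}+n\gamma VT+\tfrac{n^3\alpha^2TC_3}{\gamma^2}$ up to numerical constants, and divide by $n$.

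The heart of the argument is a one-step contraction $\Xi_{t+1}\le(1-\rho)\Xi_t+b$, derived in two sub-steps. (i) Freeze $\w=\w_t$ and analyze the update of $\u$. For the sampled index $k$, the first coordinate evolves as $\u_{\x_k}^{(t+1)}=(1-\gamma)\u_{\x_k}^{(t)}+\gamma[\widehat g_{\x_k}(\w_t)]_1$ and the second coordinate the same way but with an outer $\max(\cdot,u_0)$; since $g_{\x_k}(\w_t)\in\Omega$ by Lemma~\ref{lem:1} and $x\mapsto\max(x,u_0)$ is the nonexpansive projection onto $[u_0,\infty)$, that projection only decreases the error. The mini-batch inner estimator $\widehat g_{\x_k}(\w_t)$ is (conditionally) unbiased for $g_{\x_k}(\w_t)$ — it averages $\exp^\tau_{kj}(\w_f^t)\ell_j$ over a uniform subsample $\widehat\P_k\subseteq\P_k$, the degenerate case $\widehat\P_k=\emptyset$ being absorbed by $u_0$ — with error second moment bounded by $V$ (Assumption~\ref{ass:1}(c)); the bias--variance split then gives $\E\|\u_{\x_k}^{(t+1)}-g_{\x_k}(\w_t)\|^2\le(1-\gamma)^2\|\u_{\x_k}^{(t)}-g_{\x_k}(\w_t)\|^2+\gamma^2V$, while the unsampled columns are unchanged. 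Averaging over the uniform choice of $i_t$ and using $\gamma\le 1$ yields $\sum_i\E\|\u_{\x_i}^{(t+1)}-g_{\x_i}(\w_t)\|^2\le(1-\tfrac{\gamma}{n})\Xi_t+\gamma^2V$ (for mini-batches of size $B\ge1$ the contraction is only better). (ii) Account for the parameter drift $\w_t\to\w_{t+1}$, during which $\u$ is unchanged but each $g_{\x_i}$ moves by at most $C_g\|\w_{t+1}-\w_t\|$ by the Lipschitz bound of Lemma~\ref{lem:R-smooth}. Young's inequality with parameter $\beta$ gives $\Xi_{t+1}\le(1+\beta)[(1-\tfrac{\gamma}{n})\Xi_t+\gamma^2V]+(1+\tfrac1\beta)C_g^2\,n\,\|\w_{t+1}-\w_t\|^2$, and for the SGD-style update $\|\w_{t+1}-\w_t\|^2=\alpha^2\|G(\w_t)\|^2\le\alpha^2C_3$ because $\u$ stays in $\Omega$, so $\|\nabla f(\u_{\x_i})\|\le C_f$ and $\|\nabla\widehat g_{\x_i}\|\le C_g$, both from Lemma~\ref{lem:R-smooth}. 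Choosing $\beta=\gamma/(2n)$ collapses this to $\Xi_{t+1}\le(1-\tfrac{\gamma}{2n})\Xi_t+2\gamma^2V+\tfrac{4n^2C_g^2C_3\alpha^2}{\gamma}$.

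Finally, with $\rho=\gamma/(2n)$ and $b=2\gamma^2V+4n^2C_g^2C_3\alpha^2/\gamma$, telescoping over $t=1,\dots,T$ gives $\rho\sum_{t=1}^T\Xi_t\le\Xi_1+bT$, i.e.\ $\sum_t\Xi_t\le\tfrac{2n}{\gamma}(\Xi_1+bT)$; the initialization $\u_{\x_i}^{(1)}=\widehat g_{\x_i}(\w_1)$ with Assumption~\ref{ass:1}(c) gives $\Xi_1\le nV$. Dividing by $n$ yields $\E[\sum_{t=1}^T\|g_{i_t}(\w_t)-\u_{i_t}\|^2]\le\tfrac{2nV}{\gamma}+4\gamma VT+\tfrac{8n^2C_g^2C_3\alpha^2T}{\gamma^2}$, which is the claimed bound after absorbing the numerical factors (and $C_g^2$) into $V$ and $C_3$.

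The main obstacle is calibrating the two-sub-step recursion: $\beta$ must be taken of order $\gamma/n$ so the moving-average contraction $(1-\gamma/n)$ survives multiplication by $(1+\beta)$, but then $(1+1/\beta)=O(n/\gamma)$; this factor, compounded with the telescoping factor $1/\rho=O(n/\gamma)$ and the $n$ columns summed inside $\Xi_t$, is exactly what produces the $n^2/\gamma^2$ (respectively $n/\gamma$) scaling of the final bound. A secondary point requiring care is checking the conditional unbiasedness of $\widehat g_{\x_i}(\w_t)$ so that the bias--variance decomposition is legitimate — this is where the uniform-subsampling structure of $\widehat\P_i=\P_i\cap\B$ and the constant $u_0$ come in.
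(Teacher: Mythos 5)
Your proof is correct, but it takes a genuinely different route from the paper's. The paper never forms a global potential: it partitions the iterations $\{1,\dots,T\}$ into $n$ groups according to which sample's column of $\u$ is refreshed, proves a within-group recursion (Lemma~\ref{lem:var=B=1}) in which the error of a single column contracts by $(1-\gamma)$ each time that sample is revisited, pays for the parameter drift accumulated over the random gap between consecutive visits via Lemma~\ref{lem:bound-stale} (the gap is geometric, so $\E[(t^i_k-t^i_{k-1})^2]\le 2n^2$, which is where $n^2\alpha^2$ enters), and then sums the $n$ per-group telescopes using $\sum_i T_i=T$. You instead track the aggregate $\Xi_t=\sum_{i=1}^n\E\|\u_{\x_i}-g_{\x_i}(\w_t)\|^2$, get a per-iteration contraction at rate $\gamma/n$ (only one column is refreshed per step in expectation), absorb the per-step drift with Young's inequality at scale $\beta=\gamma/(2n)$, and telescope once; the $n^2$ arises from $(1+1/\beta)$ compounded with the $1/\rho=O(n/\gamma)$ telescoping factor rather than from a squared staleness gap, yet both arguments land on the same $nV/\gamma+\gamma VT+n^2\alpha^2TC_3/\gamma^2$ scaling. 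Your version buys a cleaner bookkeeping (no group indexing, no staleness lemma) at the price of carrying all $n$ columns in the potential and of slightly larger universal constants; absorbing $8C_g^2$ into $C_3$ is legitimate since the lemma leaves $C_3$ unspecified, but strictly you should not rescale $V$, which is fixed by Assumption~\ref{ass:1}(c) — it is cleaner to just state the bound with the explicit factors $2nV/\gamma+4\gamma VT$. Two minor bookkeeping points: in the algorithm the $\u_{i_t}$ entering $G(\w_t)$ is the post-update column, so your exchange identity $\E[\|g_{i_t}(\w_t)-\u_{i_t}\|^2\mid\mathcal{F}_t]=\Xi_t/n$ should be an inequality up to an extra $\gamma^2V$ per step, which is dominated by the $\gamma VT$ term; and your drift bound $\|\w_{t+1}-\w_t\|\le\alpha C_fC_g$ is the SGD-style step, while for the Adam-style update the identical argument goes through using the step-length bound of Lemma~\ref{lem:update-Adam-B=D}, with the extra dimension-dependent factor absorbed into the constant.
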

\vspace*{-0.1in}
{\bf Remark:} The innovation of proving the above lemma is by grouping $\u_{i_t}, t=1, \ldots, T$ into $n$ groups corresponding to the $n$ samples AAN, and then establishing the recursion of the error $\|g_{i_t}(\w_t)-\u_{i_t}\|^2$ within each group, and then summing up these recursions together. %

\subsubsection{Proof of Lemma~\ref{lem:lem-SGD}}

\begin{proof}[Proof of Lemma~\ref{lem:lem-SGD}]
 To make the proof clear, we write $\nabla g_{i_t}(\w; \xi) = \nabla g(\w_t;\x_{i_t},\xi), \xi\sim\P_{i_t}$.
Let $\u_{i_t}$ denote the updated $\u$ vector at the $t$-th iteration for the selected positive data $i_t$. 
\begin{align*}
   & R(\w_{t+1}) - R(\w_t)\leq   \nabla R(\w_t)^{\top}(\w_{t+1} - \w_t) + \frac{L}{2}\|\w_{t+1} - \w_t\|^2\\
     & =  - \alpha\|\nabla R(\w_t)\|^2 +  \alpha \nabla R(\w_t)^{\top}(\nabla R(\w_t) - \nabla g_{i_t}^{\top}(\w_t; \xi)\nabla f(\u_{i_t})) + \frac{\alpha^2\|G(\w_t)\|^2L}{2}\\
    & \leq   - \alpha\|\nabla R(\w_t)\|^2+ \alpha \nabla R(\w_t)^{\top}(\nabla R(\w_t) - \nabla g_{i_t}^{\top}(\w_t; \xi)\nabla f(\u_{i_t})) + \alpha^2 C_2
\end{align*}
where {$C_2 = \|G(\w_t)\|^2L/2 \leq C_g^2C_f^2L/2 $.} Taking expectation on both sides, we have
\begin{align*}
    \E_t[R(\w_{t+1})]&\leq \E_t[R(\w_t) + \nabla R(\w_t)^{\top}(\w_{t+1} - \w_t) + \frac{L}{2}\|\w_{t+1} - \w_t\|^2]\\
    & =\E_t[ R(\w_t) - \alpha\|\nabla R(\w_t)\|^2  + \alpha \nabla R(\w_t)^{\top}(\nabla R(\w_t) - \nabla g_{i_t}(\w_t; \xi)^{\top}\nabla f(\u_{i_t}))] + \alpha^2 C_2\\
    & = R(\w_t) - \alpha\|\nabla R(\w_t)\|^2 + \alpha \nabla R(\w_t)^{\top}(\E_t[\nabla R(\w_t) - \nabla g_{i_t}(\w_t; \xi)^{\top}\nabla f(\u_{i_t})]) + \alpha^2 C_2
\end{align*}
where $\E_t$ means taking expectation over $i_t, \xi$ given $\w_t$.\\
\noindent Noting that $\nabla R(\w_t) = \E_{i_t, \xi}[\nabla g_{i_t}(\w_t; \xi)^{\top}\nabla f(g_{i_t}(\w_t))]$, where $i_t$ and $\xi$ are independent.
\begin{align*}
   & \E_t[R(\w_{t+1})]- R(\w_t)\\ &\leq - \alpha\|\nabla R(\w_t)\|^2 + \alpha \nabla R(\w_t)^{\top}(\E_{t}[\nabla g_{i_t}(\w_t; \xi)^\top\nabla f(g_{i_t}(\w_t))]- \E_t[\nabla g_{i_t}(\w_t; \xi)^{\top}\nabla f(\u_{i_t})]) + \alpha^2 C_2\\
    & = - \alpha\|\nabla R(\w_t)\|^2 + \E_t[\alpha \nabla R(\w_t)^{\top}(\nabla g_{i_t}(\w_t; \xi)^{\top}\nabla f(g_{i_t}(\w_t))- \nabla g_{i_t}(\w_t; \xi)^{\top}\nabla f(\u_{i_t}))] + \alpha^2 C_2\\
    &\overset{(a)}{\leq}  - \alpha\|\nabla R(\w_t)\|^2 +  \E_t[\frac{\alpha}{2}\| \nabla R(\w_t)\|^2 + \frac{\alpha}{2}\|\nabla g_{i_t}(\w_t; \xi)^{\top} \nabla f(g_{i_t}(\w_t))- \nabla g_{i_t}(\w_t; \xi)^{\top}\nabla f(\u_{i_t}))\|^2 + \alpha^2 C_2\\
    &\overset{(b)}{\leq}  - \alpha\|\nabla R(\w_t)\|^2 + \E_t[\frac{\alpha}{2}\| \nabla R(\w_t)\|^2 + \frac{\alpha C_1}{2}\|g_{i_t}(\w_t)-\u_{i_t}\|^2 + \alpha^2 C_2\\
    &=  - (\alpha - \frac{\alpha}{2})\|\nabla R(\w_t)\|^2  + \frac{\alpha C_1}{2}\E_t[\|g_{i_t}(\w_t)-\u_{i_t}\|^2] + \alpha^2 C_2\\
\end{align*}
where the equality (a) is due to $ab\leq a^2/2 +b^2/2$ and the inequality $(b)$ uses the factor $\|\nabla g_{i_t}(\w_t; \xi)\|\leq C_l$ and $\nabla f$ is $L_f$-Lipschitz continuous for $\u, g_i(\w)\in\Omega$ and $C_1 = C^2_lC^2_f$.
Hence we have,
\begin{align*}
   \frac{\alpha}{2}\|\nabla R(\w_t)\|^2 &\leq  R(\w_t) - \E_t[R(\w_{t+1}) ]+ \frac{\alpha C_1}{2}\E_t[\|g_{i_t}(\w_t)-\u_{i_t}\|^2] + \alpha^2 C_2\\
\end{align*}
Taking summation and expectation over all randomness, we have
\begin{align*}
  \frac{\alpha}{2}\E[\sum_{t=1}^T\|\nabla R(\w_t)\|^2 ]&\leq  \E[\sum_t( R(\w_t) - R(\w_{t+1})) ]+ \frac{\alpha C_1}{2}\E[\sum_{t=1}^T\|g_{i_t}(\w_t)-\u_{i_t}\|^2] + \alpha^2 C_2 T\\
\end{align*}
\end{proof}

\subsubsection{Proof of Lemma~\ref{lem:cumulative_var}}

Let $i_t$ denote the selected data $i_t$ at $t$-th iteration. We will divide $\{1,\ldots, T\}$ into $n$ groups with the $i$-th group given by $\mathcal T_i = \{t^i_1, \ldots, t^i_{k}\ldots, \}$, where $t^i_{k}$ denotes the iteration that the $i$-th index data is selected  at the $k$-th time for updating $\u$. Let us define $\phi(t): [T]\rightarrow [n]\times [T]$ that maps the selected data into its group index and within group index, i.e, there is an one-to-one correspondence between index $t$ and selected data $i$ and its index within $\mathcal T_i$. Below, we use notations $a^k_i$ to denote $a_{t^i_k}$. Let $T_i =|\mathcal T_i|$. Hence, $\sum_{i=1}^{n_+}T_i = T$.

\begin{proof}[Proof of Lemma~\ref{lem:cumulative_var}] To prove Lemma~\ref{lem:cumulative_var}, we first introduce another lemma that establishes a recursion for $\|\u_{i_t} - g_{i_t}(\w_t)\|^2$, whose proof is presented later.
\begin{lem}
\label{lem:var=B=1}
By the updates of SCRAAN Adam-style or SGD-style with a sample $\x_i\in\D,\ \text{and}, \ \xi\in \P_i$, the following equation holds for $\forall \ t\in 1,\cdots, T$
\begin{equation}
    \begin{aligned}
     \E_t[\|\u_{i_t} - g_{i_t}(\w_t)\|^2]&\overset{\phi(t)}{=} \E_t[\|\u^{k}_{i} - g_{i}(\w^k_i)\|^2]\\
     & \leq (1-\gamma) \|\u_i^{k-1} - g_i(\w^{k-1}_i)\|^2 + \gamma^2V +\gamma^{-1}\alpha^2n^2C_3
    \end{aligned}
\end{equation}
where $\E_t$ denotes the conditional expectation conditioned on history before $t^i_{k-1}$. 
\end{lem}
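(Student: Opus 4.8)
\textbf{Proof proposal for Lemma~\ref{lem:var=B=1}.}

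The plan is to unfold the moving-average update one step and carefully separate the ``drift'' from changing $\w$ across the two consecutive visits to index $i$. Fix $t = t^i_k$ and write $t' = t^i_{k-1}$ for the previous iteration at which index $i$ was sampled. Recall that, for a component of $\u$ not touched between $t'$ and $t$, the estimator is unchanged, so $\u^{k-1}_i$ is exactly the value carried into iteration $t$. The UG update gives $\u^k_i = (1-\gamma)\u^{k-1}_i + \gamma\, \widehat g_{\x_i}(\w_{t})$ (treating the $\max(\cdot, u_0)$ clipping as harmless, since by Lemma~\ref{lem:1} the true value lies in $\Omega$ and clipping toward $\Omega$ only decreases the error — this is the one routine point I would state but not belabour). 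Subtracting $g_i(\w_t)$ and inserting $\pm (1-\gamma) g_i(\w_{t})$,
\begin{align*}
\u^k_i - g_i(\w_t) = (1-\gamma)\big(\u^{k-1}_i - g_i(\w_{t})\big) + \gamma\big(\widehat g_{\x_i}(\w_{t}) - g_i(\w_{t})\big).
\end{align*}

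Next I would take the conditional expectation $\E_t$ (conditioned on the history up to $t'$, so that $\u^{k-1}_i$ and $\w_t$ are measurable but the fresh minibatch used for $\widehat g_{\x_i}(\w_t)$ is not). The cross term vanishes in expectation because $\widehat g_{\x_i}(\w_t)$ is an unbiased estimate of $g_i(\w_t)$ given $\w_t$; the second term contributes $\gamma^2 \E_t\|\widehat g_{\x_i}(\w_t) - g_i(\w_t)\|^2 \le \gamma^2 V$ by Assumption~\ref{ass:1}(c). For the first term I would use Young's inequality in the form $\|a+b\|^2 \le (1+\beta)\|a\|^2 + (1+\beta^{-1})\|b\|^2$ after first writing $\u^{k-1}_i - g_i(\w_t) = (\u^{k-1}_i - g_i(\w_{t'})) + (g_i(\w_{t'}) - g_i(\w_t))$, choosing $\beta = \gamma/2$ (or similar) so that $(1-\gamma)(1+\beta) \le 1-\gamma/2 \le 1-\gamma$ up to constants — here I would reconcile with the stated bound's leading coefficient $(1-\gamma)$ by simply using $(1-\gamma)^2(1+\beta)\le 1-\gamma$ for $\gamma$ small, which holds for the prescribed $\gamma = n^{2/5}/T^{2/5}$ and $T > n$. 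The displacement term is controlled by $L_g$-smoothness of $g_i$ (Lemma~\ref{lem:R-smooth}): $\|g_i(\w_{t'}) - g_i(\w_t)\| \le C_g \|\w_{t'} - \w_t\|$, and since at most $n$ iterations elapse between consecutive visits to the same index (in expectation; more precisely one bounds $\sum$ of these gaps), each step moves $\w$ by at most $O(\alpha)$ in the SGD case (bounded stochastic gradient $\|G(\w_t)\| \le C_g C_f$) or $O(\alpha)$ in the Adam case (the normalized update has bounded norm), giving $\|\w_{t'} - \w_t\|^2 \le \alpha^2 \cdot (\text{gap})^2$. Multiplying by the $(1+\beta^{-1}) = O(\gamma^{-1})$ factor produces the $\gamma^{-1}\alpha^2 n^2 C_3$ term.

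The main obstacle is the bookkeeping around the gap between consecutive visits to index $i$: the naive bound on $\|\w_{t'} - \w_t\|$ is (gap)$\cdot O(\alpha)$ and the gap is a random quantity. The clean way to handle this — which I expect the authors do when they sum the recursions in the proof of Lemma~\ref{lem:cumulative_var} — is \emph{not} to bound each gap individually but to absorb it into the telescoping sum: when one sums $\E\|\u^k_i - g_i(\w^k_i)\|^2$ over $k$ within group $i$ and then over $i$, the displacement contributions collapse because $\sum_i \sum_k (\text{gap}^i_k) = T$ and $\sum_i T_i = T$. So at the level of this lemma I would keep the bound in the stated per-step form with the $n^2$ factor (which is the crude worst-case gap$^2$ bound, valid since all gaps are at most $T$ but on average $n$), and defer the tightening to the summation step. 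Concretely: state the recursion with constants $C_3$ depending on $C_g, C_f, L, L_g$ through Lemma~\ref{lem:R-smooth}, verify the cross-term cancellation via the tower property and unbiasedness of $\widehat g_{\x_i}$, verify the clipping is harmless, and note that the Adam-style case differs only in that $\|G(\w_t)\|$ is replaced by the bounded normalized step — all the $\u$-side analysis is identical. The remaining work (plugging this recursion into a geometric-type sum to get Lemma~\ref{lem:cumulative_var}, then combining with Lemma~\ref{lem:lem-SGD} and optimizing $\alpha,\gamma$) is the routine part and follows the SOAP template of~\citep{qi2021stochastic}.
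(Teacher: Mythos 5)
Your decomposition is exactly the one the paper uses: unfold the UG update, note the projection onto $\Omega$ is non-expansive (so clipping is harmless), insert $\pm(1-\gamma)g_i(\w^k_i)$ to split into the carried-over estimator error, the fresh zero-mean noise (whose cross term vanishes by the tower property and whose second moment is $\gamma^2 V$ by Assumption~\ref{ass:1}(c)), and the drift $(1-\gamma)(g_i(\w^{k-1}_i)-g_i(\w^k_i))$, bounded via the Lipschitz constant $C_g$ of $g_i$ (you write ``$L_g$-smoothness'' but correctly use the Lipschitz constant; a minor slip). Young's inequality with a $(1+\gamma)$ factor then gives $(1-\gamma)^2(1+\gamma)\le(1-\gamma)$, recovering the stated leading coefficient. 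So far this matches the paper's proof line by line.

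The one genuine gap is your treatment of the factor $n^2$. You describe it as ``the crude worst-case gap$^2$ bound, valid since all gaps are at most $T$ but on average $n$,'' and propose to defer the tightening to the summation step in Lemma~\ref{lem:cumulative_var}. Neither is right: the worst-case gap is $T$, not $n$, so a worst-case argument gives $T^2$, and the paper does \emph{not} absorb the gaps into a telescoping argument later. What the paper actually does, and what you would need to do here, is bound the \emph{second moment} of the inter-arrival time at this stage: conditioned on $t^i_{k-1}$, the gap $t^i_k - t^i_{k-1}$ is geometric with success probability $1/n$, so $\E[(t^i_k-t^i_{k-1})^2] = (2-p)/p^2 \le 2n^2$; this is exactly the paper's Lemma~\ref{lem:bound-stale}, invoked at step $(a)$ of the proof. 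After expanding $\w^k_i-\w^{k-1}_i = -\alpha\sum_{t=t^i_{k-1}}^{t^i_k-1}\nabla g_{i_t}(\w_t;\xi)\nabla f(\u_{i_t})$ and bounding each summand by $C_gC_f$, the drift contributes $\gamma^{-1}\alpha^2 C_g\,(t^i_k-t^i_{k-1})^2 C_g^2C_f^2$, and taking conditional expectation of the squared gap via the geometric-distribution bound produces the $\gamma^{-1}\alpha^2 n^2 C_3$ term with $C_3 = 2C_g^3C_f^2$. Without a probabilistic argument of this type there is no valid route to the $n^2$ factor, so this step is not routine bookkeeping that can be deferred — it is a dedicated lemma the proof depends on.
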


Then, by mapping every $i_t$ to its own group and make use of Lemma~\ref{lem:var=B=1}, we have
\begin{equation}
\label{eqn:recur-g}
    \begin{aligned}
     \E[\sum_{k=0}^{K_i}\|\u_i^{k} - g_i^{k}(\w^{k}_i)\|^2]
     &\leq \E\left[\frac{[\|\u_i^{0} - g_i(\w^{0}_i)\|^2] }{\gamma} + \gamma V T_i + \gamma^{-2}n^2C_3\alpha^2T_i\right]
    \end{aligned}
\end{equation}
where $\u_i^0$ is the initial vector for $\u_i$, which can be computed by a mini-batch averaging estimator of  $g_i(\w_{0})$. 
Thus
\begin{align*}
    \E[\sum_{t=1}^T\|g_{i_t}(\w_t)-\u_{i_t}\|^2] & \overset{\phi(t)}{=} \E[\sum\limits_{i= 1}^{n}\sum_{k=0}^{K_i}\|\u_i^{k} - g_i^{k}(\w^{k}_i)\|^2] \\
    &\leq \sum\limits_{i=1}^{n}\Big \{  \frac{[\|\u_i^{0} - g_i^{0}(\w^{0}_i)\|^2] }{\gamma} + \gamma V \E[T_i] + \gamma^{-2}n^2C_3\alpha^2\E[T_i]\Big \} \\
   &  \leq \frac{nV}{\gamma} + \gamma VT +  \frac{n^2\alpha^2TC_3}{\gamma^2}
\end{align*}

\end{proof}

\subsubsection{Proof of Lemma~\ref{lem:var=B=1}}

\begin{proof}
We first introduce the following lemma, whose proof is presented later. 
\begin{lem}
\label{lem:bound-stale}
Suppose the sequence generated in the training process using the positive sample $i$ is $\{ \w^i_{i_{1}}, \w^i_{i_{2}}, ..$ $..,\w^i_{i_{T_i}}\}$, where $0<i_{1}<i_2<\cdots < i_{T_i}\leq T$, then $\E_{|i_k}[i_{k+1} - i_{k}] \leq n_+,\text{and}, \E_{|i_k}[(i_{k+1} - i_{k})^2] \leq 2n^2, \forall k$. 
\end{lem}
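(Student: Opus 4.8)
\textbf{Proof proposal for Lemma~\ref{lem:bound-stale}.}

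The plan is to recognize the inter-arrival gaps $i_{k+1}-i_k$ as geometric-type waiting times. At every iteration a fresh batch $\mathcal B$ is drawn from $\mathcal D$, and for a \emph{fixed} positive sample index $i$ the event ``$i$ is selected at this iteration'' (i.e.\ $\x_i\in\mathcal B$, or more precisely $i$ is among the anchors for which $\u_i$ gets updated) occurs with some probability $q$ that does not depend on the iteration number, and independently across iterations given $\w_t$. Here $q = B/n$ if one anchor per slot, but all that matters is that $q$ is a fixed constant bounded below; in the worst case one may take $q = 1/n$. Conditioned on the fact that $i$ was just selected at iteration $i_k$, the gap $G := i_{k+1}-i_k$ is the number of trials until the next success in a sequence of i.i.d.\ Bernoulli$(q)$ trials, so $G$ is geometric with $\E_{|i_k}[G] = 1/q$ and $\E_{|i_k}[G^2] = (2-q)/q^2 \le 2/q^2$.

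First I would make the independence statement precise: the index set drawn at iteration $t+1$ is independent of the history up to and including $t$ (the sampling of $\mathcal B$ is exogenous), so the indicator that $i$ is selected at iteration $t+j$ is Bernoulli$(q)$ independent of everything before, for each $j\ge 1$. Hence $P_{|i_k}(G = m) = (1-q)^{m-1}q$ for $m\ge 1$. Then I would simply invoke the standard moment formulas for the geometric distribution: $\E[G] = 1/q$ and $\Var(G) = (1-q)/q^2$, so $\E[G^2] = \Var(G) + (\E G)^2 = (1-q)/q^2 + 1/q^2 = (2-q)/q^2$. Substituting the worst-case bound $q \ge 1/n$ gives $\E_{|i_k}[i_{k+1}-i_k] \le n \le n_+$ and $\E_{|i_k}[(i_{k+1}-i_k)^2] \le (2-q)/q^2 \le 2/q^2 \le 2n^2$, which is exactly the claim. (If the intended sampling scheme makes $q = B/n$ with $B\ge 1$, the same computation gives the sharper $n/B$ and $2n^2/B^2$, still bounded by $n$ and $2n^2$.)

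The only genuine subtlety — and the step I would be most careful about — is justifying that the success probability $q$ is truly iteration-independent and that the trials are conditionally independent, i.e.\ that whether sample $i$ is picked does not depend on the trajectory $\w_t$ or on which samples were picked earlier. This holds because the mini-batch $\mathcal B$ at each step is drawn uniformly at random from $\mathcal D$ with replacement across steps, independently of the optimization state; I would state this as the operative consequence of the sampling model used in Algorithm~\ref{alg:SCRAAN}. A secondary minor point is the boundary behavior (the last arrival before $T$): since we only ever need upper bounds on $\E[G]$ and $\E[G^2]$, extending the Bernoulli sequence past $T$ only inflates the gap, so the geometric bound remains valid as an upper bound. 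With these observations the proof is a two-line computation, so I would keep the write-up short: set up the geometric random variable, cite its first two moments, and plug in $q\ge 1/n$.
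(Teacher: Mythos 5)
Your proof is correct and follows essentially the same route as the paper: identify the gap $i_{k+1}-i_k$ as a geometric random variable with success probability $p=1/n$ and bound its first two moments via $\E[\Delta_k]=1/p$ and $\E[\Delta_k^2]=\Var(\Delta_k)+(\E[\Delta_k])^2\le 2/p^2=2n^2$. Your extra remarks on the iteration-independence of the sampling and on the boundary behavior near $T$ are sensible refinements of the same argument, not a different approach.
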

Define   $\widetilde{g}_{i_t}(\w_t) = g(\w_t, \x_{i_t}, \xi)$.
Let $\prod_{\Omega}(\cdot): \R^2 \rightarrow \Omega$ denotes the projection operator. By the updates of $\u_{i_t}$, 
we have $\u_{i_t} =\u^k_i = \prod_{\Omega}[(1-\gamma)\u^{k-1}_i + \gamma \tilde{g}_{i_t}(\w_t)]$. 

\begin{align*}
     &\E_t[\|\u_{i_{t}} - g_{i_t}(\w_t)\|^2] 
     \overset{\phi(t)}{=} \E[\|\u^{k}_{i} - g_{i}(\w_i^k)\|^2]\\
     & = \E_t[\|\prod _{\Omega}((1-\gamma)\u^{k-1}_i + \gamma \widetilde{g}_i(\w^k_{i})) - \prod_{\Omega}(g_{i}(\w_t))\|^2] \\ 
     &\leq  \E_t[\|((1-\gamma)\u^{k-1}_i + \gamma \widetilde{g}_i(\w^k_{i})- g_{i}(\w_t)\|^2] \\
     &\leq  \E_t[\|((1-\gamma)(\u^{k-1}_i - g_i(\w^{k-1}_i)) + \gamma (\widetilde{g}_i(\w^k_{i})- g_{i}(\w^k_i)) + (1-\gamma)(g_i(\w^{k-1}_i) - g_{i}(\w^k_i))\|^2] \\
     &\leq  \E_t[\|((1-\gamma)(\u^{k-1}_i - g_i(\w^{k-1}_i)) + (1-\gamma)(g_i(\w^{k-1}_i) - g_{i}(\w^k_i))\|^2] + \gamma^2 V \\
     &\leq [(1-\gamma)^2(1+\gamma) \|\u_i^{k-1} - g_i(\w^{k-1}_i)\|^2] + \gamma^2V + \frac{(1+\gamma)(1-\gamma)^2}{\gamma}C_g\E[\|\w^k_i-\w_i^{k-1}\|^2]\\
     &\leq [(1-\gamma) \|\u_i^{k-1} - g_i(\w^{k-1}_i)\|^2] + \gamma^2V + \gamma^{-1}\alpha^2C_g\E_t[\|\sum_{t=t^i_{k-1}}^{t^i_{k}-1}\nabla g_{i_t}(\w_t ;\xi)\nabla f(\u_{i_t})\|^2]\\
     &\leq [(1-\gamma) \|\u_i^{k-1} - g_i(\w^{k-1}_i)\|^2] + \gamma^2V + \gamma^{-1}\alpha^2C_g\E_t[(t^i_k - t^i_{k-1})^2]C_g^2C_f^2)]\\
     &\overset{(a)}{\leq} \E[(1-\gamma) \|\u_i^{k-1} - g_i(\w^{k-1}_i)\|^2] + \gamma^2V +2\gamma^{-1}\alpha^2n^2C_g^3C_f^2 \\
     & \leq [(1-\gamma) \|\u_i^{k-1} - g_i(\w^{k-1}_i)\|^2] + \gamma^2V +\gamma^{-1}\alpha^2n^2C_3
   \end{align*}

where the inequality (a) is due to that $t^i_k - t^i_{k-1}$ is a geometric distribution random variable with $p=1/n$, i.e., $\E_{|t^i_{k-1}}[(t^i_k - t^i_{k-1})^2]\leq 2/p^2=2n^2$, by Lemma~\ref{lem:bound-stale}.
The last equality hold by defining $C_3 = 2C_g^3C_f^2$.

\end{proof}

\subsubsection{Proof of Lemma~\ref{lem:bound-stale}}
\begin{proof} Proof of Lemma~\ref{lem:bound-stale}.
Denote the random variable $\Delta_k = i_{k+1} - i_k$ that represents the iterations that the $i$th positive sample has been randomly selected for the $k+1$-th time conditioned on $i_k$.
Then $\Delta_k$ follows a Geometric distribution such that $\Pr(\Delta_k = j) = (1-p)^{j-1}p$, where $p = \frac{1}{n}$, $j = 1,2,3,\cdots$. As a result, $\E[\Delta_k|i_k] = 1/p = n$.
$\E[\Delta_k^2|i_k] = \text{Var}(\Delta_k) + \E[\Delta_k|i_k]^2= \frac{1-p}{p^2} + \frac{1}{p^2}\leq \frac{2}{p^2} = 2n^2$.
\end{proof}

\subsection{Proof of
Theorem~\ref{thm:main-Adam} (SCRAAN with Adam-Style Update)}
\begin{proof}

We first provide two useful lemmas, whose proof are presented later. 
\begin{lem}
\label{lem:update-Adam-B=D}
Assume assumption~\ref{ass:1} holds
\begin{equation}
\begin{aligned}
\|\w_{t+1} - \w_t\|^2 \leq \alpha^2d(1-\eta_2)^{-1}(1-\tau)^{-1}
\end{aligned}
\end{equation}
where $d$ is the dimension of $\w$, $\eta_1 < \sqrt{\eta_2} < 1$, and $\tau := \eta_1^2/\eta_2$.
\end{lem}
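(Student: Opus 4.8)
The plan is to bound the per-iteration displacement coordinatewise directly from the Adam-style update of Algorithm~\ref{alg:3}, with no appeal to smoothness (Assumption~\ref{ass:1} is not actually needed here). Writing the update componentwise, $\w_{t+1}-\w_t = -\alpha\, h_{t+1}/\sqrt{\epsilon+\hat v_{t+1}}$, so that
\[
\|\w_{t+1}-\w_t\|^2 = \alpha^2\sum_{j=1}^d \frac{h_{t+1,j}^2}{\epsilon+\hat v_{t+1,j}},
\]
and it suffices to show each summand is at most $(1-\eta_2)^{-1}(1-\tau)^{-1}$. First I would unroll the momentum recursion from $h_1=0$ to get $h_{t+1,j} = (1-\eta_1)\sum_{s=1}^t \eta_1^{t-s}G_{s,j}$ with $G_{s,j}:=[G(\w_s)]_j$, and similarly observe that the second-moment accumulator satisfies $\hat v_{t+1,j}\ge v_{t+1,j}$ (immediate for Adam, and for AMSGrad because $\hat v_{t+1}=\max(\hat v_t,v_{t+1})$), so a one-line induction on $v_{t+1}=\eta_2\hat v_t+(1-\eta_2)G(\w_t)^2$ started from $\hat v_1=v_1=0$ gives $\hat v_{t+1,j}\ge (1-\eta_2)\sum_{s=1}^t \eta_2^{t-s}G_{s,j}^2$.

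The key step is a Cauchy--Schwarz estimate on $h_{t+1,j}^2$ using the weighting that matches the denominator: writing $\eta_1^{t-s} = \big(\eta_1^{t-s}\eta_2^{-(t-s)/2}\big)\cdot\eta_2^{(t-s)/2}$,
\[
h_{t+1,j}^2 \le (1-\eta_1)^2\Big(\sum_{s=1}^t \tau^{\,t-s}\Big)\Big(\sum_{s=1}^t \eta_2^{t-s}G_{s,j}^2\Big)\le \frac{(1-\eta_1)^2}{1-\tau}\sum_{s=1}^t \eta_2^{t-s}G_{s,j}^2,
\]
where $\tau=\eta_1^2/\eta_2\in[0,1)$ by the hypothesis $\eta_1\le\sqrt{\eta_2}\le1$ and $\sum_{k\ge0}\tau^k=(1-\tau)^{-1}$. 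Dividing this by the lower bound on $\epsilon+\hat v_{t+1,j}$, the common factor $\sum_{s=1}^t\eta_2^{t-s}G_{s,j}^2$ cancels (if it vanishes then $h_{t+1,j}=0$ and the summand is $0$), leaving $\frac{h_{t+1,j}^2}{\epsilon+\hat v_{t+1,j}}\le \frac{(1-\eta_1)^2}{(1-\tau)(1-\eta_2)}\le\frac{1}{(1-\tau)(1-\eta_2)}$ since $(1-\eta_1)^2\le1$. Summing over the $d$ coordinates of $\w$ then yields $\|\w_{t+1}-\w_t\|^2\le\alpha^2 d(1-\eta_2)^{-1}(1-\tau)^{-1}$, as claimed.

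The main thing to be careful about is choosing the Cauchy--Schwarz split so that the geometric series that appears is $\sum\tau^{t-s}$ rather than $\sum\eta_1^{t-s}$ — this is precisely where the constraint $\eta_1<\sqrt{\eta_2}$ enters — together with the bookkeeping of the AMSGrad $\max$ when lower-bounding the denominator; the remaining manipulations are routine algebra.
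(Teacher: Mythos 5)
Your argument is correct and is essentially the same one the paper uses: unroll the momentum recursion, apply Cauchy--Schwarz with the split $\eta_1^{t-s}=\tau^{(t-s)/2}\eta_2^{(t-s)/2}$ so a $\sum\tau^k$ geometric series appears, lower-bound $\hat v_{t+1}$ by the matching $\eta_2$-weighted sum of squared gradients, cancel the common factor, and sum over coordinates. The only cosmetic differences are that you retain the $(1-\eta_1)^2$ prefactor and bound it by $1$ at the end (the paper discards $1-\eta_1$ at the start of the unrolling), and you correctly note in passing that Assumption~\ref{ass:1} is never actually invoked.
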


\begin{lem}
\label{lem:lem-Adam}
With $c =  (1+(1-\eta_1)^{-1})\epsilon^{-\frac{1}{2}}C_g^2L_f^2$, running $T$ iterations of SOAP (Adam-style) updates, we have
\begin{equation}
\label{eqn:thm-sum-T-1}
    \begin{aligned}
    & \sum\limits_{t=1}^T\frac{\alpha(1-\eta_1)(\epsilon + C_g^2C_f^2)^{-1/2}}{2}\| \nabla R(\w_t)\|^2 \leq \E[\V_1] - \E[\V_{T+1}]\\
     &+2\eta_1 L\alpha^2T d(1-\eta_1)^{-1}(1-\eta_2)^{-1}(1-\tau)^{-1}+L\alpha^2Td(1-\eta_2)^{-1}(1-\tau)^{-1} \\
& + 2(1-\eta_1)^{-1}\alpha C_g^2C_f^2\sum\limits_{i'=1}^d((\epsilon + \hat{v}^{i'}_{0})^{-1/2})  +c\alpha\sum\limits_{t=1}^T \E_t[  \|g_{i_t}(\w_t) - \u_{i_t}\|^2] \\
    \end{aligned}
\end{equation}
where $\V_{t+1} =P(\w_{t+1}) - c_{t+1} \langle \nabla P(\w_{t}), D_{t+1} h_{t+1} \rangle $.

\end{lem}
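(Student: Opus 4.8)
I will establish \eqref{eqn:thm-sum-T-1} by a Lyapunov (potential-function) argument tailored to the AMSGrad preconditioner, treating the compositional bias $g_{i_t}(\w_t)-\u_{i_t}$ as an additive perturbation of an otherwise standard momentum analysis. Write $D_{t+1}=\mathrm{diag}\big((\epsilon+\hat v_{t+1})^{-1/2}\big)$, so the update reads $\w_{t+1}=\w_t-\alpha D_{t+1}h_{t+1}$. Two structural facts are used repeatedly: (i) since $\hat v_t$ is coordinatewise nondecreasing (AMSGrad), the diagonal entries of $D_t$ are nonincreasing in $t$, hence $\sum_t(D_t^{i'}-D_{t+1}^{i'})=D_1^{i'}-D_{T+1}^{i'}\le(\epsilon+\hat v_0^{i'})^{-1/2}$ telescopes; and (ii) by Lemma~\ref{lem:R-smooth} the estimator satisfies $\|G(\w_t)\|\le C_gC_f$, so $\hat v_t\le C_g^2C_f^2$ coordinatewise and every entry of $D_{t+1}$ lies in $[(\epsilon+C_g^2C_f^2)^{-1/2},\,\epsilon^{-1/2}]$. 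The potential is $\V_{t+1}=R(\w_{t+1})-c_{t+1}\langle\nabla R(\w_t),D_{t+1}h_{t+1}\rangle$ with $c_{t+1}=\frac{\alpha\eta_1}{1-\eta_1}$ (the subscript merely records which $D,h$ appear), and the goal is to bound $\E[\V_{t+1}-\V_t]$ and telescope.

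\textbf{One-step descent and the momentum split.} Starting from $L$-smoothness of $R$ (Lemma~\ref{lem:1}), $R(\w_{t+1})\le R(\w_t)-\alpha\langle\nabla R(\w_t),D_{t+1}h_{t+1}\rangle+\tfrac{L}{2}\|\w_{t+1}-\w_t\|^2$; bounding the last term by Lemma~\ref{lem:update-Adam-B=D} and summing contributes the $L\alpha^2Td(1-\eta_2)^{-1}(1-\tau)^{-1}$ term. For the linear term I expand $h_{t+1}=\eta_1h_t+(1-\eta_1)G(\w_t)$. In the fresh-gradient piece $-\alpha(1-\eta_1)\langle\nabla R(\w_t),D_{t+1}G(\w_t)\rangle$ I first replace $D_{t+1}$ by the history-measurable $D_t$ — the correction $\alpha(1-\eta_1)\langle\nabla R(\w_t),(D_{t+1}-D_t)G(\w_t)\rangle$ is absorbed into the telescoping sum in (i), using $\|\nabla R(\w_t)\|,\|G(\w_t)\|\le C_gC_f$ — and then take $\E_t[\cdot]$: unbiasedness of the inner gradient estimator gives $\E_t[G(\w_t)]=\E_{i_t}[\nabla g_{i_t}(\w_t)^\top\nabla f(\u_{i_t})]$, which differs from $\nabla R(\w_t)=\E_{i_t}[\nabla g_{i_t}(\w_t)^\top\nabla f(g_{i_t}(\w_t))]$ by at most $C_gL_f\big(\E_{i_t}\|g_{i_t}(\w_t)-\u_{i_t}\|^2\big)^{1/2}$ (Lemma~\ref{lem:R-smooth}). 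A weighted Young's inequality then extracts $-\tfrac{\alpha(1-\eta_1)}{2}(\epsilon+C_g^2C_f^2)^{-1/2}\|\nabla R(\w_t)\|^2$ (lower bound on $D_t$ from (ii)) plus a bias term $\tfrac{\alpha}{2}\epsilon^{-1/2}C_g^2L_f^2\,\E_t\|g_{i_t}(\w_t)-\u_{i_t}\|^2$.

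\textbf{Closing the momentum loop with the potential.} The remaining piece $-\alpha\eta_1\langle\nabla R(\w_t),D_{t+1}h_t\rangle$ is matched against the correction terms $-c_{t+1}\langle\nabla R(\w_t),D_{t+1}h_{t+1}\rangle+c_t\langle\nabla R(\w_{t-1}),D_th_t\rangle$ of $\V_{t+1}-\V_t$: expanding $h_{t+1}=\eta_1h_t+(1-\eta_1)G(\w_t)$ in the first correction, the choice $c_{t+1}=\frac{\alpha\eta_1}{1-\eta_1}$ cancels the $\eta_1\langle\nabla R(\w_t),D_{t+1}h_t\rangle$ contributions exactly, leaving (a) the residual $c_t\langle\nabla R(\w_{t-1})-\nabla R(\w_t),D_th_t\rangle$, bounded via $\|\nabla R(\w_t)-\nabla R(\w_{t-1})\|\le L\|\w_t-\w_{t-1}\|$, $\|D_th_t\|\le\epsilon^{-1/2}C_gC_f$, Young, and Lemma~\ref{lem:update-Adam-B=D}, which yields the $2\eta_1L\alpha^2Td(1-\eta_1)^{-1}(1-\eta_2)^{-1}(1-\tau)^{-1}$ term; (b) a $D_{t+1}-D_t$ piece folded again into the telescoping sum of (i); and (c) a second fresh-gradient expression $-\tfrac{\alpha}{1-\eta_1}\langle\nabla R(\w_t),D_{t+1}(1-\eta_1)G(\w_t)\rangle$ handled exactly as above, contributing the extra $(1-\eta_1)^{-1}\epsilon^{-1/2}C_g^2L_f^2$ that, combined with the bias of the previous paragraph, forms $c\alpha\sum_t\E_t\|g_{i_t}(\w_t)-\u_{i_t}\|^2$ with $c=(1+(1-\eta_1)^{-1})\epsilon^{-1/2}C_g^2L_f^2$. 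Summing $\E[\V_{t+1}-\V_t]$ over $t=1,\dots,T$ telescopes the left side to $\E[\V_1]-\E[\V_{T+1}]$, the preconditioner-drift pieces telescope to $2(1-\eta_1)^{-1}\alpha C_g^2C_f^2\sum_{i'=1}^d(\epsilon+\hat v_0^{i'})^{-1/2}$, and moving the $\tfrac{\alpha(1-\eta_1)}{2}(\epsilon+C_g^2C_f^2)^{-1/2}\|\nabla R(\w_t)\|^2$ terms to the left gives precisely \eqref{eqn:thm-sum-T-1}.

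\textbf{Main obstacle.} The delicate point is that the preconditioner $D_{t+1}$ is \emph{not} independent of the sampled index $i_t$ (both depend on $G(\w_t)$), so $D_{t+1}$ cannot be pulled out of the conditional expectation; the fix — replacing it by the history-measurable $D_t$ and paying a $\|D_t-D_{t+1}\|$ price that is summable only because AMSGrad forces monotone $\hat v_t$ — must be applied wherever $D_{t+1}$ meets a stochastic quantity (the fresh-gradient term, the momentum-cancellation residual, and inside the potential increment). The work is then in bookkeeping which of these error contributions lands in which of the four terms of \eqref{eqn:thm-sum-T-1}, rather than in any single hard estimate.
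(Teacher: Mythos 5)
Your proposal is correct and follows essentially the same route as the paper: a Lyapunov function $\V_t=R(\w_t)-c_t\langle\nabla R(\w_{t-1}),D_th_t\rangle$ with the same correction coefficient (the paper's $c_t=\sum_{p\ge t}(\prod_{j=t}^p\eta_1)\alpha$ equals your $\alpha\eta_1/(1-\eta_1)$ for constant step size, so $\eta_1(\alpha+c_{t+1})=c_t$ gives the same cancellation), the same three-way split into a fresh-gradient piece (handled by unbiasedness, Lipschitzness of $\nabla f$, and Young to produce the $\|g_{i_t}(\w_t)-\u_{i_t}\|^2$ bias term), a momentum piece (handled by smoothness of $R$ and Lemma~\ref{lem:update-Adam-B=D}), and a preconditioner-drift piece telescoped via AMSGrad monotonicity. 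The only difference is presentational — you cancel the momentum term directly inside the potential increment, whereas the paper unrolls $I_2^t\le\cdots+\eta_1(I_1^{t-1}+I_2^{t-1}+I_3^{t-1})$ before invoking the same identity — and all constants match the statement.
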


According to Lemma~\ref{lem:lem-Adam} and plugging Lemma~\ref{lem:cumulative_var} into equation~(\ref{eqn:thm-sum-T-1}), we have
\begin{equation}
\label{eqn:thm-sum-T-3}
    \begin{aligned}
    &\sum\limits_{t=1}^T\frac{\alpha(1-\eta_1)(\epsilon + C_g^2C_f^2)^{-1/2}}{2}\| \nabla R(\w_t)\|^2 \\
    & \leq  \E[\V_{1}]- \E[\V_{T+1}] +2\eta_1 L\alpha^2T d(1-\eta_1)^{-1}(1-\eta_2)^{-1}(1-\tau)^{-1}+L\alpha^2dT(1-\eta_2)^{-1}(1-\tau)^{-1} \\
& + 2c\alpha C_g^2C_f^2\sum\limits_{i'=1}^d(\epsilon + \hat{v}^{i'}_{0})^{-1/2} + c\alpha (\frac{nV}{\gamma} + 2\gamma V T + \frac{2C_gn^2C_3\alpha^2T}{\gamma^2})
\\
    \end{aligned}
\end{equation}

Let $\eta' = (1-\eta_2)^{-1}(1-\tau)^{-1}, \eta^{''}  = (1-\eta_1)^{-1}(1-\eta_2)^{-1}(1-\tau)^{-1}$, and $\widetilde{\eta} = (1-\eta_1)^{-2}(1-\eta_2)^{-1}(1-\tau)^{-1}$. As $(1-\eta_1)^{-1}\geq 1,(1-\eta_2)^{-1}\geq 1 $, then $\widetilde{\eta} \geq \eta^{''}\geq\eta'\geq 1$.

Then by rearranging terms in Equation~(\ref{eqn:thm-sum-T-3}), dividing $\alpha T (1+\eta_1)(\epsilon +C_g^2C_f^2)^{-1/2}$ on both sides and suppress constants, $C_g, L_g, C_3, L, C_f, L_f, V, \epsilon$ into big $O$, we get
\begin{equation}
\label{eqn:grad_s_bound}
    \begin{aligned}
    \frac{1}{T}\sum\limits_{t=1}^T\| \nabla R(\w_t)\|^2
    &\leq
    \frac{1}{\alpha T(1-\eta_1)} O\Big (\E[\V_1] - \E[\V_{T+1}] +\eta^{''}
    \eta_1\alpha^2 Td + \eta^{'}
    \alpha^2 Td  + \alpha \sum\limits_{i'=1}^d(\epsilon + \hat{v}^{i'}_{0})^{-1/2}\\
    &+\frac{c\alpha n}{\gamma} + c\alpha\gamma T + \frac{c\alpha^3n^2T}{\gamma^{2}} \Big ) \\
    &\overset{(a)}{\leq}
    \frac{1}{\alpha T(1-\eta_1)} O\Big (\E[\V_1] - \E[\V_{T+1}] +\eta^{''}
    \eta_1\alpha^2 Td + \eta^{'}
    \alpha^2 Td  + \alpha d(\epsilon + C_fC_g)^{-1/2}\\
    &+\frac{c\alpha n}{\gamma} + c\alpha\gamma T + \frac{c\alpha^3n^2T}{\gamma^{2}} \Big ) \\
    & \overset{(b)}{\leq} \frac{\widetilde{\eta}}{\alpha T} O\Big (\E[\V_1] - [\V_{T+1}] + (1+
    \eta_1)\alpha^2 Td  + \alpha d +\frac{c\alpha n}{\gamma} + c\alpha\gamma T + \frac{c\alpha^3n^2T}{\gamma^2} \Big )
    \end{aligned}
\end{equation}
where the inequality $(a)$ is due to $\hat{v}_0^{i'} = G^{i'}(\w_0)^2 \leq \|G(\w_0)\|^2\leq  C^2_fC^2_g$. The last inequality $(b)$ is due to  $\widetilde{\eta} \geq \eta^{''}\geq\eta'\geq 1$.

Moreover, by the definition of $\V$ and $\w_0 = \w_1$, we have 
\begin{equation}
\label{eqn:L_0_T+1}
    \begin{aligned}
      \E[\V_1] & = R(\w_1) -c_{1} \langle \nabla R(\w_{0}), D_1 h_1 \rangle  \leq R(\w_1) + c_{1}\|\nabla R(\w_{0})\|\|\w_{1}  -\w_{0}\|\frac{1}{\alpha} 
     =R(\w_1) \\
    - \E[\V_{T+1}] &\leq -R(\w_{T+1}) + c_{T+1} \langle \nabla R(\w_{T}), D_T h_T \rangle \\
    &\leq -\min_\w R(\w) +  c_{T+1}\|\nabla R(\w_{t-1})\|\|\w_{t+1}  -\w_{t}\|\frac{1}{\alpha}\\
    &\overset{(a)}{\leq} -\min_\w R(\w) + (1-\eta_1)^{-1}\alpha\sqrt{d}(1-\eta_2)^{-1/2}(1-\tau)^{-1/2} \\
    &\overset{(b)}{\leq} -\min_\w R(\w) + \widetilde{\eta}\sqrt{d}\alpha
    \end{aligned}
\end{equation}
where the inequality $(a)$ is due to Lemma~\ref{lem:update-Adam-B=D} and $c_{T+1}\leq (1-\eta_1)^{-1}\alpha$ in equation~(\ref{eqn:thm3-c}). The inequality $(b)$ is due to $(1-\eta_1)^{-1}(1-\eta_2)^{-1/2}(1-\tau)^{-1/2}
\leq (1-\eta_1)^{-1}(1-\eta_2)^{-1}(1-\tau)^{-1} \leq \eta^{''}\leq \widetilde{\eta}$. \\
\noindent
Thus $  \E[\V_1] - \E[\V_{T+1}] \leq P(\w_1) - \min_\w P(\w) + \widetilde{\eta}\sqrt{d}\alpha \leq \Delta_1 +\widetilde{\eta}\sqrt{d}\alpha$  by combining equation~(\ref{eqn:grad_s_bound}) and ~(\ref{eqn:L_0_T+1}).
\\
\noindent
Then we have
\begin{equation}
    \begin{aligned}
    \frac{1}{T}\sum\limits_{t=1}^T\| \nabla R(\w_t)\|^2 &\leq
    \widetilde{\eta} O\Big ( \frac{\Delta_1+\widetilde{\eta}\sqrt{d}\alpha}{\alpha T} + (1+\eta_1)\alpha d  +  \frac{d}{T} + \frac{nc}{T\gamma} +  c\gamma +\frac{\alpha^2n^2}{\gamma^2}  \Big )  \\
   &\overset{(a)}{ \leq} \widetilde{\eta} O \Big (  \frac{\Delta_1 n^{2/5}}{T^{2/5}} +\frac{\widetilde{\eta}\sqrt{d}}{T}  + \frac{(1+\eta_1)d}{n^{2/5}T^{3/5}} + \ \frac{d}{T} +  \frac{cn^{3/5}}{T^{3/5}} +2\frac{cn^{2/5}}{T^{2/5}}  \Big )\\
   &\overset{(b)}{\leq} O(\frac{n^{2/5}}{T^{2/5}})
    \end{aligned}
\end{equation}
The inequality $(a)$ is due to $\gamma= \frac{n^{2/5}}{T^{2/5}}$, $\alpha = \frac{1}{n^{2/5}T^{3/5}}$. In inequality $(b)$, we further compress the $\Delta_1$, $\eta_1$, $\widetilde{\eta}$, $c$ into big $O$ and $\gamma \leq 1 \rightarrow n^{2/5}\leq T^{2/5}$.

\end{proof}

\subsubsection{Proof of
Lemma~\ref{lem:update-Adam-B=D}}

\begin{proof} This proof is following the proof of Lemma 4 in ~\citep{chen2021solving}.

 Choosing $\eta_1 < 1$ and defining $\tau =\frac{\eta_1^2}{\eta_2}$, with the Adam-style (Algorithm~\ref{alg:3}) updates of SOAP that $h_{t+1} = \eta_1 h_{t} + (1-\eta_1)G(\w_t)$, we can verify for every dimension $l$,
 \begin{equation}
 \label{eqn:lemma4-1}
     \begin{aligned}
         |h^l_{t+1}| &= |\eta_1 h^l_t + (1-\eta_1)G^l(\w_t)| \leq \eta_1|h^l_t| +|G^l(\w_t)| \\
         & \leq \eta_1(\eta_1|h^l_{t-1}| +|G^l(\w_{t-1})|) + |G^l(\w_t)| \\
         &\leq \sum\limits_{p=0}^t \eta_1^{t-p}|G^l(\w_p)| = \sum\limits_{p=0}^t \sqrt{\tau}^{t-p}\sqrt{\eta_2}^{t-p}|G^l(\w_p)| \\
         &\leq \Big(\sum\limits_{p=0}^t\tau^{t-p}\Big)^{\frac{1}{2}}\Big(\sum\limits_{p=0}^t\eta_2^{t-p}(G^l(\w_p))^2\Big)^{\frac{1}{2}}\\
         &\leq (1-\tau)^{-\frac{1}{2}}\Big ( \sum\limits_{p=0}^t \eta_2^{t-p}(G^l(\w_t) )^2 \Big )^{\frac{1}{2}}
     \end{aligned}
 \end{equation}
where $\w^l$ is the $l$th dimension of $\w$, the third inequality follows the Cauchy-Schwartz inequality.
For the $l$th dimension of $\hat{v}$, $\hat{v}^l_t$, first we have $\hat{v}_1^l \geq (1-\eta_2)(G^l(\w_1)^2)$. Then since
    \begin{align*}
    \hat{v}^l_{t+1} \geq \eta_t\hat{v}^l_t + (1-\eta_2)(G^l(\w_t))^2
    \end{align*}
by induction we have
\begin{equation}
 \label{eqn:lemma4-2}
    \begin{aligned}
    \hat{v}^l_{t+1} \geq (1-\eta_2)\sum\limits_{p=0}^t \eta_2^{t-p}(G^l(\w_t))^2
    \end{aligned}
\end{equation}
Using equation~(\ref{eqn:lemma4-1}) and equation~(\ref{eqn:lemma4-2}), we have
\begin{equation}
    \begin{aligned}
    |h^l_{t+1}|^2 &\leq (1-\tau)^{-1}\Big ( \sum\limits_{p=0}^t \eta_2^{t-p}(G^l(\w_t) )^2 \Big )\\
    &\leq (1-\eta_2)^{-1}(1-\tau)^{-1}\hat{v}^l_{t+1}
    \end{aligned}
\end{equation}
Then follow the  Adam-style update in Algorithm~\ref{alg:3}, we have
\begin{equation}
    \begin{aligned}
    \|\w_{t+1} - \w_t\|^2 = \alpha^2 \sum\limits_{l=1}^d (\epsilon + \hat{v}^l_{t+1})^{-1} |h^l_{t+1}|^2 \leq \alpha^2d(1-\eta_2)^{-1}(1-\tau)^{-1}
    \end{aligned}
\end{equation}
which completes the proof.
\end{proof}

\subsubsection{Proof of Lemma~\ref{lem:lem-Adam}}
\begin{proof}  To make the proof clear, we make some definitions the same as the proof of Lemma~\ref{lem:lem-SGD}. Denote by $\nabla g_{i_t}(\w_t; \xi) = \nabla g(\w_t; \x_{i_t},\xi), \xi\sim\P_{i_t}$, where $i_t$ is a positive sample randomly generated from $\D$ at $t$-th iteration, and $\xi$ is a random sample that generated from $\D$ at $t$-th iteration. It is worth to notice that $i_t$ and $\xi$ are independent. $\u_{i_t}$ denote the updated $\u$ vector at the $t$-th iteration for the selected positive data $i_t$.

\begin{align*}
    R(\w_{t+1})&\leq R(\w_t) + \nabla R(\w_t)^{\top}(\w_{t+1} - \w_t) + \frac{L}{2}\|\w_{t+1} - \w_t\|^2\\
    & \leq R(\w_t)-\alpha \nabla R(\w_t)^{\top}(D_{t+1}h_{t+1}) +  \alpha^2d(1-\eta_2)^{-1}(1-\tau)^{-1}L/2\\
\end{align*}
where $D_{t+1} = \frac{1}{\sqrt{\epsilon I + \hat{\v}_{t+1}}}$, $h_{t+1} =\eta_1 h_t + (1-\eta_1)  \nabla g_{i_t}^{\top}(\w_t; \xi)\nabla f(\u_{i_t})$ and the second inequality is due to Lemma~\ref{lem:update-Adam-B=D}. Taking expectation on both sides, we have
\begin{align*}
    \E_t[R(\w_{t+1})]&\leq R(\w_t)  \underbrace{- \E_t[ \nabla R(\w_t)^{\top}(D_{t+1}h_{t+1})]}_{\Upsilon}\alpha + \alpha^2d(1-\eta_2)^{-1}(1-\tau)^{-1}L
\end{align*}
where $\E_t[\cdot] = \E[\cdot | \F_t]$ implies taking expectation over $i_t, \xi$ given $\w_t$. 
In the following analysis, we decompose $\Upsilon$ into three parts and bound them one by one:

\begin{align*}
\Upsilon &=  -  \langle \nabla R(\w_t), D_{t+1}h_{t+1}\rangle = -\langle \nabla R(\w_t), D_th_{t+1}\rangle -\langle \nabla R(\w_t), (D_{t+1} - D_{t})h_{t+1} \rangle \\
& = -(1-\eta_1)\langle \nabla R(\w_t), D_{t}\nabla g_{i_t}(\w_t; \xi)^{\top}\nabla f(\u_{i_t})\rangle - \eta_1\langle \nabla R(\w_t), D_{t}h_t \rangle  \\
& - \langle \nabla R(\w_t), (D_{t+1} - D_{t})h_{t+1} \rangle\\
&= I_1^t + I_2^t + I_3^t
\end{align*}

Let us first bound $I_1^t$,
\begin{equation}
\label{eqn:thm3-I_1_1}
\begin{aligned}
\E_t[I_1^t] & \overset{(a)}{=}-(1-\eta_1)\langle \nabla R(\w_t), \E_{t}[D_{t}\nabla g_{i_t}(\w_t; \xi)^{\top}\nabla f(\u_{i_t})]\rangle  \\
& = -(1-\eta_1) \langle \nabla R(\w_t), \E_{t}[D_{t}\nabla g_{i_t}(\w_t; \xi)^{\top}\nabla f(g_{i_t}(\w_t))]\rangle\\
&+ (1-\eta_1)\langle \nabla R(\w_t), \E_{t}[D_{t}\nabla g_{i_t}(\w_t; \xi)^{\top}(\nabla f(\u_{i_t}) -\nabla f(g_{i_t}(\w_t))]\rangle \\
& \leq -(1-\eta_1)\|\nabla R(\w_t)\|^2_{D_t}\\
&+(1-\eta_1)\|D_{t}^{-1/2} \nabla R(\w_t)\|\E_t[\|D_{t}^{-1/2}\nabla g_{i_t}(\w_t; \xi)^{\top}(\nabla f(\u_{i_t}) -\nabla f(g_{i_t}(\w_t)))\|] \\
& \overset{(b)}{\leq} -(1-\eta_1)\|\nabla R(\w_t)\|^2_{D_t} +\frac{(1-\eta_1)\| \nabla R(\w_t) \|^2_{D_t} }{2}\\
&+ \frac{(1-\eta_1)\E_t[\|D_{t}^{-1/2}\nabla g_{i_t}(\w_t; \xi)^{\top}(\nabla f(\u_{i_t}) -\nabla f(g_{i_t}(\w_t)))\|^2]}{2} \\
 &\leq  -\frac{(1-\eta_1)}{2}\|\nabla R(\w_t)\|^2_{D_t} + \frac{ (1-\eta_1)}{2}\E_t[\|\nabla g_{i_t}(\w_t; \xi)^{\top}(\nabla f(\u_{i_t}) -\nabla f(g_{i_t}(\w_t))\|^2_{D_t}] \\
 &\overset{(c)}{\leq} -\frac{(1-\eta_1)}{2}(\epsilon + C_g^2C_f^2)^{-1/2}\|\nabla R(\w_t) \|^2 +\frac{1}{2}\epsilon^{-1/2}C^2_g L_f^2\E[\|g_{i_t}(\w_t) - \u_{i_t}\|^2] \\
\end{aligned}
\end{equation}
where equality $(a)$ is due to $\nabla R(\w_t)  = \E_{i_t, \xi}[\nabla g_{i_t}(\w_t; \xi)^{\top}\nabla f(g_{i_t}(\w_t))]$, where $i_t$ and $\xi$ are independent.
 The inequality $(b)$ is according to $ab\leq a^2/2  +b^2/2$. The last inequality $(c)$ is due to $\epsilon^{-1/2} \I\geq \|D_t \I\| = \|\frac{1}{\sqrt{\epsilon \I + \hat{v}_{t+1}}}\|\geq\| (\epsilon \I + C_g^2C_f^2)^{-1/2}\| = (\epsilon + C_g^2C_f^2)^{-1/2} \I$, $(1-\eta_1) \leq 1$ and 
\begin{equation}
\label{eqn:thm3-I_1}
\begin{aligned}
&\E_t[\|\nabla g_{i_t}(\w_t; \xi)^{\top}(\nabla f(\u_{i_t}) -\nabla f(g_{i_t}(\w_t)))\|^2_{D_t}] \\
& \leq \epsilon^{-1/2}C^2_g\E_t[\| \nabla f(\u_{i_t}) -\nabla f(g_{i_t}(\w_t)) \|_{\I}^2] \\
&\leq  \epsilon^{-1/2}C^2_g L_f^2\E_t[\|g_{i_t}(\w_t) - \u_{i_t}\|^2]
\end{aligned}
\end{equation}

For $I_2^t$ and $I_3^t$, we have
\begin{equation}
\label{eqn:thm3-I_2}
\begin{aligned}
\E_t[I_2^t]& = - \eta_1\langle \nabla R(\w_t) - \nabla R(\w_{t-1}), D_{t}h_t \rangle  - \eta_1 \langle \nabla R(\w_{t-1}), D_{t}h_t \rangle \\
&\leq \eta_1L\alpha^{-1}\|\w_t -\w_{t-1}\|^2 - \eta_1 \langle \nabla R(\w_{t-1}), D_{t}h_t \rangle \\
& = \eta_1L\alpha^{-1}\|\w_t -\w_{t-1}\|^2 +\eta_1 (I_1^{t-1}+I_2^{t-1}+I_3^{t-1}) \\
&\leq \eta_1L\alpha d(1-\eta_2)^{-1}(1-\tau)^{-1} +\eta_1 (I_1^{t-1}+I_2^{t-1}+I_3^{t-1})
\end{aligned}
\end{equation}
where the last equation applies Lemma~\ref{lem:update-Adam-B=D}.

\begin{equation}
\label{eqn:thm3-I_3}
\begin{aligned}
\E_t[I_3^t] &= - \langle\nabla R(\w_t), (D_{t+1} - D_{t})h_{t+1} \rangle = -\sum\limits_{i'=1}^d \nabla_{i'} R(\w_t)((\epsilon + \hat{v}^{i'}_{t})^{-1/2}-(\epsilon + \hat{v}^{i'}_{t+1})^{-1/2})h^{i'}_{t+1}\\
&\leq \|\nabla R(\w_t)\|\|h_{t+1}\|\sum\limits_{i'=1}^d((\epsilon + \hat{v}^{i'}_{t})^{-1/2}-(\epsilon + \hat{v}^{i'} _{t+1})^{-1/2})\\
& \leq C_g^2C_f^2\sum\limits_{i'=1}^d((\epsilon + \hat{v}^{i'}_{t})^{-1/2}-(\epsilon + \hat{v}^{i'}_{t+1})^{-1/2})
\end{aligned}
\end{equation}
By combining Equation~(\ref{eqn:thm3-I_1}),~(\ref{eqn:thm3-I_2}) and~(\ref{eqn:thm3-I_3}) together,
\begin{equation}
\label{eqn:thm3-I-recur}
\begin{aligned}
\E_t[I_1^t + I_2^t + I_3^t]  &\leq  -\frac{(1-\eta_1)}{2}(\epsilon  + C_g^2C_f^2)^{-1/2} \| \nabla R(\w_t)\|^2 +\frac{1}{2} \epsilon^{-1/2}C^2_g L_f^2\E_t[\|g_{i_t}(\w_t) - \u_{i_t}\|^2]  \\
&+\eta_1L\alpha d(1-\eta_2)^{-1}(1-\tau)^{-1} +\eta_1 (I_1^{t-1}+I_2^{t-1}+I_3^{t-1})\\
&+  C_g^2C_f^2\sum\limits_{i'=1}^d((\epsilon + \hat{v}^{i'}_{t})^{-1/2}-(\epsilon + \hat{v}^{i'}_{t+1})^{-1/2})
\end{aligned}
\end{equation}

Define the Lyapunov function
\begin{equation}
\label{eqn:thm3-Laypnov}
\begin{aligned}
\V_t =R(\w_t) - c_t \langle \nabla R(\w_{t-1}), D_t h_t \rangle \\
\end{aligned}
\end{equation}
where $c_t$ and $c$ will be defined later.

\begin{equation}
\begin{aligned}
& \E_t[\V_{t+1} - \V_{t}] \\
& = R(\w_{t+1}) - R(\w_t) - c_{t+1} \langle \nabla R(\w_{t}), D_{t+1}h_{t+1} \rangle +  c_{t} \langle \nabla R(\w_{t-1}), D_th_{t} \rangle \\
&\leq  - (c_{t+1}+\alpha) \langle \nabla R(\w_{t}), D_{t+1}h_{t+1} \rangle  + \frac{L}{2}\|\w_{t+1} - \w_t\|^2 +   c_{t} \langle \nabla R(\w_{t-1}), D_th_{t} \rangle
  \\
&= (c_{t+1} +\alpha)(I_1^t+I_2^t+I_3^t) + \frac{L}{2}\|\w_{t+1} - \w_t\|^2-c_{t}(I_1^{t-1} +I_2^{t-1}+I_3^{t-1})\\
&\overset{Eqn~ (\ref{eqn:thm3-I-recur})\  \text{and}\  Lemma~\ref{lem:update-Adam-B=D}}{\leq} -(\alpha +c_{t+1})\frac{(1-\eta_1)}{2}(\epsilon  + C_g^2C_f^2)^{-1/2}\| \nabla R(\w_t)\|^2  \\
&+(\alpha +c_{t+1})\eta_1 L\alpha d(1-\eta_2)^{-1}(1-\tau)^{-1} +\eta_1 (\alpha +c_{t+1})(I_1^{t-1}+I_2^{t-1}+I_3^{t-1})  \\
& + (\alpha +c_{t+1}) C_g^2C_f^2\sum\limits_{i'=1}^d((\epsilon + \hat{v}_{i'}^{t})^{-1/2} - (\epsilon + \hat{v}_{i'} ^{t+1})^{-1/2})  \\
& +\frac{L}{2}\alpha^2d(1-\eta_2)^{-1}(1-\tau)^{-1}-c_{t}(I_1^{t-1} +I_2^{t-1}+I_3^{t-1}) + \frac{\epsilon^{-1/2}C_g^2L_f^2(\alpha +c_{t+1})}{2} \|g_{i_t}(\w_t) -\u_{i_t}\|^2\\
\end{aligned}
\end{equation}
By setting $\alpha_{t+1} \leq \alpha_t = \alpha$, $c_t = \sum\limits_{p=t}^\infty(\prod\limits_{j=t}^p\eta_1)\alpha_j$, and $c =  (1+(1-\eta_1)^{-1})\epsilon^{-\frac{1}{2}}C_g^2L_f^2$, we have
\begin{equation}
\label{eqn:thm3-c}
\begin{aligned}
 c_t\leq (1-\eta_1)^{-1}\alpha_t,\  \frac{2(\alpha + c_{t+1})}{\alpha}\beta \epsilon^{-1/2}C_g^2L_f^2 \leq  c\beta, \ 
\eta_1(\alpha + c_{t+1}) = c_{t}
\end{aligned}
\end{equation}
As a result,  $\eta_1 (\alpha +c_{t+1})(I_1^{t-1}+I_2^{t-1}+I_3^{t-1})  - c_t(I_1^{t-1}+I_2^{t-1}+I_3^{t-1}) = 0$
\begin{equation}
\label{eqn:L_t}
\begin{aligned}
\E_t[\V_{t+1} - \V_{t}]
&\leq -(\alpha +c_{t+1})\frac{(1-\eta_1)}{2}(\epsilon  + C_g^2C_f^2)^{-1/2}\| \nabla R(\w_t)\|^2  \\
&+(\alpha +c_{t+1})\eta_1 L\alpha d(1-\eta_2)^{-1}(1-\tau)^{-1}+\frac{L}{2}\alpha^2d(1-\eta_2)^{-1}(1-\tau)^{-1} \\
& + (\alpha+c_{t+1}) C_g^2C_f^2\sum\limits_{i'=1}^d((\epsilon + \hat{v}_{i'}^{t})^{-1/2}-(\epsilon + \hat{v}_{i'} ^{t+1})^{-1/2})  \\
& + \frac{(\alpha +c_{t+1})}{2} \epsilon^{-1/2}C_g^2L_f^2  \|g_{i_t}(\w_t) -\u_{i_t}\|^2   \\
& \leq -\alpha \frac{(1-\eta_1)}{2}(\epsilon  + C_g^2C_f^2)^{-1/2}\| \nabla R(\w_t)\|^2 \\
& +2\eta_1 L\alpha^2T d(1-\eta_1)^{-1}(1-\eta_2)^{-1}(1-\tau)^{-1}+\frac{L}{2}T\alpha^2d(1-\eta_2)^{-1}(1-\tau)^{-1} \\
& + 2(1-\eta_1)^{-1}\alpha C_g^2C_f^2\sum\limits_{i'=1}^d((\epsilon + \hat{v}_{i'}^{t})^{-1/2}-(\epsilon + \hat{v}_{i'} ^{t+1})^{-1/2})  
 +\frac{c\alpha}{4}\sum\limits_{t=1}^T \E_t[  \|g_{i_t}(\w_t) - \u_{i_t}\|^2]
\end{aligned}
\end{equation}
where the last inequality is due to equation~(\ref{eqn:thm3-c}) such that we have $2(\alpha + c_{t+1}) \epsilon^{-1/2}C_g^2L_f^2 \leq  c\alpha$, and $\alpha + c_{t+1} \leq 2(1-\eta_1)^{-1}\alpha$.
\\
Then by rearranging terms, and taking summation from $1,\cdots, T$ of equation~(\ref{eqn:L_t}), we have
\begin{equation}
\label{eqn:thm-sum-T-1-b}
    \begin{aligned}
    & \sum\limits_{t=1}^T\alpha\frac{(1-\eta_1)}{2}(\epsilon  + C_g^2C_f^2)^{-1/2}\| \nabla R(\w_t)\|^2 \leq \sum\limits_{t=1}^T \E_t[\V_{t} - \V_{t+1}]\\
     &+2\eta_1 L\alpha^2T d(1-\eta_1)^{-1}(1-\eta_2)^{-1}(1-\tau)^{-1}+LT\alpha^2d(1-\eta_2)^{-1}(1-\tau)^{-1} \\
& + 2(1-\eta_1)^{-1}\alpha C_g^2C_f^2 \sum\limits_{t=1}^T\sum\limits_{i'=1}^d((\epsilon + \hat{v}_{i'}^{t})^{-1/2}-(\epsilon + \hat{v}_{i'} ^{t+1})^{-1/2})  +c\alpha\sum\limits_{t=1}^T \E_t[  \|g_{i_t}(\w_t) - \u_{i_t}\|^2] \\
& \leq \E[\V_1] -\E[\V_{T+1}]\\
     &+2\eta_1 L\alpha^2T d(1-\eta_1)^{-1}(1-\eta_2)^{-1}(1-\tau)^{-1}+LT\alpha^2d(1-\eta_2)^{-1}(1-\tau)^{-1} \\
& + 2(1-\eta_1)^{-1}\alpha C_g^2C_f^2 
\sum\limits_{i'=1}^d((\epsilon + \hat{v}^{i'}_{0})^{-1/2})  +c\alpha\sum\limits_{t=1}^T \E_t[  \|g_{i_t}(\w_t) - \u_{i_t}\|^2]
    \end{aligned}
\end{equation}

By combing with Lemma~\ref{lem:cumulative_var},
We finish the proof.
\end{proof}

\end{document}